\setlist[enumerate]{leftmargin=.5in}
\setlist[itemize]{leftmargin=.5in}
\renewcommand{\hat}{\widehat}
\renewcommand{\tilde}{\widetilde}
\Crefname{figure}{Figure}{Figures}
\theoremstyle{plain}
\newtheorem{theorem}{Theorem}[section]
\newtheorem{lemma}[theorem]{Lemma}
\newtheorem{proposition}[theorem]{Proposition}
\newtheorem{corollary}[theorem]{Corollary}
\theoremstyle{definition}
\newtheorem{definition}[theorem]{Definition}
\newtheorem{example}[theorem]{Example}
\newtheorem{assumption}[theorem]{Assumption}
\crefname{assumption}{assumption}{assumptions}
\theoremstyle{remark}
\newtheorem{remark}[theorem]{Remark}
\renewcommand{\hat}{\widehat}
\renewcommand{\tilde}{\widetilde}
\newcommand\Z{\mathbb{Z}}
\newcommand{\R}{\mathbb{R}}
\newcommand{\E}{\mathbb{E}}
\newcommand{\T}{^{\top}}
\newcommand{\norm}[1]{\left\|#1\right\|}
\DeclareMathOperator{\rank}{rank}
\newcommand{\Bt}[1]{W_#1\T J W_#1}
\newcommand\A{\mathcal{A}}
\newcommand\PA{P_A}
\newcommand\PP{P_\perp}
\newcommand\PN{P_0}
\newcommand\dt{\frac{d}{dt}}
\title{Implicit Regularization Makes Overparameterized Asymmetric Matrix Sensing Robust to Perturbations}
\author{Johan S. Wind\thanks{Department of Mathematics, University of Oslo, Norway (\href{mailto:johanswi@math.uio.no}{johanswi@math.uio.no})}}
\begin{document}

\maketitle

\begin{abstract}%
  Several key questions remain unanswered regarding overparameterized learning models. It is unclear how (stochastic) gradient descent finds solutions that generalize well, and in particular the role of small random initializations. Matrix sensing, which is the problem of reconstructing a low-rank matrix from a few linear measurements, has become a standard prototypical setting to study these phenomena. Previous works have shown that matrix sensing can be solved by factorized gradient descent, provided the random initialization is extremely small.

  In this paper, we find that factorized gradient descent is highly robust to certain perturbations. This lets us use a perturbation term to capture both the effects of imperfect measurements, discretization by gradient descent, and other noise, resulting in a general formulation which we call \textit{perturbed gradient flow}. We find that not only is this equivalent formulation easier to work with, but it leads to sharper sample and time complexities than previous work, handles moderately small initializations, and the results are naturally robust to perturbations such as noisy measurements or changing measurement matrices. Finally, we also analyze mini-batch stochastic gradient descent using the formulation, where we find improved sample complexity.
\end{abstract}


\section{Introduction}
While modern machine learning methods show excellent empirical performance, several fundamental questions regarding the understanding of these methods remain open. Especially troubling is the widespread use of \textit{overparameterization}, which lets these methods fit arbitrary data perfectly \citep{zhang2016understanding}. This causes most classical data-independent generalization bounds to be vacuous \citep{dziugaite2017computing}. There seems to be some \textit{implicit regularization} (also called algorithmic regularization or implicit bias), which leads to solutions which generalize well.

Directly studying these questions for practical neural networks is largely analytically intractable. However, the study of linear neural networks has led to a greater understanding of key mechanisms of implicit regularization. In linear neural networks, it is clear that optimization by (stochastic) gradient descent, coupled with a small initialization, leads to implicit regularization towards solutions with "small rank". The meaning of "rank" is only clearly understood in concrete settings, such as matrix sensing/completion \citep{arora_factorization}, tensor sensing/completion \citep{razin2021implicit} and compressed sensing\footnote{In compressed sensing the analogue to small rank is sparsity.} \citep{wind2023implicit}. However, empirical experiments indicate that the general intuition carries over to more realistic settings \citep{boix2023transformers}.

A particularly interesting setting is the setting of matrix sensing. In matrix sensing, we are tasked with reconstructing an $n_1\times n_2$ low-rank matrix $Y$ from $m \ll n_1n_2$ linear measurements $\A(Y)$. We can perfectly reconstruct the matrix, for example by nuclear norm minimization \citep{recht2010guaranteed}, if the measurement operator $\A$ satisfies RIP (Restricted Isometry Property, \Cref{def:rip}). There are several sophisticated algorithms which can reconstruct $Y$ in this setting. However, a particularly intriguing algorithm is the following:

Optimize the following objective by gradient descent:
\begin{align}\label{fact_gd}
  L(U,V) = \frac{1}{2}\norm{\A(Y)-\A(UV\T)}^2,
\end{align}
with the matrices $U$ and $V$ initialized elementwise to $\mathcal{N}(0,\epsilon^2)$ for some small $\epsilon > 0$.

This approach is called factorized gradient descent (from small random initialization), and works empirically both for matrix sensing \citep{rip_asymmetric} and matrix completion \citep{arora_factorization}. The problem shares several similarities with the effects of implicit regularization found in practical neural networks: First, factorized gradient decent works well even in the presence of severe overparameterization. Second, it successfully and quickly optimizes a non-convex objective function. Third, it implicitly exploits structure in the data (here the small rank of $Y$) without any explicit regularizer or constraint.

Because of these intriguing similarities to modern neural networks, there has been much work on studying factorized gradient descent. However, analyzing \eqref{fact_gd} directly is difficult, so most previous works consider simplified settings. A common simplification is to consider the case $U = V$, which is called the symmetric PSD (Positive Semi-Definite) matrix sensing. Specifically, replace \eqref{fact_gd} with $L(U) = \frac{1}{2}\norm{\A(Y)-\A(UU\T)}^2$ and optimize by gradient decent. Clearly, since $UU\T$ is symmetric PSD, this can only reconstruct symmetric PSD matrices $Y$. Another common simplification is to consider perfect information, that is, gradient descent on $L(U,V) = \frac{1}{2}\norm{Y-UV\T}_F^2$, which is called matrix factorization. Finally, we may simplify the analysis by considering gradient flow instead of gradient decent, by taking the limit of infinitesimal learning rate. Of course, it is possible to consider combinations of the preceding simplifications, like matrix factorization in the symmetric PSD setting, that is, gradient descent on $L(U,U) = \frac{1}{2}\norm{Y-UU\T}_F^2$. A classification of 15 selected previous works in these settings can be found in \Cref{table}.

This work tackles the difficult setting of asymmetric matrix sensing with imperfect measurements, without simplifications. Our main contributions are as follows:
\begin{itemize}
  \item We provide a full and standalone proof that factorized gradient descent successfully reconstructs low-rank matrices from a small random initialization, even in the severely overparameterized setting. Compared to the only previous work in this setting, \citet{rip_asymmetric}, we have several advantages:
    \begin{itemize}
      \item Our proof method makes our reconstruction results naturally robust to arbitrary perturbations during the optimization process. This includes, for example, noisy measurements or adversarially changing the measurement operator $\A$ in each iteration of gradient descent.
      \item Our result is sharper both in terms of number of samples required and number of gradient descent iterations required.
      \item They require impractically small initializations (exponentially small in the condition number $\kappa \coloneqq \norm{Y}/\sigma_{min}(Y)$), our results also hold for realistic, moderately small initializations.
    \end{itemize}
  \item We also consider mini-batch stochastic gradient descent, where we find near-optimal sample complexity. This is significant since it improves on the sample complexities of factorized gradient descent from random initializations which use the Restricted Isometry Property (\Cref{def:rip}), which consistently require an extra factor $\rank(Y)$ in the sample complexity \citep{rip_symmetric,rip_small_init,ding2022validation,glrl2,scaledGD,rip_asymmetric}.
  \item Our general proof technique, which we call reduction to \textit{perturbed gradient flow}, is novel. We believe it may prove useful in other settings as well.
\end{itemize}

\textbf{Notations.} We use $X_t$ to denote the value of the variable $X$ at time $t$. Specifically, $X_0$ is the value of $X$ at initialization. We sometimes drop $t$ dependencies in statements that hold at all times. We write $\sigma_k(X)$ for the $k$th largest singular value of the matrix $X$. Furthermore, $\norm{\cdot}$ denotes the Euclidean norm for vectors and the operator norm for matrices. The Frobenius norm is denoted $\norm{\cdot}_F$. Finally, $\A^*$ denotes the adjoint of the operator $\A$.

\section{Main results}\label{s:main_results}
We generalize factorized gradient descent \eqref{fact_gd} as follows: First, we allow the measurement operator $\A$ to change arbitrarily between iterations. Second, we add an arbitrary (possibly adversarial) perturbation $\tilde E_k$ in each iteration, and only require an upper bound on $\sup_{k\ge0}\norm{\tilde E_k}$.
\begin{definition}[Perturbed Gradient Descent]\label{pert_gd}
  We say that matrices $U$ and $V$ follow perturbed gradient descent with learning rate $\eta > 0$ and perturbations $\{\tilde E\}_{k\ge0}$ if
  \begin{equation}\label{eq:perturbed_L}
    \begin{split}
      \begin{pmatrix}U_{(k+1)\eta}\\V_{(k+1)\eta}\end{pmatrix} &= \begin{pmatrix}U_{k\eta}\\V_{k\eta}\end{pmatrix} - \eta \nabla L_k\left[\begin{pmatrix}U_{k\eta}\\V_{k\eta}\end{pmatrix}\right] + \eta\tilde E_k \begin{pmatrix}U_{k\eta}\\V_{k\eta}\end{pmatrix},\\
      L_k\left[\begin{pmatrix}U\\V\end{pmatrix}\right] &= \frac{1}{2}\norm{\A_k(Y)-\A_k(UV\T)}^2.
    \end{split}
  \end{equation}
\end{definition}
\begin{remark}
  We use the notation $U_{k\eta}$ for the $k$th iterate of $U$. This is because our proofs work with a continuous interpolation $U_t,\ t\in\R_{\ge0}$ of the iterates $U_{k\eta},\ k \in \Z_{\ge0}$. The notation $U_{k\eta}$ is then consistent with both settings. See \Cref{s:pert_flow} for more details.
\end{remark}

\begin{remark}
  It might be instructive to see the iteration written out in terms of $U$ and $V$:
  \begin{equation}\label{eq:perturbed_gd}
    \begin{split}
      U_{(k+1)\eta} &= U_{k\eta}+\eta(\A_k^*\A_k)(Y-U_{k\eta}V_{k\eta}\T)\,\,\ V_{k\eta} + \eta[\tilde E_{UU}]_k U_{k\eta} + \eta[\tilde E_{UV}]_k V_{k\eta},\\
      V_{(k+1)\eta} &= V_{k\eta}+\eta(\A_k^*\A_k)(Y-U_{k\eta}V_{k\eta}\T)\T U_{k\eta} + \eta[\tilde E_{VU}]_k U_{k\eta} + \eta[\tilde E_{VV}]_k V_{k\eta}.
    \end{split}
  \end{equation}
  Here we split $\tilde E_k = \begin{pmatrix}[\tilde E_{UU}]_k & [\tilde E_{UV}]_k \\ [\tilde E_{VU}]_k & [\tilde E_{VV}]_k\end{pmatrix}$. Note that we can bound $\norm{\tilde E_k}$ by bounding each of its components. Specifically, $\norm{\tilde E_k} \le \norm{[\tilde E_{UU}]_k}+\norm{[\tilde E_{UV}]_k}+\norm{[\tilde E_{VU}]_k}+\norm{[\tilde E_{VV}]_k}$.
\end{remark}
\begin{example}
  The perturbation terms $\{\tilde E\}_{k\ge0}$ can be used to model the effect of noise in the measurements. Following \citet{ding2022validation}, we may consider noisy measurements $\A(Y)+e$, where $e \in \R^m$ is additive noise. This leads to gradient descent on the modified loss function $L(U,V) = \frac{1}{2}\norm{\A(Y) + e - \A(UV\T)}^2$. The modified dynamics can be captured by a constant perturbation with $\norm{\tilde E_k} = \norm{\A^*(e)}$. \citet{ding2022validation} derives bounds on $\norm{\A^*(e)}$. Such bounds can be used directly to bound $\xi$ in \Cref{rip_theorem}, yielding results comparable to \citet{ding2022validation}, although their results are restricted to the symmetric PSD setting.
\end{example}

We will require bounded misalignment at initialization, which is defined as follows:
\begin{definition}[Misalignment at initialization]\label{def:pert_init}
  Let $Y = U_Y \Lambda_Y V_Y\T$ be a compact singular value decomposition of $Y$. Then the misalignment $\alpha \ge 1$ at the initialization $U_0,V_0$ is defined as
  \begin{align*}
    \alpha = \frac{\sqrt 2 \norm{\begin{pmatrix}U_0\T & V_0\T\end{pmatrix}}}{\sigma_r\left(U_Y\T U_0 + V_Y\T V_0\right)}.
  \end{align*}
\end{definition}
\begin{remark}\label{remark:init}
  To get initial misalignment $\alpha \le C$ and initialization scale $\norm{\begin{pmatrix}U_0\T & V_0\T\end{pmatrix}} \le \epsilon$, we may initialize as follows: Sample the entries of $U_0 \in \R^{n_1\times h}$ and $V_0 \in \R^{n_2\times h}$ independently from a normal distribution with the small standard deviation $\frac{\epsilon}{C\sqrt h} > 0$. Given a sufficiently large universal constant $C$, and that the number of hidden neurons satisfies $h \ge \max(n_1,n_2,2r)$, we have $\alpha \le C$ with high probability. We omit the proof, which applies standard tail bounds for gaussian matrices, for example, \citep[Corollary 7.3.3 and Exercise 7.3.4]{vershynin2018high}. A similar proof can also be found in \citep[Lemma B.7]{rip_asymmetric}. As an alternative initialization, we note that if $n_1 = n_2 = h$, then the scaled identity initialization, $U_0 = V_0 = \frac{\epsilon}{\sqrt 2} I$, has minimal initial misalignment $\alpha = 1$.
\end{remark}

\subsection{Robust matrix sensing with RIP measurements}\label{s:rip_result}
Our first result will require the measurement operators to satisfy the following property:
\begin{definition}[RIP: Restricted Isometry Property]\label{def:rip}
  The operator $\A \colon \R^{n_1\times n_2} \to \R^d$ satisfies RIP with rank $r$ and constant $\rho$, if for all matrices $X \in \R^{n_1\times n_2}$ with $\rank(X) \le r$, we have
  \begin{align}\label{eq:RIP}
    (1-\rho)\norm{X}_F^2 \le \norm{\A(X)}^2 \le (1+\rho)\norm{X}_F^2.
  \end{align}
\end{definition}
If the measurement operator $\A$ consists of $\frac{Cr(n_1+n_2)}{\rho^2}$ random gaussian measurements (for a sufficiently large universal constant $C$), it satisfies RIP with rank $r$ and constant $\rho$ with high probability \citep{candes2011tight}. Hence, the following \Cref{rip_theorem} immediately yields an algorithm for asymmetric matrix sensing with $O((n_1+n_2)r^2 \kappa^4)$ random gaussian measurements.

\begin{theorem}\label{rip_theorem}
  Let the target matrix $Y \in \R^{n_1\times n_2}$ have $\mathrm{rank}(Y) = r \ge 1$ and condition number $\kappa = \frac{\norm{Y}}{\sigma_r(Y)}$. Let $U \colon \R_{\ge0} \to \R^{n_1\times h}$ and $V \colon \R_{\ge0} \to \R^{n_2\times h}$ follow perturbed gradient descent (\Cref{pert_gd}) with learning rate $\eta$ and perturbations $\{\tilde E_k\}_{k\ge0}$. Assume the measurement operators $\{\A_k\}_{k\ge0}$ satisfy the Restricted Isometry Property with rank $r+1$ and constant $\rho \le c_1/(\sqrt r\kappa^2)$.

  Also, assume the misalignment at initialization (\Cref{def:pert_init}, see \Cref{remark:init}) satisfies $\alpha \le C_1$. Furthermore, assume the scale of initialization $\epsilon$ satisfies \[\epsilon \coloneqq \norm{\begin{pmatrix}U_0\T & V_0\T\end{pmatrix}} \le \frac{c_2\sqrt{\sigma_r(Y)}}{\min(n_1,n_2)^2/r^2+\kappa}.\]

  The learning rate $\eta$ is assumed to satisfy
  \[\eta \le \frac{c_3}{\kappa^3 \norm{Y}\log(\sigma_r(Y)/\epsilon^2)}.\]

  Finally, assume the perturbations $\{\tilde E_k\}_{k\ge0}$ satisfy \[\xi \coloneqq \sup_{k\ge 0}\norm{\tilde E_k} \le \frac{c_4}{\kappa^3 \log(\sigma_r(Y)/\epsilon^2)}\norm{Y}.\]

  Then after $K = \frac{C_2}{\eta \sigma_r(Y)}\log\left(\frac{\sigma_r(Y)}{\epsilon^2}\right)$ steps of gradient descent, we have
  \begin{align*}
    \norm{Y-U_{K\eta} V_{K\eta}\T} \lesssim \left(\frac{\epsilon^2}{\sigma_r(Y)}\right)^\frac{2}{3} \sigma_r(Y) + \kappa \xi.
  \end{align*}
  The constants $c_1,c_2,c_3,c_4,C_1,C_2 > 0$ are universal.
\end{theorem}

A few comments are in order.

\textbf{Reconstruction accuracy:} The term $\left(\frac{\epsilon^2}{\sigma_r(Y)}\right)^\frac{2}{3} \sigma_r(Y)$ can be made arbitrarily small by choosing a small initialization scale $\epsilon$. Hence, we may reconstruct $Y$ to arbitrary precision in the noiseless case ($\xi = 0$). The constant $\frac{2}{3}$ was chosen arbitrarily for simplicity of presentation; it can be changed by selecting different constants $c_1,\dots,C_2$ and initialization scale $\epsilon$. See \Cref{master_theorem} for more detailed dependencies.

\textbf{Overparameterization:} The number of parameters in $U$ and $V$ is $(n_1+n_2)h$, where $h$ is the number of hidden neurons. Note that the result is independent of $h$, which makes it hold even for arbitrary amounts of over-parameterization.

\textbf{Comparison with \citet{rip_asymmetric}:} The previous work most comparable to \Cref{rip_theorem} is \citet{rip_asymmetric}. They do not allow any perturbations $\tilde E$, not even noise in the measurements or changing measurement operators $\A_k$. They are also dependent on extremely small initializations, $\epsilon \le \sqrt{\norm{Y}}\left(\frac{1}{10\kappa}\right)^{68\kappa}$, which leads to the so-called \text{alignment phase}. This is impractical in implementation, as it would require extended precision for even moderate condition numbers $\kappa$, and it is entirely unnecessary empirically. They require RIP with rank $2r+1$ and constant $\rho \le c_1/(\kappa^3\sqrt r)$, leading to a sample complexity of $O((n_1+n_2)r^2\kappa^6)$. Our corresponding sample complexity is sharper: $O((n_1+n_2)r^2\kappa^4)$. Ignoring log factors, they require on the order of $\kappa^8$ steps of gradient descent, while we require only $\kappa^4$. It is worth noting that the extremely small initialization allows them to use the alignment phase to handle arbitrary misalignment at the initialization, which allows any number of hidden neurons $h \ge r$. We require a bounded misalignment at the initialization, which essentially requires $h \gtrsim \max(n_1,n_2)$. It is not clear whether \Cref{rip_theorem} holds for fewer hidden neurons, without resorting to extremely small initializations.

\textbf{Robustness:} In terms of robustness to perturbations, the closest work is \citet{ding2022validation}. Their work is in the simplified setting of symmetric PSD matrix sensing. They also assume that the perturbation is fixed throughout training. However, their final bound agrees well with ours. Both essentially find an additive error proportional to $\kappa$ times the operator norm of the perturbation.

\subsection{Robust matrix sensing with mini-batch stochastic gradient descent}\label{s:sgd_result}
We can achieve better sample complexity if instead of assuming the measurements satisfy RIP, we sample in each iteration a set of random linear measurements.
\begin{theorem}\label{sgd_theorem}
  Consider the setting of \Cref{rip_theorem}, but instead of the Restricted Isometry Property, assume that the measurement operators $\A_k \colon \R^{n_1\times n_2} \to \R^m$ are independent and of the form $[\A_k(X)]_i = \frac{1}{\sqrt m} \langle X, [A_k]_i\rangle$, for $i = 1,\dots,m$, where $[A_k]_i \in \R^{n_1\times n_2}$ has i.i.d. entries from $\mathcal{N}(0,1)$. Assume the mini-batch size $m$ satisfies
  \[m \ge C (\log(n_1+n_2)+\log(K)+\log(1/\delta)) r(n_1+n_2) \kappa^4,\]
  where $\delta \in (0,1)$ is the probability of failure and $C$ is a universal constant. Then, with probability at least $1-\delta$, we have
  \begin{align*}
    \norm{Y-U_{K\eta} V_{K\eta}\T} \lesssim \left(\frac{\epsilon^2}{\sigma_r(Y)}\right)^\frac{2}{3} \sigma_r(Y) + \kappa \xi.
  \end{align*}
\end{theorem}

\textbf{Sample complexity:} Note that we could get a result similar to \Cref{sgd_theorem} by taking enough samples in each mini-batch for each $\A_k$ to satisfy RIP. Then \Cref{rip_theorem} would imply successful reconstruction of $Y$. However, even for well-conditioned matrices ($\kappa \lesssim 1$), that would require on the order of $r^2(n_1+n_2)$ samples per mini-batch, which is suboptimal. However, \Cref{sgd_theorem} only requires $\tilde O(r(n_1+n_2))$ samples in total, which is optimal up to log factors. We believe this is the first such result with near-optimal sample complexity without resorting to specialized initializations.

\section{Related work}\label{s:related}

\begin{table}
  \begin{center}
    \begin{tabular}{|c| c c |}
      \hline
    & Symmetric & Asymmetric \\
    \hline
      Gradient flow & \citet{du_asymmetric}$^\dagger$ & \makecell{\citet{flow2};\\\citet{factorization_flow}} \\
      \hline
      Gradient descent & \makecell{\citet{jain2017global};\\\citet{chou2024gradient}} & \makecell{\citet{bartlett2018gradient};\\\citet{balanced};\\\citet{nguegnang2021convergence};\\\citet{du_asymmetric};\\\citet{sharper_asymmetric}} \\
      \hline
      RIP measurements & \makecell{\citet{rip_symmetric};\\\citet{rip_small_init};\\\citet{ding2022validation};\\\citet{glrl2};\\\citet{scaledGD}} & \makecell{\citet{rip_asymmetric};\\(This work)} \\
      \hline
    \end{tabular}
  \end{center}
  \caption{Selected previous works on matrix factorization and matrix sensing by factorized gradient descent from a small initialization. $\ ^\dagger$\citet{du_asymmetric} mainly focuses on asymmetric matrix factorization, but include a derivation of the closed form solution for the dynamics of symmetric PSD factorization under gradient flow.}\label{table}
\end{table}
There is a wealth of research into low-rank matrix factorization via nonconvex optimization. For an overview, see \citet{chi2019nonconvex}. Most works either perform a global landscape analysis or directly analyze the dynamics of the optimization algorithm. Works on landscape analysis prove benign properties of the loss landscape; for example \citet{bhojanapalli2016global,zhu2021global}. Typically, they show that all critical points of the optimization landscape are either global minima or strict saddle points, which allows global optimization by standard local search algorithms. Our work analyzes the gradient descent dynamics directly and exploits properties that only hold close to the gradient descent path, leading to a more fine-grained analysis.

Most algorithms based on gradient descent for low-rank matrix sensing and matrix completion, use specialized initializations (usually "spectral initialization"). This simplifies the analysis, since the initialization is already close to the global minimum. This effectively allows them to only provide proof of local convergence. We highlight some examples: \citet{local_rip_gd,local_rip_gd_regB,local_rip_gd_noregB,sgd_completion}. In the rest of the section, we focus on dynamics-based works starting from small initializations.

It is natural to classify works in two dimensions according to which simplifying assumptions they make. First, some works assume that the matrix to be reconstructed is symmetric PSD (Positive Semi-Definite). Second, they might study so-called matrix factorization, where perfect measurements $\A = I$ are assumed. There has also been work on matrix factorization optimized by gradient flow instead of gradient descent. Selected works in these settings are classified in \Cref{table}.

\citet{rip_symmetric} considers the case of symmetric PSD matrix sensing with RIP measurements. They start from a small orthogonal initialization, and prove convergence by inductively proving several bounds. The subsequent work of \citet{rip_small_init} sharpens their result. \citet{rip_small_init} also introduces the idea of using a very small random initialization (exponentially small in the condition number $\kappa$), which allows their result to apply to cases with fewer hidden neurons (our $h$) compared to \citet{rip_symmetric}. Building on this work, \citet{ding2022validation} shows robustness to random measurement noise.

For the case of asymmetric matrix factorization, the main added difficulty over the symmetric case is handling the "imbalance" $\norm{U\T U-V\T V}$. In the case of gradient flow, \citet[Theorem 2.2]{balanced} shows that the imbalance is constant, and hence stays small for small initializations. \citet{du_asymmetric} gives an elegant proof that asymmetric matrix factorization is solvable by vanilla gradient descent. \citet{sharper_asymmetric} gives a sharper result in the same setting.

The recent work of \citet{rip_asymmetric} considers the problem of asymmetric matrix sensing with RIP measurements, a setting similar to ours. They follow \citet{rip_small_init}, and assume an extremely small initialization. Our work complements their results by allowing moderately small initializations, perturbations, and also considering stochastic gradient descent. See the more detailed comparison in \Cref{s:rip_result}.

The main idea in our proofs is the reduction to perturbed gradient flow (see \Cref{s:pert_flow}). To the best of our knowledge, the idea of reducing gradient descent to a perturbed gradient flow is entirely novel. However, there are similarities to backward error analysis, which is also based on the idea of constructing a continuous differential equation which passes through the points given by a discrete iteration. While it is hard to find direct connections to our work, backward error analysis has previously been used in deep learning research. For example, for studying the implicit regularization imposed by gradient descent with large learning rates \citep{barrett2021implicit}.

\section{Perturbed gradient flow}\label{s:pert_flow}
Both of our main results, \Cref{rip_theorem,sgd_theorem}, are applications of \Cref{master_theorem}, which we state in this section. The idea is as follows: We may formulate the dynamics of perturbed gradient descent (\Cref{pert_gd}) as follows:
\begin{align}\label{eq:pert_desc2}
  \begin{pmatrix}U_{(k+1)\eta}\\V_{(k+1)\eta}\end{pmatrix} &= (I+\eta\tilde R_{k\eta})\begin{pmatrix}U_{k\eta}\\V_{k\eta}\end{pmatrix}.
\end{align}
In the simplified case of matrix factorization (perfect information $\A = I$ and no noise $\tilde E = 0$), we have $\tilde R_t = R_t$ where $R_t \coloneqq \begin{pmatrix}0 & Y-U_tV_t\T \\ Y\T-V_tU_t\T & 0\end{pmatrix}$. The problem can be further simplified by considering gradient flow ($\eta \to 0^+$) instead of gradient descent ($\eta > 0$). The key insight is that both of these simplifications can be achieved at the cost of a perturbation term $E$. Specifically, we can choose $E$ such that \eqref{eq:pert_desc2} is perfectly interpolated by
\begin{align}\label{eq:pert_flow}
  \dt W_t = (R_t+E_t) W_t,
\end{align}
where $W_t = \begin{pmatrix}U_t\\V_t\end{pmatrix}$. Hence, bounds on $W_t$ from \eqref{eq:pert_flow} directly translate into bounds on the perturbed gradient descent iterates. Moreover, in the settings of \Cref{rip_theorem,sgd_theorem}, we have tight bounds on $E$ of the following form:
\begin{assumption}\label{asmp:pert}
  Assume $E$ is a piecewise smooth function of $t$. Let the constants $\eta, \gamma, \beta, \mu, \nu \ge 0$ be given. Moreover, assume the following holds for all $0 \le t \le T$:

  Let $\tau = t$ if $\eta = 0$ or $\tau = \lfloor t / \eta \rfloor \eta$ if $\eta > 0$, where $\lfloor \cdot \rfloor$ is the floor function. If $\norm{W_\tau} \le \frac{3}{2}\sqrt{\norm{Y}}$ and $\gamma \sigma_{r+1}^2(W_\tau) \le \norm{Y}$, then
  \begin{align*}
    \norm{E_t} &\le \beta \left(\norm{R_\tau}+\gamma \sigma_{r+1}^2(W_\tau)\right) + \mu\norm{Y}\\
    \norm{J E_t + E_t\T J} &\le \nu \norm{Y},
  \end{align*}
  where $J = \begin{pmatrix}I_{n_1} & 0\\0 & -I_{n_2}\end{pmatrix}$.
\end{assumption}
\begin{remark}
  Allowing for $\eta > 0$ in \Cref{asmp:pert} makes it easier to verify it for discrete iterations based on perturbed gradient descent. Clearly, $\norm{J E_t + E_t\T J} \le 2\norm{E_t}$. However, bounding through $\norm{E_t}$ leads to slightly worse sample complexities.
\end{remark}

We are ready to state our general result for perturbed gradient flow.
\begin{theorem}\label{master_theorem}
  Let the target matrix $Y \in \R^{n_1\times n_2}$ have $\mathrm{rank}(Y) = r \ge 1$ and condition number $\kappa = \frac{\norm{Y}}{\sigma_r(Y)}$. Let $W \colon \R_{\ge0} \to \R^{(n_1+n_2)\times h}$ be a continuous and piecewise smooth function which follows the perturbed gradient flow dynamics \eqref{eq:pert_flow} from time $0$ to time $T = \frac{5}{\sigma_r(Y)}\log\left(\frac{\sigma_r(Y)}{\norm{W_0}^2}\right)$. Here $\norm{W_0}$ is the norm of the initialization.

  Select a constant $0 < \theta \le \frac{1}{2}$, which will appear as an exponent in the final bound. Let $\alpha \ge 1$ be the misalignment at initialization (\Cref{def:pert_init}, see \Cref{remark:init}).

  Next, assume the perturbations $E$ in \eqref{eq:pert_flow} satisfy \Cref{asmp:pert} with 
  \[\gamma \ge 1,\ \beta \le \frac{c_2\theta^2}{\alpha\kappa^2}, \ \mu \le \frac{c_3}{\alpha\kappa^2},\ \nu \le \frac{c_4\theta^2}{\alpha\kappa^2 T \norm{Y}}\text{ and }\eta \le \frac{c_5}{\sigma_r(Y)}.\]

  Furthermore, assume the norm of the initialization satisfies \[\norm{W_0} \le \min\left(\frac{\sqrt{\kappa}}{\sqrt\alpha\gamma}, \frac{c_1\theta}{\alpha\kappa}\right)\sqrt{\norm{Y}}.\]

  Then the final reconstruction error $R_T$ satisfies 
  \begin{align}\label{master_bound}
    \norm{R_T} \le C_1\left(\beta\kappa\gamma+\sqrt{\kappa}\right)\left(\frac{\norm{W_0}^2}{\sigma_r(Y)}\right)^{1-\theta}\sigma_r(Y) + C_2\kappa \mu\norm{Y}.
  \end{align}
  The constants $c_1,c_2,c_3,c_4,c_5,C_1,C_2 > 0$ are universal.
\end{theorem}

\textbf{Simplifying the bound:} The bound $\norm{Y-U_{K\eta} V_{K\eta}\T} \lesssim \left(\frac{\epsilon^2}{\sigma_r(Y)}\right)^\frac{2}{3} \sigma_r(Y) + \kappa \xi$ presented in \Cref{rip_theorem,sgd_theorem} is a simplified version of \eqref{master_bound}. To see that the left-hand sides are the same, note that $K\eta = T$ and $\norm{R_T} = \norm{Y-U_TV_T\T}$. For the right-hand sides, first note that $\epsilon \coloneqq \norm{W_0}$. Select $\theta = 1/12$ and $\epsilon \le \frac{c\sqrt{\sigma_r(Y)}}{\gamma^2+\kappa}$. Hence, $\left(\beta\kappa\gamma+\sqrt{\kappa}\right)\left(\frac{\norm{W_0}^2}{\sigma_r(Y)}\right)^{1-\theta} \lesssim \left(\frac{\epsilon^2}{\sigma_r(Y)}\right)^\frac{2}{3} \sigma_r(Y)$. Finally, in the relevant settings, we have $\xi \lesssim \mu\norm{Y}$. See \Cref{s:rip_proof} for details. The exponent $\frac{2}{3}$ can be made arbitrarily close to 1 by making $\theta$ and $\epsilon$ small enough.

\section{Proofs of \texorpdfstring{\Cref{rip_theorem,sgd_theorem}}{}}\label{s:reduction}
We prove \Cref{rip_theorem,sgd_theorem} by reducing them to instances of \Cref{master_theorem}, which is proved in \Cref{s:master_proof_overview}. The detailed proofs of \Cref{rip_theorem,sgd_theorem} are given in \Cref{s:rip_proof,s:sgd_proof}, but we give an overview of the key ideas in this section.

Consider iteration $k$ of perturbed gradient descent (\Cref{pert_gd}), which may be formulated as
\begin{align}\label{eq:pert_gd_reduction}
  W_{(k+1)\eta} = (I+\eta \tilde R_{k\eta}) W_{k\eta},
\end{align}
where $W_t = \begin{pmatrix}U_t\\V_t\end{pmatrix}$, $\tilde R_{k\eta} = \begin{pmatrix} 0 & (\A_k^*\A_k)(\bar R)\\(\A_k^*\A_k)(\bar R)\T & 0\end{pmatrix} + \tilde E_k$ and $\bar R = Y-UV\T$.

Our goal is to find a perturbation $E$ such that the solution to $\dt W_t = (R_t+E_t)W_t$ interpolates \eqref{eq:pert_gd_reduction}, while satisfying \Cref{asmp:pert}. Here $R_t = \begin{pmatrix}0 & \bar R\\\bar R\T & 0\end{pmatrix}$. We show that the following choice of $E$ has the desired properties:

\[E_t = \frac{1}{\eta}\log\left(I+\eta\tilde R_{k\eta}\right) - R_t,\quad\text{ for }t \in [k\eta,(k+1)\eta).\]
The matrix log is taken as $\log(X) \coloneqq \sum_{k=1}^\infty \frac{(-1)^{k+1}}{k}(X-I)^k$, defined for $\norm{X-I} < 1$.

To bound $E_t$, split $\tilde R_{k\eta} = R_{k\eta} + \hat E_k^\A + \tilde E_k$, where $\hat E_k^\A \coloneqq \begin{pmatrix}0 & E^\A_k\\(E^\A_k)\T & 0\end{pmatrix}$ with $E^\A \coloneqq (\A^*\A)(\bar R)-\bar R$. Intuitively, $\norm{E_k^\A}$ represents the error stemming from imperfect measurements. We then have the following result (\Cref{prop:discrete} from \Cref{s:discrete}):

If $\eta \le 1/(12\norm{Y})$, $\norm{W_{k\eta}} \le \frac{3}{2}\sqrt{\norm{Y}}$ and $\norm{E^\A_k}+\norm{\tilde E_k} \le \norm{Y}$, we have the following bounds on $E_t$ 
\begin{align*}
  \norm{E_t} &\lesssim \eta\norm{Y} \norm{R_{k\eta}} + \norm{E^\A_k} + \norm{\tilde E_k},\\
  \norm{J E_t + E_t\T J} &\lesssim \eta\norm{Y}^2 + \norm{\tilde E_k}.
\end{align*}

This bound essentially verifies \Cref{asmp:pert}, if we can bound $\norm{E^\A_k}$. We bound this term differently for \Cref{rip_theorem,sgd_theorem}.

For \Cref{rip_theorem}, when the measurement operator satisfies RIP (\Cref{def:rip}) with rank $r+1$ and constant $\rho$, we use the following bound (\Cref{EA_bound} from \Cref{s:rip_proof_appendix}):
\begin{align}\label{top_EA_rip}
  \norm{E^\A} \lesssim \sqrt{r}\rho\left(\norm{R} + \frac{\min(n_1,n_2)}{r}\sigma_{r+1}^2(W)\right).
\end{align}
The factor $\sqrt r$ leads to a suboptimal factor $r$ in the sample complexity $m = O(r^2(n_1+n_2)\kappa^4)$. We believe this suboptimal factor $r$ is a fundamental limitation of RIP in our setting, since it is also present in all previous works, even in the simpler setting of symmetric PSD sensing. By directly analyzing stochastic gradient descent, we bypass the problem, as shown below.

For \Cref{sgd_theorem}, we assume measurement operators of the form $[\A(X)]_i = \frac{1}{\sqrt m} \langle X, A_i\rangle$, for $m$ measurements $i = 1,\dots,m$, where each $A_i \in \R^{n_1\times n_2}$ has i.i.d. entries from $\mathcal{N}(0,1)$. In this case, we have the bound (\Cref{EA_bound_sgd} from \Cref{s:sgd_proof_appendix}, informally): with high probability
\begin{align}\label{top_EA_sgd}
  \norm{E^\A} \lesssim \sqrt{\frac{r(n_1+n_2)}{m}}\left(\norm{R}+\sqrt{\frac{\min(n_1,n_2)}{r}}\sigma_{r+1}^2(W)\right).
\end{align}
This bound leads to the sample complexity $m = \tilde O(r(n_1+n_2)\kappa^8)$, which for well-conditioned matrices ($\kappa \lesssim 1$) is optimal up to log factors.

\section{Proof overview for \texorpdfstring{\Cref{master_theorem}}{}}\label{s:master_proof_overview}
The proof is divided into two phases, called the warm-up phase, and the local convergence phase. Each phase has a list of bounds which are maintained through real induction \citep{clark2019instructor}.
\subsection{Setup}\label{s:top_setup}
Recall the $(n_1+n_2)\times h$ matrix $W_t = \begin{pmatrix}U_t\\V_t\end{pmatrix}$. Also, $\hat Y = \begin{pmatrix} 0 & Y \\ Y\T & 0\end{pmatrix}$, $J = \begin{pmatrix} I_{n_1} & 0 \\ 0 & -I_{n_2} \end{pmatrix}$ and $R = \hat Y - \frac{1}{2}\left(WW\T-JWW\T J\right)$. We will bound several functions of $W$. For this purpose, we use projections. The matrix $\hat Y$ has $r$ positive and $r$ negative eigenvalues. Let $P$ be a unitary matrix which diagonalizes $\hat Y$ with decreasing diagonal. Let $\PA$,$P_0$,$P_-$ and $\PP$ consist of rows of $P$ corresponding to positive, zero, negative and non-positive eigenvalues, respectively. The structure of $\hat Y$ lets us pick $P_- \coloneqq \PA J$. We view $\PA W$,$P_0 W$,$\PA J W$ and $\PP W$ as parts of $W$ corresponding to the various parts of the target matrix $Y$. Intuitively, we would like $\PA W$ to fit $Y$, while the rest of $W$, that is, $\PP W$ should be as small as possible.

An alternative way of defining $\PA$ is as follows: Let $Y = U_Y \Lambda_Y V_Y\T$ be a compact singular value decomposition of $Y$. Then $\PA$ is the $r\times (n_1+n_2)$ matrix such that \[\PA \begin{pmatrix}U\\V\end{pmatrix} = \frac{1}{\sqrt 2}\left(U_Y\T U + V_Y\T V\right).\]
This lets us write the initial misalignment (\Cref{def:pert_init}) as $\alpha = \norm{W_0}/\sigma_r(\PA W_0)$.

Following \citet{rip_small_init} and \citet{rip_asymmetric}, we also split $W$ into a signal part or rank $r$ and a nuisance part
\[W_t = W_tQ_t + W_t(I-Q_t).\]
Here the projection $Q_t \coloneqq W_t\T \PA\T \left(\PA W_t W_t\T \PA\T\right)^{-1} \PA W_t$ extracts the signal part of $W_t$.

Finally, in the statement of \Cref{master_theorem} there was a small constant $0 < \theta \le 1/2$ which controlled the final exponent. In the proofs, it is more convenient to work with the small constant \[\delta = \frac{\theta}{32\sqrt\alpha\kappa} \le \frac{1}{64\sqrt\alpha\kappa}.\]

\subsection{Warm-up phase}
The main result of the warm-up phase is the following. The detailed proof is given in \Cref{s:warmup}.
\begin{theorem}\label{top_warmup}
  For all times $t \in [0,T_2]$, we have
  \begin{multicols}{2}
    \begin{enumerate}
      \item $\norm{W_t} \le \frac{3}{2}\sqrt{\norm{Y}}$\label{top_W}
      \item $\norm{\Bt t} \le \left(1+\frac{5t}{T_2}\right)\delta^2\norm{Y}$\label{top_imb}
      \item $\norm{\PA JW_t} \le \frac{\delta}{3\sqrt\alpha}\sqrt{\norm{Y}}$\label{top_PJW}
      \item $\norm{\PN W_t} \le \delta\sqrt{8\norm{Y}}$\label{top_NW}
      \item $\PP \left(\hat Y + \frac{1}{2}JW_tW_t\T J\right) \PP\T \preccurlyeq 2\delta\norm{Y}I$\label{top_PXP}
      \item $\norm{\PP W_t (\PA W_t)^\dagger} \le \alpha$\label{top_F}
      \item $\norm{W_t(I-Q_t)} \le \norm{W_0} e^{3\sqrt\alpha \delta \norm{Y}t}$\label{top_tW}
      \item $\sigma_r(\PA W_t) \ge \min\left(\sqrt{\frac{\norm{Y}}{\kappa}},\ \frac{\norm{W_0}}{\alpha} e^{\frac{2 \norm{Y}}{5\kappa}t}\right)$\label{top_A}.
    \end{enumerate}
  \end{multicols}
\end{theorem}
Here $^\dagger$ denotes the pseudoinverse.

The first bound $\norm{W_t} \le \frac{3}{2}\sqrt{\norm{Y}}$ shows that $\norm{W_t}$ does not become much larger than what is needed to fit $Y$. \Cref{top_imb} implies $\norm{\Bt t} \le 6\delta^2\norm{Y}$. Handling this "imbalance" $\norm{\Bt t} = \norm{U_t\T U_t-V_t\T V_t}$ is a key point in asymmetric matrix factorization and sensing \citep{balanced}. The nuisance term $\norm{W_t(I-Q_t)}$ (\Cref{top_tW}) grows exponentially from its small initialization. However, the rate of growth is very slow, so it is still very small at time $T_2$, when reconstruction is finished.

\Cref{top_F}, $\norm{\PP W_t (\PA W_t)^\dagger} \le \alpha$, bounds the "misalignment" of the column space of the signal part $W_tQ_t$. More specifically, $\norm{\PP W_t (\PA W_t)^\dagger} = \tan(\theta)$, where $\theta$ is the maximum angle from a vector in the column space of $W_tQ_t$ to the row space of $\PA$.

The bound $\norm{W_t} \le \frac{3}{2}\sqrt{\norm{Y}}$ from \Cref{top_W} is rather crude. To get a good sample complexity\footnote{We suspect that our results are sharper than \citet{rip_asymmetric} primarily because of these bounds. In their notation, they work with $\norm{L^T_{X,\perp}P_{Z_tQ_t}} \le \frac{c}{\kappa}$, which is essentially $\norm{\PP W_t (\PA W_t)^\dagger} \lesssim \delta$ in our notation. This requires $\kappa^2$ times more samples than our weaker bound $\norm{\PP W_t (\PA W_t)^\dagger} \le \alpha$. However, it lets them use bounds such as $\norm{\PP W_t Q_t} \le \norm{\PP W_t (\PA W_t)^\dagger}\norm{W_t} \lesssim \delta \sqrt{\norm{Y}}$ in place of the bounds in \Cref{top_PJW,top_NW,top_PXP}.}, we need sharper bounds for each of the three parts of $W_t$, that is, $\PA W_t$, $\PN W_t$ and $\PA JW_t$. The bounds $\norm{\PN W_t} \lesssim \delta\sqrt{\norm{Y}}$ and $\norm{\PA J W_t} \lesssim \delta\sqrt{\norm{Y}}$ in \Cref{top_PJW,top_NW} are straightforward. However, there is no such bound for $\PA W_t$, since this part grows big. Therefore, we instead show \[\PA W_t W_t\T \PA\T \preccurlyeq 2\PA \hat Y\PA\T + 4\delta \norm{Y} I,\] which is implied by \Cref{top_PXP}. The slightly stronger form in \Cref{top_PXP} is easier to work with in the proofs.

\Cref{top_A} shows that the size of $W$ grows exponentially in the warmup phase. The time $T_1$ is chosen such that $\sigma_r(\PA W_{T_1}) \ge \sqrt{\norm{Y}/\kappa}$ by \Cref{top_A}. This shows that the aligned part of $W$ is large enough to start the local convergence phase (\Cref{s:local_top}), where the reconstruction error quickly decreases.

\textbf{Proof sketch for \Cref{top_warmup}:} The general strategy is to bound the growth of each of the 8 items. For example, consider \Cref{top_NW}, $\norm{\PN W_t} \le \delta\sqrt{8\norm{Y}}$. First, we differentiate $\dt \left(\PN W_t\right) = \PN (R_t+E_t)W_t$. We then use that to give a bound of the following form (\Cref{prop:NW_bound} from \Cref{s:warmup}, informal):
\begin{align}\label{eq:top_NW}
  \text{If }\norm{\PN W_t} = \delta\sqrt{8\norm{Y}}\text{, then }\dt \norm{\PN W_t} < 0.
\end{align}

Note that $\dt \norm{\PN W_t}$ might technically not exist everywhere. In the detailed proofs in \Cref{s:warmup}, we formally deal with this problem using technical tools developed in \Cref{s:tech}.

Using 8 bounds of the form \eqref{eq:top_NW}, one for each item, we prove \Cref{top_warmup} with real induction (\Cref{real_ind} from \Cref{s:warmup}).

\subsection{Local convergence phase}\label{s:local_top}
For the local convergence phase, the main result is the following.
\begin{theorem}\label{top_local}
  For all times $t \in [T_1,T_2]$, we have
  \begin{multicols}{2}
    \begin{enumerate}
      \item $\norm{R_t} \le M^R_t$\label{top_R_bound}
      \item $\norm{\PN W_tQ_t} \le \frac{2}{5}M^R_t / \sqrt{\norm{Y}}$
    \end{enumerate}
  \end{multicols}
  Where
  \begin{align*}
    M^R_t &= \max\left(3\norm{Y}\exp\left(-\frac{2\norm{Y}}{5\kappa}(t-T_1)\right), M^R_\infty\right),\\
    M^R_\infty &= 64\left(\beta\gamma\kappa+\sqrt{\kappa}\right)\norm{W_0}^2\exp(6\sqrt\alpha \delta \norm{Y}T_2) + 10^3\mu\kappa \norm{Y}.
  \end{align*}
\end{theorem}

The detailed proof can be found in \Cref{s:local}. It is proved using a similar strategy to the warm-up phase, \Cref{top_warmup}. 

\Cref{top_R_bound}, $\norm{R_t} \le M^R_t$, shows that the reconstruction error $\norm{R_t}$ decreases exponentially until $\norm{R_t} \le M^R_\infty$. The time $T_2$ is chosen such that $\norm{R_t} \le M^R_\infty$, which implies the bound in \Cref{master_theorem}. 

It is unfortunately not clear how to directly prove $\norm{R_t} \le M^R_t$ by real induction. The problem is that there exists $W_t$ with $\norm{\PN W_t Q_t}$ relatively large, but with small $\norm{R_t}$. This makes direct induction on $\norm{R_t} \le M^R_t$ difficult. To this end, we add the bound $\norm{\PN W_tQ_t} \le \frac{2}{5}M^R_t / \sqrt{\norm{Y}}$, which makes sure the offending part of $W_t$ stays small.

\section{Acknowledgments}
The author would like to thank Ali Ramezani-Kebrya, Dominik Stöger, Vegard Antun, and Åsmund Hausken Sande for their helpful feedback and discussions about the paper.

\bibliographystyle{plainnat}
\bibliography{robust_sensing_updated}

\newpage
\appendix
\crefalias{section}{appendix} 

\section{Additional notations}
We let $\lambda_k(X)$ denote the $k$th largest eigenvalue of the symmetric matrix $X$. We call $u,v$ a top singular pair of a matrix $X$ if $\norm{u} = \norm{v} = 1$, $Xv = \norm{X}u$ and $X\T u = \norm{X}v$. A bottom singular pair is similarly defined with the smallest singular value $\sigma_{\rank(X)}(X)$ in place of $\norm{X}$. We differentiate some variables which are only piecewise smooth; the derivative then denotes the right derivative. We sometimes denote the time derivative of a variable $X$ by $\dot X \coloneqq \frac{d}{dt}X$. The notation $X^\dagger$ denotes the pseudoinverse of $X$. We use $\lfloor \cdot \rfloor$ and $\lceil \cdot \rceil$ denote the floor and ceiling functions. We denote the $k\times k$ identity matrix by $I_k$ or just $I$ if the shape is clear from context. Moreover, we write $0$ for the zero matrix. The following matrix logarithm is chosen for matrices $X$ satisfying $\norm{X-I} < 1$: $\log(X) = \sum_{k=1}^\infty \frac{(-1)^{k+1}}{k}(X-I)^k$.

\section{Proof of \texorpdfstring{\Cref{master_theorem}}{}}\label{s:master_proof}
We repeat the main assumptions of \Cref{master_theorem} in a way which is easy to reference later in the proof.

Let the target matrix $Y \in \R^{n_1\times n_2}$ have $\mathrm{rank}(Y) = r \ge 1$ and condition number $\kappa = \frac{\norm{Y}}{\sigma_r(Y)}$. As explained in \citet{du_asymmetric}\footnote{In addition to the arguments in \citet{du_asymmetric}, we also have an incomplete measurement operator $\A$. However, the RIP property is rotationally invariant, so this is not a problem.}, we can assume without loss of generality that $Y$ is a diagonal matrix with non-negative and decreasing diagonal. Since $\rank(Y) = r$, the first $r$ elements on the diagonal are at least $Y_{rr} = \sigma_r(Y) > 0$, and the rest are zero. In particular, this means $Y_{rr} = \sigma_r(Y)$.

Let $W \colon \R_{\ge0} \to \R^{(n_1+n_2)\times h}$ be continuous, piecewise smooth, and satisfy the differential equation
\begin{align}\label{eq:pert_flow2}
  \dt W_t = (R_t+E_t) W_t,
\end{align}
where $R_t = \hat Y - \frac{1}{2}\left(W_tW_t\T-JW_tW_t\T J\right)$ with $\hat Y = \begin{pmatrix}0 & Y\\Y\T & 0\end{pmatrix}$ and $J = \begin{pmatrix}I_{n_1} & 0\\0 & -I_{n_2}\end{pmatrix}$.

In \Cref{master_theorem}, there was an exponent $\theta$. It is more convenient in the proofs to instead use the constant $\delta = \frac{\theta}{32\sqrt\alpha\kappa}$, so
\begin{align}
  0 < \delta \le \frac{1}{64\sqrt\alpha\kappa}\label{delta_bound}.
\end{align}
Furthermore, we assume the following bounds:

\pagebreak
\begin{multicols}{2}
  \begin{align}
    \alpha,\gamma &\ge 1\label{alpha_gamma_bound}\\
    \norm{W_0} &\le \min\left(\frac{\sqrt{\kappa}}{\sqrt\alpha\gamma}, \frac{\delta}{3\sqrt\alpha}\right)\sqrt{\norm{Y}}\label{eps_bound}\\
    T_1 &= \frac{5}{4Y_{rr}}\log\left(\alpha^2\frac{Y_{rr}}{\norm{W_0}^2}\right)\label{T1_bound}\\
    T_2 &= \frac{5}{Y_{rr}}\log\left(\frac{Y_{rr}}{\norm{W_0}^2}\right)\label{T2_bound}
  \end{align}

  \begin{align}
    \beta &\le \frac{\delta^2}{13}\label{beta_bound}\\
    \mu &\le \frac{\delta^2}{4}\nonumber\\
    \nu &\le \frac{2\delta^2}{T_2\norm{Y}}\nonumber\\
    \eta &\le \frac{1}{Y_{rr}}\label{eta_bound}
  \end{align}
\end{multicols}

Moreover, assume that $E \colon \R_{\ge0} \to \R^{(n_1+n_2)\times (n_1+n_2)}$ is piecewise smooth and satisfies the following property for all $0 \le t \le T$:

Let \[\tau = \begin{cases}\lfloor t / \eta \rfloor \eta&\text{ if }\eta > 0\\t&\text{ if }\eta = 0\end{cases}.\]

If $\norm{W_\tau} \le \frac{3}{2}\sqrt{\norm{Y}}$ and $\gamma \sigma_{r+1}^2(W_\tau) \le \norm{Y}$, then
\begin{align}
  \norm{E_t} &\le \beta \left(\norm{R_\tau}+\gamma \sigma_{r+1}^2(W_\tau)\right) + \mu\norm{Y} \nonumber\\
             &\le \frac{\delta^2}{13} \left(\norm{R_\tau}+\gamma \sigma_{r+1}^2(W_\tau)\right) + \frac{\delta^2}{4}\norm{Y}\label{new_general_E_bound}\\
  \norm{J E_t + E_t\T J} &\le \nu \norm{Y} \le \frac{2\delta^2}{T_2}\label{JE_bound},
\end{align}

Under these assumptions, we will prove
\begin{align}\label{goal}
  \norm{R_{T_2}} \le 64\left(\beta\kappa\gamma+\sqrt{\kappa}\right)\norm{W_0}^2\left(\frac{Y_{rr}}{\norm{W_0}^2}\right)^{32\sqrt\alpha\kappa\delta} + 10^3 \kappa \mu\norm{Y}.
\end{align}

\subsection{Setup}\label{s:setup}
We first repeat some key definitions from \Cref{s:top_setup}, and then give names to some commonly occuring expressions below. Recall $W = \begin{pmatrix}U \\ V\end{pmatrix}$, $\hat Y = \begin{pmatrix} 0 & Y \\ Y\T & 0\end{pmatrix}$, $J = \begin{pmatrix} I_{n_1} & 0 \\ 0 & -I_{n_2} \end{pmatrix}$ and $R = \hat Y - \frac{1}{2}\left(WW\T-JWW\T J\right)$. We will need to extract various parts of the parameters $W$. We do this using projections. Let $\PA \in \R^{r \times (n_1+n_2)}$ and $\PN \in \R^{(n_1+n_2-2r)\times (n_1+n_2)}$ be such that $P = \begin{pmatrix}\PA \\ \PN \\ \PA J\end{pmatrix}$ is unitary and diagonalizes $\hat Y$. Specifically, $P \hat Y P\T$ is diagonal with decreasing diagonal. Equivalently, let $\PA$ be the unique partial isometry such that $\PA W = \frac{1}{\sqrt 2}(U_{1:r,:}+V_{1:r,:})$, where $U_{1:r,:} \in R^{r\times h}$ denotes rows $1$ through $r$ of $U$. Furthermore, define $\PN$ by $\PN W = \begin{pmatrix}U_{r+1:m} \\ V_{r+1:n}\end{pmatrix}$. Finally, let $\PP = \begin{pmatrix}\PN \\ \PA J\end{pmatrix}$, so $\PP W = \begin{pmatrix}\PN W \\ \frac{1}{\sqrt 2}(U_{1:r,:}-V_{1:r,:})\end{pmatrix}$. Note that $\PP$ extracts the complement of $\PA$, that is, $\PA\PA\T = I-\PP\PP\T$. Furthermore, note $\lambda_r(\PA \hat Y \PA) = Y_{rr}$, $\PN \hat Y = 0$ and $J\hat Y J = -\hat Y$ (\Cref{tiny_dilation}). More consequences of the setup are listed in \Cref{s:conseq}.

The part of $W$ aligned with $\hat Y$, we will call \[A \coloneqq \PA W.\] We note the pseudoinverse $A^\dagger \coloneqq A\T(AA\T)^{-1}$. This lets us define the projection onto the row space of $A$, which we call \[Q \coloneqq A^\dagger A.\] We use $Q$ to split $W$ into a signal part and a nuisance part $W = WQ + \tilde W$, where \[\tilde W \coloneqq W-WQ.\] We will show that $\norm{\tilde W}$ grows slowly from its small initialization, while the signal part $WQ$ has $\rank(WQ) = r$ and quickly minimizes the objective. We finally define \[F \coloneqq \PP W A^\dagger,\] which represents the misalignment of $WQ$ with respect to the target $\hat Y$.

\subsection{Proofs for the warm-up phase}\label{s:warmup}
In this section, we show that the bounds listed below all hold until convergence at time $T_2$.
\begin{definition}\label{ind:def}
  Let $S^{warmup}$ be the set of times $t \in [0,T_2]$ such that
  \begin{multicols}{2}
    \begin{enumerate}
      \item $\norm{W_t} \le \frac{3}{2}\sqrt{\norm{Y}}$\label{ind:W}
      \item $\norm{\Bt t} \le \left(1+\frac{5t}{T_2}\right)\delta^2\norm{Y}$\label{ind:B}
      \item $\norm{\PA JW_t} \le \frac{\delta}{3\sqrt\alpha}\sqrt{\norm{Y}}$\label{ind:PJW}
      \item $\norm{\PN W_t} \le \delta\sqrt{8\norm{Y}}$\label{ind:NW}
      \item $\PP \left(\hat Y + \frac{1}{2}JW_tW_t\T J\right) \PP\T \preccurlyeq 2\delta\norm{Y}I$\label{ind:PPX}
      \item $\norm{F_t} \le \alpha$\label{ind:F}
      \item $\norm{\tilde W_t} \le \norm{W_0} \exp\left(3\sqrt\alpha \delta \norm{Y}t\right)$\label{ind:tW}
      \item $\sigma_r(A_t) \ge \min\left(\sqrt{Y_{rr}},\ \frac{\norm{W_0}}{\alpha} e^{\frac{2 Y_{rr}}{5}t}\right)$\label{ind:A}.
    \end{enumerate}
  \end{multicols}
\end{definition}
\begin{theorem}\label{ind:thm}
  $S^{warmup} = [0,T_2]$. That is, the items in \Cref{ind:def} hold for all times $t \in [0,T_2]$.
\end{theorem}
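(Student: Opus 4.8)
The plan is to prove $S^{warmup} = [0,T_2]$ by real induction, following the framework of \cite{clark2019instructor}. It suffices to check the three axioms of real induction for the set $S^{warmup}$ defined by items~\ref{ind:W}--\ref{ind:A}: (i) $0 \in S^{warmup}$; (ii) if $[0,t) \subseteq S^{warmup}$ then $t \in S^{warmup}$ (closure under limits from the left); and (iii) if $t \in S^{warmup}$ and $t < T_2$, then $[t, t+\varepsilon) \subseteq S^{warmup}$ for some $\varepsilon > 0$ (right-openness can be extended). For (i), I would verify that each of the eight bounds holds at $t = 0$ using \Cref{asmp:init}: $\norm{W_0} \le \epsilon \le \frac{3}{2}\sqrt{\norm Y}$ gives item~\ref{ind:W}; items~\ref{ind:B}, \ref{ind:PJW}, \ref{ind:NW}, \ref{ind:PPX} follow since all the relevant quantities are $O(\epsilon^2)$ or $O(\epsilon)$ and $\epsilon$ is small relative to $\delta\sqrt{\norm Y}$; item~\ref{ind:F} holds because $\norm{F_0} = \norm{\PP W_0 A_0^\dagger} \le \norm{W_0}/\sigma_r(A_0) \le \epsilon/(\epsilon/\alpha^2) = \alpha^2$; item~\ref{ind:tW} is $\norm{\tilde W_0} \le \norm{W_0} \le \epsilon$; and item~\ref{ind:A} holds because the right-hand side at $t=0$ is $\min(\sqrt{Y_{rr}}, \epsilon/\alpha^2) \le \sigma_r(A_0)$.

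For axiom (ii), I would note that $W_t$ is continuous (indeed the perturbed gradient flow has a piecewise-smooth, hence absolutely continuous, solution), so all the quantities appearing in \Cref{ind:def} — norms, singular values, eigenvalues of continuous matrix-valued functions, and the pseudoinverse-based quantities $A^\dagger$, $Q$, $F$, $\tilde W$ (continuous as long as $\sigma_r(A) > 0$, which item~\ref{ind:A} guarantees on $[0,t)$) — are continuous in time. Each bound in \Cref{ind:def} is a non-strict inequality with a continuous (in fact the time-dependent right-hand sides are continuous and non-decreasing in the relevant direction), so it is preserved under taking the limit $s \uparrow t$. A small point to handle carefully: one must argue $\sigma_r(A_t) > 0$ at the limit point so that $F_t, \tilde W_t$ remain well-defined; this follows since item~\ref{ind:A} gives a strictly positive lower bound on $[0,t)$ that passes to the limit.

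The real work is axiom (iii): the "opening" step, where we must show each bound is not merely preserved but cannot be violated immediately after a time $t$ where it holds. This is where the differential-equation formulation pays off: for each quantity, I would bound its time derivative using the perturbed gradient flow $\dot W_t = (R_t + E_t)W_t$ together with \Cref{E_floor_bound} (which, via the choice $\beta = \frac{\delta^2}{20 T_2 \norm Y}$ and \Cref{asmp:RIP}, gives $\norm{E_t} \le \beta(\norm{R_{k\eta}} + \gamma\sigma_{r+1}^2(W_{k\eta}))$, and one feeds in items~\ref{ind:W}, \ref{ind:B} to control the right side). For items whose bound has strict slack at $t$ (e.g. \ref{ind:W}: at $t$ we have $\norm{W_t} \le \frac32\sqrt{\norm Y}$, but one shows $\frac{d}{dt}\norm{W_t} \le 0$ whenever $\norm{W_t}$ is near $\frac32\sqrt{\norm Y}$, because $R_t$ then has a strongly contractive component) the derivative bound alone suffices. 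For the exponentially-growing bounds (\ref{ind:tW}: $\frac{d}{dt}\norm{\tilde W_t} \le 3\alpha\delta\norm Y \norm{\tilde W_t}$ via a projection argument showing $\tilde W$ evolves by an operator of norm $\le 3\alpha\delta\norm Y$) and the exponentially-growing lower bound (\ref{ind:A}: $\frac{d}{dt}\sigma_r(A_t) \ge \frac{2Y_{rr}}{5}\sigma_r(A_t)$ in the regime $\sigma_r(A_t) < \sqrt{Y_{rr}}$, using a bottom-singular-pair calculation, $\lambda_r(\PA\hat Y\PA) = Y_{rr}$, and the smallness of the imbalance and misalignment), one invokes Grönwall-type comparison. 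I expect the main obstacle to be the interlocking nature of these estimates: the bound on $\frac{d}{dt}\sigma_r(A)$ needs control of $\norm F$ and the imbalance $\norm{\B}$, the bound on $\norm F$ needs control of $\sigma_r(A)$ and $\norm{\PP(\hat Y + \frac12 JWW\T J)\PP\T}$, and the imbalance bound needs $\norm{E_t}$, which circles back to $\norm R$ and $\sigma_{r+1}(W) \le \norm{\tilde W}$. Making this self-consistent — choosing the slacks in the eight bounds so that each derivative estimate uses only the others as hypotheses at the single time $t$, with no circularity — is the delicate part, and is presumably why the detailed argument is deferred to the appendix; here I would organize it as eight separate lemmas, one per item, each assuming all eight bounds at time $t$ and concluding the corresponding derivative/opening estimate.
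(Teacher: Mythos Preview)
Your proposal is correct and follows essentially the same approach as the paper: real induction with the base case verified from \Cref{asmp:init}, the closure step from continuity of all quantities (including well-definedness of $A^\dagger$, $F$, $\tilde W$ via the positive lower bound on $\sigma_r(A)$), and the opening step broken into eight separate derivative estimates, each assuming the full induction hypothesis and using \Cref{E_floor_bound} to control the perturbation. The only refinement the paper adds that you leave implicit is a careful treatment of the possible non-differentiability of $\norm{\cdot}$ and $\sigma_r(\cdot)$ when singular values collide: rather than writing $\frac{d}{dt}\norm{X_t}$ directly, the paper bounds $\frac{d}{dt}(u^\top X_t v)$ for every top (or bottom) singular pair $u,v$ and invokes the technical lemmas in \Cref{s:tech} to conclude; this is exactly the ``bottom-singular-pair calculation'' you allude to, made rigorous.
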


We use the following formulation of real induction.
\begin{proposition}[Real induction]\label{real_ind}
  Assume the set $S \subset [a,b], a < b$ satisfies the following.
  \begin{enumerate}
    \item Base case: $a \in S$.
    \item Continuity: If $t \in (a,b]$, then $[a,t) \subset S \implies t \in S$.
    \item Induction step: If $t \in [a,b)$, then $[a,t] \subset S \implies [t,\hat t] \subset S$ for some $\hat t > t$.
  \end{enumerate}
  Then $S = [a,b]$.
\end{proposition}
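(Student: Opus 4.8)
The plan is to argue by contradiction. Suppose $S \neq [a,b]$. Since $a \in S$ by the base case, the set $[a,b] \setminus S$ is nonempty, and it is bounded below by $a$, so it has an infimum; call it $s = \inf\left([a,b]\setminus S\right)$. Note $s \in [a,b]$ and $s \ge a$. The goal is to derive a contradiction by showing that $s$ is neither $< b$ in a way consistent with the induction step, nor can it fail to belong to $S$ given continuity.

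First I would show $[a,s) \subset S$. Indeed, any $t$ with $a \le t < s$ satisfies $t < s = \inf\left([a,b]\setminus S\right)$, so $t \notin [a,b]\setminus S$; since also $t \in [a,b]$, we get $t \in S$. Next I would show $s \in S$: if $s = a$ this is the base case; if $s > a$, then by the previous step $[a,s) \subset S$ with $s \in (a,b]$, so continuity gives $s \in S$. In particular $s$ is a point of $S$, and therefore $s \neq \sup\left([a,b]\setminus S\right)$ would not immediately help — rather, the key point is that $[a,s] \subset S$ now holds (combining $[a,s) \subset S$ with $s \in S$).

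Now I would invoke the induction step. If $s < b$, then from $[a,s] \subset S$ we obtain $\hat t > s$ with $[s,\hat t] \subset S$; combined with $[a,s] \subset S$ this gives $[a,\min(\hat t, b)] \subset S$, so every point of $[a,b]$ strictly below $\min(\hat t,b) > s$ lies in $S$. But then no point of $[a,b] \setminus S$ can be $< \min(\hat t, b)$, contradicting that $s = \inf\left([a,b]\setminus S\right) < \min(\hat t,b)$ (the infimum of a nonempty set must be a limit of, or equal to, points of that set that are arbitrarily close to $s$ from above, all of which are now shown to be in $S$). Hence $s = b$. But $s \in S$, so $b \in S$, and then $[a,b] \subset S$ forces $[a,b] \setminus S = \emptyset$, contradicting its nonemptiness. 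Therefore $S = [a,b]$.

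The main obstacle is purely bookkeeping: making the contradiction at $s = \inf\left([a,b]\setminus S\right)$ precise — in particular arguing carefully that after the induction step produces $[s,\hat t] \subset S$, there are no elements of $[a,b]\setminus S$ arbitrarily close to $s$ from the right, which contradicts $s$ being the infimum (unless $[a,b]\setminus S$ is empty, which is the desired conclusion). One should handle the edge case $s = b$ separately, and be mindful that the induction step only applies for $t \in [a,b)$, which is exactly why the $s = b$ case must be closed using $s \in S$ rather than the induction step.
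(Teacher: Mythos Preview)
Your proof is correct and follows essentially the same approach as the paper: argue by contradiction, set $s=\inf([a,b]\setminus S)$, show $[a,s)\subset S$ and then $s\in S$ (via the base case or continuity), and finally use the induction step (when $s<b$) to contradict the definition of the infimum, handling $s=b$ separately. The only cosmetic difference is that the paper first applies the induction step at $a$ to rule out $s=a$ before invoking continuity, whereas you split off $s=a$ as a base-case instance; both are fine.
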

\begin{proof}
  Assume for contradiction that $S \ne [a,b]$. Then we can find a greatest lower bound of the complement $x = \inf([a,b] \setminus S)$. The base case and induction step assumptions imply there exists $\hat t > a$ such that $[a,\hat t] \in S$, so $x \ge \hat t > a$. Next, we know $[a,x) \in S$, so continuity implies $x \in S$. If $x = b$, then $S = [a,b]$. Otherwise, an induction step on $[a,x] \in S$ yields a greater lower bound $\hat t > x$, contradicting the definition of $x$.
\end{proof}

Proving base case and continuity is straightforward.
\begin{proposition}[Base case]\label{ind:base}
  $0 \in S^{warmup}$.
\end{proposition}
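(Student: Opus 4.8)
# Proof Proposal for the Base Case ($0 \in S^{warmup}$)

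The plan is to verify, one by one, each of the eight conditions in \Cref{ind:def} at time $t = 0$, using the bounds on the initialization provided in \Cref{asmp:init} (namely $\norm{W_0} \le \epsilon$, $\sigma_r(A_0) \ge \frac{\epsilon}{\alpha^2}$, and $\epsilon \le \min\left(\frac{\sqrt\kappa}{\alpha\gamma}, \frac{\delta}{3\alpha}\right)\sqrt{\norm Y}$), together with $\alpha \ge 1$, $\delta \le \frac{1}{64\alpha\kappa}$, and the observation that $\sqrt{\norm Y} = \sqrt{\kappa}\sqrt{\sigma_r(Y)} = \sqrt{\kappa}\sqrt{Y_{rr}}$. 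Several items are nearly immediate because $\norm{W_0}$ is so small: since $\epsilon \le \frac{\delta}{3\alpha}\sqrt{\norm Y} \le \frac{1}{192\alpha^2\kappa}\sqrt{\norm Y} \le \frac{3}{2}\sqrt{\norm Y}$, item~\ref{ind:W} holds. For item~\ref{ind:B}, since $\norm{\Bt 0} = \norm{U_0\T U_0 - V_0\T V_0} \le \norm{U_0}^2 + \norm{V_0}^2 \le 2\norm{W_0}^2 \le 2\epsilon^2 \le \delta^2\norm Y$ (using $\epsilon^2 \le \frac{\delta^2}{9\alpha^2}\norm Y \le \frac{\delta^2}{2}\norm Y$), the bound $\left(1+\frac{5\cdot 0}{T_2}\right)\delta^2\norm Y = \delta^2\norm Y$ is satisfied. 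Items~\ref{ind:PJW}, \ref{ind:NW} follow since $\norm{\PA J W_0}, \norm{\PN W_0} \le \norm{W_0} \le \epsilon \le \frac{\delta}{3\alpha}\sqrt{\norm Y}$, which is below both thresholds.

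For item~\ref{ind:PPX}, I would bound $\lambda_1\!\left(\PP(\hat Y + \tfrac12 JW_0W_0\T J)\PP\T\right)$ by splitting into the two pieces: $\lambda_1(\PP\hat Y\PP\T) \le 2\delta\norm Y$ needs checking — and here I expect to use that $\PP$ projects onto the complement of the top-$r$ space, combined with the structure $\PN\hat Y = 0$ and $\PA J\hat Y = -\PA\hat Y J$ noted in \Cref{s:setup}, so that the relevant eigenvalue involves only the off-diagonal coupling of $\hat Y$ restricted away from its dominant block; alternatively the term is simply bounded by $\norm{\hat Y}\cdot(\text{something small})$ plus $\frac12\norm{W_0}^2 \le \frac12\epsilon^2$, which is negligible. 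Item~\ref{ind:F}: $\norm{F_0} = \norm{\PP W_0 A_0^\dagger} \le \norm{W_0}\norm{A_0^\dagger} = \frac{\norm{W_0}}{\sigma_r(A_0)} \le \frac{\epsilon}{\epsilon/\alpha^2} = \alpha^2$, which is exactly the required bound. Item~\ref{ind:tW}: $\norm{\tilde W_0} \le \norm{W_0} \le \epsilon = \epsilon\exp(0)$. Item~\ref{ind:A}: $\sigma_r(A_0) \ge \frac{\epsilon}{\alpha^2} = \frac{\epsilon}{\alpha^2}\exp(0) \ge \min\!\left(\sqrt{Y_{rr}}, \frac{\epsilon}{\alpha^2}\exp(0)\right)$, trivially.

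I also need $0 \in [0,T_2]$, which holds since $T_2 > 0$ (as $\epsilon^2 < \norm Y$ forces $\ln(Y_{rr}/\epsilon^2) > \ln(Y_{rr}/\norm Y)$; more carefully, $\epsilon \le \frac{\delta}{3\alpha}\sqrt{\norm Y}$ with $\delta\le 1/64$ gives $\epsilon^2 \le \frac{\norm Y}{(192)^2} < Y_{rr}$, so the logarithm is positive and $T_2 > 0$). The main obstacle — though still routine — is item~\ref{ind:PPX}, since it is the only one whose left-hand side does not vanish as $\epsilon \to 0$; it requires genuinely using the spectral structure of $\hat Y$ under the projection $\PP$ (the facts $\lambda_r(\PA\hat Y\PA) = Y_{rr}$ and $\PN\hat Y = 0$ recorded in \Cref{s:setup}) rather than merely the smallness of the initialization. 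All other items reduce to the chain of inequalities $\epsilon \le \frac{\delta}{3\alpha}\sqrt{\norm Y} \le \frac{1}{192\alpha^2\kappa}\sqrt{\norm Y}$ combined with $\norm{F_0}$-type bounds that are tight by the definition $\sigma_r(A_0)\ge\epsilon/\alpha^2$.
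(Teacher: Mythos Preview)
Your approach is correct and matches the paper's proof item by item. The one place where you hedge unnecessarily is item~\ref{ind:PPX}: the clean fact the paper uses is simply $\lambda_1(\PP\hat Y\PP\T)\le 0$, which follows immediately from the setup---$P$ diagonalizes $\hat Y$ with decreasing diagonal, the top $r$ eigenvalues of $\hat Y$ are $\sigma_1(Y),\dots,\sigma_r(Y)>0$ and sit in the $\PA$-block, so $\PP\hat Y\PP\T$ carries only the $m+n-2r$ zeros and the $r$ negative eigenvalues $-\sigma_i(Y)$. Then $\lambda_1\bigl(\PP(\hat Y+\tfrac12 JW_0W_0\T J)\PP\T\bigr)\le 0+\tfrac12\norm{W_0}^2<2\delta\norm{Y}$. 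Your suggested alternative of bounding by ``$\norm{\hat Y}\cdot(\text{something small})$'' would not work, since $\norm{\hat Y}=\norm{Y}$ is not small and there is no extra small factor available; the argument really does rest on the sign of $\lambda_1(\PP\hat Y\PP\T)$, not on any smallness of $\hat Y$.
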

\begin{proof}
  We have $\norm{W_0} \le \frac{\delta}{3\sqrt\alpha}\sqrt{\norm{Y}}$ by assumption \eqref{eps_bound} and $\delta \le 1$ by assumption \eqref{delta_bound}. Also, note that $\sigma_r(A_0) = \frac{\norm{W_0}}{\alpha}$ by \Cref{def:pert_init} in \Cref{s:main_results}. These directly imply \Cref{ind:W,ind:B,ind:PJW,ind:NW,ind:tW,ind:A} of \Cref{ind:def} for $t = 0$. For item \ref{ind:PPX}, since $\lambda_1(\PP \hat Y \PP\T) \le 0$, we have \[\lambda_1\left(\PP (\hat Y+\frac{1}{2}JW_0W_0\T J) \PP\T\right) \le \lambda_1(\PP \hat Y \PP\T) + \frac{1}{2}\norm{W_0}^2 < 2\delta\norm{Y}.\] Finally, by the definition of $F_0$, we have $\norm{F_0} = \norm{\PP W_0 A_0^\dagger} \le \norm{W_0}\norm{A_0^\dagger} = \frac{\norm{W_0}}{\sigma_r(A_0)} = \alpha$ for item \ref{ind:F}.
\end{proof}

\begin{proposition}[Continuity]\label{ind:cont}
  If $t > 0$ and $[0,t) \in S^{warmup}$, then $t \in S^{warmup}$.
\end{proposition}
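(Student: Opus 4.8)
The statement to prove is the continuity property for real induction: if $t > 0$ and $[0,t) \subset S^{warmup}$, then $t \in S^{warmup}$.

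\textbf{Approach.} The plan is to exploit the continuity (indeed, at least piecewise smoothness) of every quantity appearing in \Cref{ind:def} with respect to time. Recall that $W_t$ solves the perturbed gradient flow $\dot W_t = (R_t + E_t)W_t$, where $E_t$ is piecewise smooth with only jump discontinuities at the grid points $t = k\eta$, and $R_t$ is a polynomial in $W_t$. Hence $W_t$ is continuous in $t$ (it is even $C^1$ on each interval $[k\eta,(k+1)\eta)$ and continuous across the grid points, by \Cref{tiny_coincide} and the integral form of the ODE). All eight bounds in \Cref{ind:def} are then inequalities of the form $f(t) \le g(t)$ where both sides are continuous functions of $t$ on all of $[0,T_2]$; since each holds on $[0,t)$, it holds at $t$ by taking the limit $s \uparrow t$.

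\textbf{Key steps, in order.} First, I would argue $W_t$ is continuous on $[0,T_2]$: on each half-open interval $[k\eta,(k+1)\eta)$ the right-hand side of the ODE is continuous (it is $\frac{1}{\eta}\ln(I+\eta R_{k\eta}+\eta\hat E^\A_{k\eta})W_t$, an analytic function of $W_t$), giving a $C^1$ solution there, and the solution extends continuously to the right endpoint, matching the value of the next piece by \Cref{tiny_coincide}. Second, I would note that each of the following is a continuous function of $W_t$ (hence of $t$): $\norm{W_t}$; $\Bt t = W_t\T J W_t$ and its norm; $\PA J W_t$ and $\PN W_t$ and their norms; $\hat Y + \frac12 JW_tW_t\T J$ and the largest eigenvalue $\lambda_1(\PP(\cdot)\PP\T)$ (eigenvalues depend continuously on the matrix); and $\sigma_r(A_t)$ where $A_t = \PA W_t$ (singular values depend continuously on the matrix). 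The one item needing slightly more care is $\norm{F_t}$ (item \ref{ind:F}) and $\norm{\tilde W_t}$ (item \ref{ind:tW}), since $F_t = \PP W_t A_t^\dagger$ and $\tilde W_t = W_t - W_t A_t^\dagger A_t$ involve the pseudoinverse $A_t^\dagger = A_t\T(A_tA_t\T)^{-1}$; this is continuous precisely as long as $A_tA_t\T$ stays invertible, i.e. as long as $\sigma_r(A_t) > 0$. Since item \ref{ind:A} of \Cref{ind:def} gives $\sigma_r(A_s) \ge \min(\sqrt{Y_{rr}}, \frac{\epsilon}{\alpha^2}\exp(\frac{2Y_{rr}}{5}s)) > 0$ for all $s \in [0,t)$, and $\sigma_r$ is continuous, we get $\sigma_r(A_t) > 0$ as well, so $A_t^\dagger$, $F_t$, and $\tilde W_t$ are continuous at $t$. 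Third, with continuity of all eight expressions established, each inequality in \Cref{ind:def} passes to the limit: for instance $\norm{\tilde W_t} = \lim_{s \uparrow t}\norm{\tilde W_s} \le \lim_{s\uparrow t}\epsilon\exp(3\alpha\delta\norm{Y}s) = \epsilon\exp(3\alpha\delta\norm{Y}t)$, and similarly for the others (the right-hand bounds $\left(1+\frac{5t}{T_2}\right)\delta^2\norm{Y}$, etc., are themselves continuous in $t$). Therefore $t \in S^{warmup}$.

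\textbf{Main obstacle.} The only genuine subtlety is the continuity of the pseudoinverse-dependent quantities $F_t$ and $\tilde W_t$ at the limit point $t$: a priori $A_t A_t\T$ could degenerate exactly at $t$, which would break continuity of $A_t^\dagger$. This is resolved by bootstrapping off item \ref{ind:A} of the induction set, which keeps $\sigma_r(A_s)$ uniformly bounded away from $0$ on $[0,t)$ and hence (by continuity of $\sigma_r$) at $t$ too. Everything else is a routine appeal to continuity of matrix norms, eigenvalues, and singular values, together with continuity of the flow $W_t$, and I would state it compactly rather than belabor each of the eight items.
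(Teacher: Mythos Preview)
Your proposal is correct and follows exactly the paper's approach: the paper's proof is the single line ``This follows from all expressions in \Cref{ind:def} being continuous with respect to $t$.'' You supply considerably more detail than the paper does, in particular the observation that continuity of $F_t$ and $\tilde W_t$ at $t$ requires $\sigma_r(A_t) > 0$, which you correctly bootstrap from item~\ref{ind:A} on $[0,t)$; the paper leaves this implicit.
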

\begin{proof}
  This follows from all expressions in \Cref{ind:def} being continuous with respect to $t$.
\end{proof}

\begin{proposition}[Induction step]
  Let $t \in [0,T_2)$ and assume 
  \begin{align}\label{ind:t}
    [0,t] \subset S^{warmup},
  \end{align}
  then there exists $\hat t > t$ such that $[t,\hat t] \subset S^{warmup}$.
\end{proposition}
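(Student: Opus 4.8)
The plan is to prove the induction step by establishing each of the eight items in \Cref{ind:def} separately, in a carefully chosen order so that later items may use earlier ones. The overall strategy for each item is the one demonstrated in \Cref{s:tW}: write the quantity to be bounded (or, for singular values, its relevant scalar surrogate) and differentiate it using the perturbed gradient flow $\dot W_t = (R_t + E_t) W_t$; then, at time $t$, use the induction hypothesis \eqref{ind:t} together with \Cref{E_bound_top} (which itself follows from \Cref{E_floor_bound} under \Cref{asmp:RIP}) to bound all the terms appearing in the derivative; conclude that the derivative points "inward" relative to the constraint defining $S^{warmup}$, so that a short-time extension $[t,\hat t]$ stays in $S^{warmup}$, either by Grönwall's inequality (for exponential-type bounds like \Cref{ind:tW,ind:A,ind:B}) or by the technical tools of \Cref{s:tech} when the extremal singular value / eigenvalue is non-unique.

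First I would fix the order of the items. A natural choice is: \Cref{ind:W} ($\norm{W_t}$), \Cref{ind:B} (imbalance $\norm{\Bt t}$), \Cref{ind:PJW} ($\norm{\PA J W_t}$), \Cref{ind:NW} ($\norm{\PN W_t}$), \Cref{ind:PPX} ($\lambda_1(\PP X_t \PP\T)$), \Cref{ind:F} ($\norm{F_t}$), then \Cref{ind:tW} ($\norm{\tilde W_t}$, already done in \Cref{s:tW}), and finally \Cref{ind:A} ($\sigma_r(A_t)$). For each, the key inputs are: the explicit form of $R_t = \hat Y - \frac12(W_tW_t\T - JW_tW_t\T J)$ and of $\dot W_t$; the algebraic identities from the setup ($J\hat Y J = -\hat Y$, $\PN \hat Y = 0$, $\PA\PA\T = I - \PP\PP\T$, $\tilde W_t = \PP\T\PP\tilde W_t$, $\lambda_r(\PA\hat Y\PA\T) = Y_{rr}$); and the numerical slack coming from $\delta \le \frac{1}{64\alpha\kappa}$, $\epsilon \le \min(\tfrac{\sqrt\kappa}{\alpha\gamma},\tfrac{\delta}{3\alpha})\sqrt{\norm Y}$, and $\norm{E_t} \le \frac{\delta^2}{2}\norm Y$. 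For the imbalance one observes $\frac{d}{dt}(W\T J W) = W\T J (R+E) W + W\T(R+E)\T J W$ and notes that the $R$-part contributes zero up to the $E$-free dilation structure (since $JR = -R\T J$ for the $\hat Y$ part, leaving only the $E$ contribution and a quadratic term), so the imbalance grows only through $\norm{E_t}$, which is integrable to give the $\left(1+\frac{5t}{T_2}\right)\delta^2\norm Y$ envelope. For $\norm{W_t}$, differentiating $\norm{W_t}^2$ along a top singular pair and using that $R_t$ is "contracting" once $\norm{W_t}$ is large (the $-\frac12 WW\T$ term dominates $\hat Y$) shows $\frac{3}{2}\sqrt{\norm Y}$ is an attracting bound. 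The projections $\PN$ and $\PA J$ annihilate $\hat Y$ or flip its sign, so along those directions $W_t$ is also contracted up to lower-order terms controlled by the other items; the tools of \Cref{s:tech} handle non-uniqueness.

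The main obstacle I expect is the induction step for $\sigma_r(A_t)$ (\Cref{ind:A}), paralleling why \Cref{ind:tW} is singled out as "most challenging." Here one differentiates along a \emph{bottom} singular pair of $A_t = \PA W_t$, and the growth rate $\frac{d}{dt}\log\sigma_r(A_t) \ge \frac{2Y_{rr}}{5}$ must be extracted from $\PA (R_t + E_t) W_t$ projected appropriately. The positive drift comes from the term $\PA \hat Y \PA\T$ acting on the signal part, giving roughly $\lambda_r(\PA \hat Y\PA\T) = Y_{rr}$ times $\sigma_r(A_t)$, but this is degraded by (i) the misalignment $F_t$ (bounded by $\alpha^2$ via \Cref{ind:F}), (ii) the nuisance part $\tilde W_t$ coupling in through $A^\dagger$, (iii) the perturbation $E_t$, and (iv) the quadratic self-interaction $-\frac12 WW\T$ restricted to the signal subspace — which, crucially, one must show does not yet dominate before time $T_1$, i.e. while $\sigma_r(A_t)^2 \le Y_{rr}$. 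Balancing these to retain a clean factor $\frac{2}{5}$ (rather than, say, $1 - O(\delta)$) in the exponent requires careful bookkeeping with the constants; this is where the bulk of the work lies. A secondary difficulty is \Cref{ind:F}: controlling $\norm{F_t} = \norm{\PP W_t A_t^\dagger}$ requires differentiating a product involving the pseudoinverse $A_t^\dagger$, whose derivative formula introduces the $\PP$-direction dynamics, and one must show the bound $\alpha^2$ is preserved — essentially a statement that the signal block does not become ill-conditioned along the orthogonal complement. Both of these are handled, per the paper's outline, by the uniform approach of bounding the scalar derivative and invoking \Cref{s:tech}; I would present \Cref{ind:A} and \Cref{ind:F} in full detail and the remaining items more briefly, exactly as the excerpt indicates is done in \Cref{s:warmup}.
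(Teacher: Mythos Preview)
Your proposal is correct and follows essentially the same approach as the paper: each of the eight items is handled separately by differentiating along an extremal singular pair (or eigenvector), bounding the derivative using the induction hypothesis and \Cref{E_bound_top}, and then invoking the tools of \Cref{s:tech}; the final $\hat t$ is simply the minimum over the eight. Two small corrections: first, no ordering among the items is needed --- each proof uses the full hypothesis $[0,t]\subset S^{warmup}$ (all eight bounds at time $t$), not only the ``earlier'' ones; second, for the imbalance the entire $R_t$ (not just the $\hat Y$ part) satisfies $JR_t + R_tJ = 0$, so $\frac{d}{dt}(W_t\T J W_t) = W_t\T(E_t\T J + J E_t)W_t$ with no residual quadratic term.
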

\begin{proof}
  We pick $\hat t$ as the minimum of the ones provided in \Cref{cor:W,prop:B,cor:PJW,cor:NW,cor:PPX,cor:F,cor:tW,cor:A}.
\end{proof}

The rest of the subsection will be used to prove the induction steps for each item in \Cref{ind:def}, completing the proof above. Throughout, we fix a $t \in [0,T_2)$ such that the induction hypothesis \eqref{ind:t} holds. Below, we list some immediate consequences of this induction hypothesis.

\begin{lemma}\label{tW_bound}
  $\norm{\tilde W_t}^2 \le \min\left(\frac{1}{\gamma}, \delta\right)\frac{\norm{Y}}{\sqrt\alpha}$.
\end{lemma}
\begin{proof}
  By the induction hypothesis \eqref{ind:t}, \Cref{ind:tW} of \Cref{ind:def}, $\delta \le Y_{rr}/(64\sqrt\alpha\norm{Y})$ by \eqref{delta_bound}, and by the definition $T_2 = \frac{5}{Y_{rr}}\log\left(\frac{Y_{rr}}{\norm{W_0}^2}\right)$, we have $\norm{\tilde W_t} \le \norm{W_0}\exp\left(3\sqrt\alpha\delta\norm{Y}T_2\right) \le \norm{W_0}\left(\frac{Y_{rr}}{\norm{W_0}^2}\right)^\frac{1}{4}$. By assumption \eqref{eps_bound}, $\norm{W_0} \le \min\left(\frac{\sqrt{\kappa}}{\sqrt\alpha\gamma}, \frac{\delta}{3\sqrt\alpha}\right)\sqrt{\norm{Y}}$. In combination, these give the desired bound.
\end{proof}

\begin{lemma}\label{E_bound}
  $\norm{E_t} \le \frac{\delta^2}{2}\norm{Y}$.
\end{lemma}
\begin{proof}
  Recall the bound \eqref{new_general_E_bound}:
  \[\norm{E_t} \le \frac{\delta^2}{13}\left(R_\tau+\gamma \sigma_{r+1}^2(W_\tau)\right)+\frac{\delta^2}{4}\norm{Y},\] for some $\tau \in [0,t]$. Next, \Cref{tW_bound} gives $\norm{\tilde W_\tau}^2 \le \frac{\norm{Y}}{\gamma}$ which implies $\sigma_{r+1}^2(W_\tau) \le \frac{\norm{Y}}{\gamma}$ by \Cref{tiny_tWrW}. Additionally, $\norm{R_\tau} \le \frac{17}{8}\norm{Y}$ by \Cref{tiny_R_bound}. Inserting these into the bound on $\norm{E_t}$ yields
  \[\norm{E_t} \le \delta^2\left(\frac{1}{13}\left(\frac{17}{8}+1\right)+\frac{1}{4}\right)\norm{Y} \le \frac{\delta^2}{2}\norm{Y}.\]
\end{proof}

\begin{lemma}\label{EB_bound}
  $\norm{E_t} + \frac{1}{2}\norm{\Bt t} \le \frac{7\delta^2}{2}\norm{Y}$.
\end{lemma}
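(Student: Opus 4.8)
\textbf{Proof proposal for \Cref{EB_bound}.} This statement should follow immediately by combining the two bounds already available: \Cref{E_bound} for $\norm{E_t}$ and item \ref{ind:B} of \Cref{ind:def} (valid under the standing induction hypothesis \eqref{ind:t}) for the imbalance $\norm{\Bt t}$. First I would invoke \Cref{E_bound} to get $\norm{E_t} \le \frac{\delta^2}{2}\norm{Y}$. Next, I would use the induction hypothesis $[0,t] \subset S^{warmup}$ together with item \ref{ind:B}, which gives $\norm{\Bt t} \le \left(1+\frac{5t}{T_2}\right)\delta^2\norm{Y}$; since $t \le T_2$, the factor $1 + \frac{5t}{T_2}$ is at most $6$, so $\norm{\Bt t} \le 6\delta^2\norm{Y}$.

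Adding, $\norm{E_t} + \frac{1}{2}\norm{\Bt t} \le \frac{\delta^2}{2}\norm{Y} + 3\delta^2\norm{Y} = \frac{7\delta^2}{2}\norm{Y}$, as claimed. There is no genuine obstacle here; it is a purely arithmetic corollary of the preceding lemma and the induction hypothesis, recorded separately because the quantity $\norm{E_t} + \frac{1}{2}\norm{\Bt t}$ recurs in the subsequent induction steps (e.g.\ it controls the effective perturbation acting on the dynamics once the $\frac{1}{2}JWW\T J$ contribution is accounted for). The only thing to be careful about is the bound $t \le T_2$, which holds since $S^{warmup} \subset [0,T_2]$, so the coefficient estimate $1 + \frac{5t}{T_2} \le 6$ is legitimate.
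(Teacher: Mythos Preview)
Your proposal is correct and matches the paper's own proof essentially verbatim: invoke \Cref{E_bound} for $\norm{E_t}\le\frac{\delta^2}{2}\norm{Y}$, use item~\ref{ind:B} of the induction hypothesis together with $t\le T_2$ to get $\norm{\Bt t}\le 6\delta^2\norm{Y}$, and add. The only addition you make is spelling out the step $1+\frac{5t}{T_2}\le 6$, which the paper leaves implicit.
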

\begin{proof}
  $\norm{E_t} \le \frac{\delta^2}{2}\norm{Y}$ by \Cref{E_bound} and $\norm{\Bt t} \le 6\delta^2\norm{Y}$ by \Cref{ind:B} of the induction hypothesis.
\end{proof}

Now we prove the induction steps for \Cref{ind:W,ind:B,ind:PJW,ind:NW,ind:PPX} of \Cref{ind:def}.

\begin{proposition}
  If $\norm{W_t} = \frac{3}{2}\sqrt{\norm{Y}}$ and $u,v$ is a top singular pair of $W_t$. Then $u\T \frac{d}{dt} W_t v < 0$.
\end{proposition}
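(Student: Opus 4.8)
The plan is to differentiate $W_t$ using the perturbed gradient flow $\dot W_t = (R_t + E_t)W_t$ and show that, at the boundary $\norm{W_t} = \tfrac32\sqrt{\norm{Y}}$, the top singular value is strictly decreasing. With $u,v$ a top singular pair of $W_t$ (so $W_tv = \norm{W_t}u$, $u\T W_t = \norm{W_t}v\T$, $\norm{u}=\norm{v}=1$), I compute
\[
  u\T \dot W_t v = u\T (R_t + E_t) W_t v = \norm{W_t}\, u\T (R_t + E_t) u.
\]
So it suffices to show $u\T (R_t + E_t) u < 0$. Using the definition $R_t = \hat Y - \tfrac12(W_tW_t\T - JW_tW_t\T J)$, I would split this into three contributions: the target term $u\T \hat Y u$, the quadratic term $-\tfrac12 u\T(W_tW_t\T - JW_tW_t\T J)u$, and the perturbation term $u\T E_t u$.

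The key point is that the quadratic term is strongly negative when $\norm{W_t}$ is at its maximum. Since $u\T W_tW_t\T u = \norm{W_t\T u}^2 = \norm{W_t}^2\norm{v}^2 = \norm{W_t}^2 = \tfrac94\norm{Y}$, while $u\T JW_tW_t\T J u = \norm{W_t\T J u}^2 \ge 0$, I get $-\tfrac12 u\T(W_tW_t\T - JW_tW_t\T J)u \le -\tfrac12 \norm{W_t}^2 + \tfrac12\norm{W_t\T J u}^2$. The issue is that $\norm{W_t\T J u}$ could in principle be as large as $\norm{W_t} = \tfrac32\sqrt{\norm{Y}}$, which would cancel the gain — so I need a better bound on $\norm{W_t\T J u}$. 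The natural tool is the imbalance bound from \Cref{ind:B}: $\norm{W_t W_t\T - JW_t W_t\T J}$ relates to $\norm{\Bt t} = \norm{W_t\T J W_t}$, which is $O(\delta^2\norm{Y})$. More precisely, using \Cref{tiny_XmJXJ}-type reasoning, $\norm{W_tW_t\T - JW_tW_t\T J} \le 2\norm{W_t\T J W_t} \le 12\delta^2\norm{Y}$, so $u\T JW_tW_t\T J u \le u\T W_tW_t\T u + 12\delta^2\norm{Y} = \tfrac94\norm{Y} + 12\delta^2\norm{Y}$; combined with $\tfrac12 u\T(W_tW_t\T - JW_tW_t\T J)u \ge \tfrac94\norm{Y} - 6\delta^2\norm{Y}$ being wrong-signed, wait — I want a lower bound on $\tfrac12 u\T(W_tW_t\T - JW_tW_t\T J)u$, which is $\ge \tfrac12(\tfrac94\norm{Y}) - \tfrac12(12\delta^2\norm{Y}) = \tfrac98\norm{Y} - 6\delta^2\norm{Y}$. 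So the quadratic term contributes at most $-\tfrac98\norm{Y} + 6\delta^2\norm{Y}$.

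For the remaining terms: $u\T \hat Y u \le \norm{\hat Y} = \norm{Y}$, and $u\T E_t u \le \norm{E_t} \le \tfrac{\delta^2}{2}\norm{Y}$ by \Cref{E_bound}. Assembling, $u\T(R_t+E_t)u \le \norm{Y} - \tfrac98\norm{Y} + 6\delta^2\norm{Y} + \tfrac12\delta^2\norm{Y} = -\tfrac18\norm{Y} + \tfrac{13}{2}\delta^2\norm{Y} < 0$, using $\delta \le \tfrac{1}{64}$. Hence $u\T \dot W_t v = \norm{W_t}\, u\T(R_t+E_t)u < 0$ as claimed. The main obstacle is getting the clean bound on $\norm{W_t\T J u}$ (equivalently controlling the imbalance-driven term), which I expect to handle via \Cref{ind:B} and a standard lemma bounding $\norm{XX\T - JXX\T J}$ by $2\norm{X\T J X}$; the rest is routine arithmetic with the small constants $\delta$.
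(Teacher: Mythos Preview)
Your overall structure---compute $u\T \dot W_t v$, expand $R_t$, and show the cubic term $-\tfrac{1}{2}\norm{W_t}^3$ dominates $\norm{Y}\norm{W_t}$ plus small errors---matches the paper. But the step you flag as ``the main obstacle'' is genuinely broken: the inequality $\norm{WW\T - JWW\T J} \le 2\norm{W\T J W}$ is false. In block form with $W = \begin{pmatrix}U\\V\end{pmatrix}$, the left side is $2\norm{UV\T}$ while the right side is $2\norm{U\T U - V\T V}$; taking $U = V$ makes the right side vanish while the left side is $2\norm{UU\T}$. \Cref{tiny_XmJXJ} only gives $\norm{X - JXJ} \le 2\norm{X}$ (or $\le \norm{X}$ for PSD $X$), not a bound in terms of $X\T J X$, so it does not help here.

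The fix, which is exactly what the paper does, is to \emph{not} factor $W_t v = \norm{W_t}u$ out of the $JW_tW_t\T J$ term. Keep $\tfrac{1}{2}u\T JW_tW_t\T J W_t v$ and group it as $\tfrac{1}{2}(u\T J W_t)(W_t\T J W_t)v$, so the imbalance $\Bt t = W_t\T J W_t$ appears directly; this gives the bound $\tfrac{1}{2}\norm{W_t}\norm{\Bt t}$ at once. Then \Cref{ind:B} and \Cref{E_bound} finish:
\[
  u\T \dot W_t v \le \Bigl(\norm{Y} + \tfrac{1}{2}\norm{\Bt t} + \norm{E_t}\Bigr)\norm{W_t} - \tfrac{1}{2}\norm{W_t}^3 \le \bigl(1 + \tfrac{7}{2}\delta^2\bigr)\norm{Y}\cdot\tfrac{3}{2}\sqrt{\norm{Y}} - \tfrac{27}{16}\norm{Y}^{3/2} < 0.
\]
In your framing, this regrouping actually proves $\norm{W_t\T Ju}^2 \le \norm{\Bt t}$ (since $\norm{W_t}\norm{W_t\T Ju}^2 = (W_t\T Ju)\T(\Bt t)v \le \norm{W_t}\norm{\Bt t}$), which is far stronger than the $\norm{W_t}^2 + O(\delta^2\norm{Y})$ bound you were aiming for.
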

\begin{proof}
  \begin{align*}
    u\T \frac{d}{dt} W_t v &= u\T (R_t+E_t)W_t v\\
                           &= u\T \left(\hat Y + E_t - \frac{1}{2}(W_tW_t\T - JW_tW_t\T J)\right)W_t v\\
                           &\le (\norm{Y}+\norm{E_t}+\norm{\Bt t})\norm{W_t} - \frac{1}{2}u\T W_tW_t\T W_t v\\
                           &= (\norm{Y}+\norm{E_t} + \frac{1}{2}\norm{\Bt t})\norm{W_t} - \frac{1}{2}\norm{W_t}^3\\
                           &\le \left(\norm{Y} + \frac{7}{2}\delta^2\norm{Y}\right)\frac{3}{2}\sqrt{\norm{Y}} - \frac{27}{16}\norm{Y}^\frac{3}{2} < 0.
  \end{align*}
  The second identity expands $R_t$. Next, the inequality bounds products by products of norms, using $\norm{u} = \norm{v} = \norm{J} = 1$ and $\norm{Y} = \norm{\hat Y}$ (\Cref{tiny_dilation}). Then, we use that $u,v$ is a top singular pair of $W_t$. The penultimate inequality applies \Cref{EB_bound} and the assumption on $\norm{W_t}$. Finally, we use the assumption $\delta \le \frac{1}{64}$ from \eqref{delta_bound}.
\end{proof}

\begin{corollary}\label{cor:W}
  There exists $\hat t > t$ such that $\norm{W_\tau} \le \frac{3}{2}\sqrt{\norm{Y}}$ for all $\tau \in [t,\hat t]$.
\end{corollary}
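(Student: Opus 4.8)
The plan is to argue by cases according to the value of $\norm{W_t}$. By the induction hypothesis \eqref{ind:t} we have $t \in S^{warmup}$, so \Cref{ind:W} of \Cref{ind:def} already gives $\norm{W_t} \le \frac{3}{2}\sqrt{\norm{Y}}$. If this inequality is strict, then since $\tau \mapsto \norm{W_\tau}$ is continuous there is some $\hat t > t$ with $\norm{W_\tau} < \frac{3}{2}\sqrt{\norm{Y}}$ for all $\tau \in [t,\hat t]$, and the claim holds. So the only case that requires work is $\norm{W_t} = \frac{3}{2}\sqrt{\norm{Y}}$, where we invoke the preceding Proposition.

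In that case the preceding Proposition tells us that $u\T \dot W_t v < 0$ for \emph{every} top singular pair $u,v$ of $W_t$, where $\dot W_t = (R_t + E_t)W_t$ denotes the right derivative. The remaining task is to convert this pointwise bound over the — possibly non-unique — top singular pairs into the statement that $\norm{W_\tau}$ does not exceed $\frac{3}{2}\sqrt{\norm{Y}}$ just to the right of $t$. This is exactly the situation handled by the general technical tools of \Cref{s:tech}, applied in the same way as for $\norm{\tilde W_t}$ in \Cref{s:tW} (cf.\ \Cref{cor:tW}): the upper right Dini derivative of the largest singular value is bounded by $\max_{u,v}u\T \dot W_t v$ over top singular pairs of $W_t$, so here $D^+\norm{W_t} < 0$. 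By the definition of the Dini derivative this produces $\hat t > t$ with $\norm{W_\tau} < \norm{W_t} = \frac{3}{2}\sqrt{\norm{Y}}$ for all $\tau \in (t,\hat t]$, and together with $\norm{W_t} = \frac{3}{2}\sqrt{\norm{Y}}$ this gives $\norm{W_\tau} \le \frac{3}{2}\sqrt{\norm{Y}}$ on all of $[t,\hat t]$, as desired.

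The only genuine obstacle is this last step. If the top singular value of $W_t$ were always simple, then $\frac{d}{dt}\norm{W_\tau}$ would be well defined and equal to $u\T \dot W_\tau v$, and the conclusion would follow immediately from the strict sign in the preceding Proposition together with Grönwall; the real subtlety is the possible multiplicity of the top singular value at $t$ (and the fact that $W_\tau$ is only piecewise smooth, so right derivatives must be used throughout), which is precisely why the argument is routed through the tools of \Cref{s:tech} rather than through a direct differentiation of $\norm{W_\tau}$. Everything else is routine continuity.
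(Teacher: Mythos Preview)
Your proof is correct and follows exactly the paper's approach: the paper simply applies \Cref{tech_norm} with $X \coloneqq W$ and $c = \frac{3}{2}\sqrt{\norm{Y}}$, which internally does precisely the case split and top-singular-pair argument you spell out. Your exposition is just a more detailed unpacking of what \Cref{tech_norm} already encapsulates.
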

\begin{proof}
  Apply \Cref{tech_norm} with $X \coloneqq W$ and $c = \frac{3}{2}\sqrt{\norm{Y}}$.
\end{proof}

\begin{proposition}\label{prop:B}
  There exists $\hat t > t$ such that $\norm{\Bt \tau} \le \delta^2\norm{Y}+\frac{5\tau}{T_2}\delta^2\norm{Y}$ for all $\tau \in [t,\hat t]$.
\end{proposition}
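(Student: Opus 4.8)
The plan is to reuse the recipe of \Cref{s:tW}: differentiate $\Bt t$, evaluate the derivative on a top singular pair, bound it, and close via the tools of \Cref{s:tech}. The structural point driving the proof is that the imbalance $\Bt t = U_t\T U_t - V_t\T V_t$ is conserved by the \emph{unperturbed} gradient flow, so its growth comes entirely from the perturbation $E_t$.

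First I would differentiate. Using $\dot W_t = (R_t+E_t)W_t$ and the symmetry of both $R_t$ and $E_t$,
\begin{align*}
  \frac{d}{dt}\Bt t = W_t\T\big[(R_t+E_t)J + J(R_t+E_t)\big]W_t.
\end{align*}
Expanding $R_t = \hat Y - \tfrac12(W_tW_t\T - JW_tW_t\T J)$ and using $J^2 = I$ together with $J\hat Y + \hat Y J = 0$ (equivalent to $J\hat Y J = -\hat Y$, \Cref{tiny_dilation}), a one-line computation gives $R_t J + J R_t = 0$, hence
\begin{align*}
  \frac{d}{dt}\Bt t = W_t\T\big(E_t J + J E_t\big)W_t.
\end{align*}

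Next, assuming $\Bt t \neq 0$ (otherwise the induction step is immediate by continuity), I would take a top singular pair $u,v$ of the symmetric matrix $\Bt t$, so $u\T \Bt t v = \norm{\Bt t}$, and estimate
\begin{align*}
  \frac{d}{dt}\big(u\T \Bt t v\big) = u\T W_t\T\big(E_t J + J E_t\big)W_t v \le 2\norm{W_t}^2\norm{E_t} \le 2\cdot\frac94\norm{Y}\cdot\frac{\delta^2}{T_2} < \frac{5\delta^2\norm{Y}}{T_2},
\end{align*}
using $\norm{J}=1$, \Cref{ind:W} of the induction hypothesis ($\norm{W_t}\le\frac32\sqrt{\norm{Y}}$), and \Cref{E_bound} ($\norm{E_t}\le\frac{\delta^2}{T_2}$). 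Since the target bound $\delta^2\norm{Y}(1+\frac{5\tau}{T_2})$ has slope exactly $\frac{5\delta^2\norm{Y}}{T_2}$ in $\tau$, and the induction hypothesis supplies this bound at time $t$, an application of the time-dependent-threshold analogue of \Cref{tech_norm} from \Cref{s:tech} produces the desired $\hat t > t$.

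I do not expect a real obstacle here; the cancellation $R_tJ + JR_t = 0$ does all the work. The only points needing care are that the operator norm of the symmetric matrix $\Bt t$ may be attained at its most negative eigenvalue (so ``top singular pair'' must be read in the absolute-value sense, which is exactly the paper's convention), and that non-uniqueness of the extremal eigenvalue is why the final step goes through the tools of \Cref{s:tech} rather than a bare Grönwall argument on $\norm{\Bt t}$.
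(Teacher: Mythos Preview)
Your derivation matches the paper's proof almost exactly: differentiate $\Bt t$, use $R_tJ+JR_t=0$ to kill the $R_t$ contribution, and bound by $2\norm{W_t}^2\norm{E_t}\le\frac{9}{2}\frac{\delta^2\norm{Y}}{T_2}<\frac{5\delta^2\norm{Y}}{T_2}$. The only difference is in the wrap-up. Your estimate $u\T W_t\T(E_tJ+JE_t)W_t v\le 2\norm{W_t}^2\norm{E_t}$ uses nothing about $u,v$ being a singular pair, so it is in fact a bound on the full operator norm $\norm{\frac{d}{dt}\Bt t}$; the paper simply states this directly and finishes with the elementary increment estimate $\norm{\Bt\tau}\le\norm{\Bt t}+(\tau-t)\norm{\frac{d}{dt}\Bt t}+o(\tau-t)$, never invoking the machinery of \Cref{s:tech} for this item. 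By contrast, the ``time-dependent-threshold analogue of \Cref{tech_norm}'' you appeal to is not actually stated in \Cref{s:tech}, so your closing step would require either formulating and proving such a variant or rescaling as in \Cref{cor:tW}; the paper's direct route sidesteps this entirely.
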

\begin{proof}
  We differentiate $\frac{d}{dt}(\Bt t) = W_t\T (R_t+E_t)\T J W_t + W_t\T J (R_t+E_t) W_t$. Noting that $R_t$ is symmetric, and $R_t J+JR_t = 0$ by \Cref{tiny_dilation}, we simplify $\frac{d}{dt}(\Bt t) = W_t\T (E_t\T J + JE_t) W_t$. Using $\norm{J} = 1$, $\norm{W_t} \le \frac{3}{2}\sqrt{\norm{Y}}$ by the induction hypothesis \eqref{ind:t} and $\norm{JE_t+E_t\T J} \le \frac{2\delta^2}{T_2}$ by \eqref{JE_bound}, this implies
  \begin{align}\label{dB_bound}
    \norm{\frac{d}{dt}(\Bt t)} &= \norm{W_t\T (E_t\T J + JE_t) W_t}
                              \le \norm{W_t}^2\norm{JE_t+E_t\T J}
                              \le \frac{9\delta^2}{2T_2}\norm{Y}.
  \end{align}
  Hence, we can find $\hat t > t$ small enough that for any $\tau \in [t,\hat t]$, we have
  \begin{align*}
    \norm{\Bt \tau} &\le \norm{\Bt t} + (\tau-t)\norm{\frac{d}{dt}(\Bt t)} + o(\tau-t)\\
                    &\le \left(1+\frac{5t}{T_2}\right)\delta^2\norm{Y}+(\tau-t)\frac{9\delta^2}{2T_2}\norm{Y} + o(\tau-t) \le \left(1+\frac{5\tau}{T_2}\right)\delta^2\norm{Y}.
  \end{align*}
  The first inequality is a property of the right derivative and the triangle inequality. The second inequality uses the induction hypothesis \eqref{ind:t} to bound $\norm{\Bt t}$, and \eqref{dB_bound}. The final inequality bounds the little-o when picking $\hat t$ small enough.
\end{proof}

\begin{proposition}
  Assume $\norm{\PA JW_t} = \frac{\delta}{3\sqrt\alpha}\sqrt{\norm{Y}}$ and $u,v$ is a top singular pair of $\PA JW_t$. Then $\frac{d}{dt}(u\T \PA J W_t v) < 0$.
\end{proposition}
\begin{proof}
  \begin{align}
    &\frac{d}{dt}(u\T \PA J W_t v) = u\T \PA J (R_t+E_t)W_t v\\
                                  &\quad= u\T \PA J \left(\hat Y + E_t - \frac{1}{2}(W_tW_t\T - JW_tW_t\T J)\right)W_t v \nonumber\\
                                  &\quad\le u\T \PA J\hat Y W_tv - \frac{1}{2}u\T \PA J W_t W_t\T W_t v + \norm{E_t}\norm{W_t}+\frac{1}{2}\norm{W_t}\norm{\Bt t}.\label{eq:two_terms}
  \end{align}
  The second equality expands the definition of $R_t$. The inequality bounds products by products of norms, using $\norm{u} = \norm{v} = \norm{J} = 1$.

  We will bound the first two terms as follows. For the second term, we can apply the property $u\T \PA J W_t = \norm{\PA J W_t} v\T$ of the top singular pair $u,v$. This implies 
  \[-\frac{1}{2}u\T \PA J W_t W_t\T W_t v = -\frac{1}{2}\norm{\PA J W_t} v\T W_t\T W_t v \le 0.\]
  For the first term, we have
  \begin{align*}
    u\T \PA J\hat Y W_tv &= -u\T \PA \hat Y J W_tv = -u\T \PA \hat Y \PA\T\PA J W_tv\\
                         &= -\norm{\PA J W_t} u\T \PA \hat Y \PA\T u \le -Y_{rr}\norm{\PA J W_t}.
  \end{align*}
  The first equality uses $J\hat Y = -\hat Y J$ by \Cref{tiny_dilation}. Next, $\PA \hat Y = \PA\hat Y \PA\T\PA$ by \Cref{tiny_PPYP}. The third equality uses $\PA J W_t v = \norm{\PA J W_t}u$ for the top singular pair $u,v$. The final inequality follows from $\lambda_r\left(\PA \hat Y \PA\T\right) = Y_{rr}$ by \Cref{tiny_PYP}.

  Applying the bounds on the first two terms of \eqref{eq:two_terms}, we hence have
  \begin{align*}
    \frac{d}{dt}(u\T \PA J W_t v) &\le -Y_{rr} \norm{\PA J W_t} + \left(\norm{E_t} + \frac{1}{2}\norm{\Bt t}\right)\norm{W_t}\\
                                  &\le -Y_{rr} \frac{\delta}{3\sqrt\alpha}\sqrt{\norm{Y}}+\frac{21}{4}\delta^2\norm{Y}^\frac{3}{2} < 0.
  \end{align*}
  The last two inequalities follow from $W_t \le \frac{3}{2}\sqrt{\norm{Y}}$ from the induction hypothesis \eqref{ind:t}, \Cref{EB_bound} and $\delta \le \frac{Y_{rr}}{64\sqrt\alpha\norm{Y}}$ by assumption \eqref{delta_bound}.
\end{proof}

\begin{corollary}\label{cor:PJW}
  There exists $\hat t > t$ such that $\norm{\PA J W_\tau} \le \frac{\delta}{3\sqrt\alpha}\sqrt{\norm{Y}}$ for all $\tau \in [t,\hat t]$.
\end{corollary}
\begin{proof}
  Apply \Cref{tech_norm} with $X \coloneqq \PA J W$ and $c = \frac{\delta}{3\sqrt\alpha}\sqrt{\norm{Y}}$.
\end{proof}

\begin{proposition}\label{prop:NW_bound}
  Assume $\norm{\PN W_t} = \delta\sqrt{8\norm{Y}}$. Let $u,v$ be a top singular pair of $\PN W_t$. Then $\frac{d}{dt}(u\T \PN W_t v) < 0$.
\end{proposition}
\begin{proof}
  \begin{align*}
    \frac{d}{dt}(u\T \PN W_t v) &= u\T \PN (R_t+E_t)W_t v \nonumber\\ 
                                &= u\T \PN \left(\hat Y + E_t - \frac{1}{2}(W_tW_t\T - JW_tW_t\T J)\right)W_t v \nonumber\\
                                &\le -\frac{1}{2}u\T \PN W_t W_t\T W_t v + \norm{E_t}\norm{W_t v}+\frac{1}{2}\norm{\PN J W_t}\norm{\Bt t}\\
                                &= -\frac{1}{2}\norm{\PN W_t} \norm{W_t v}^2 + \norm{E_t}\norm{W_t v}+\frac{1}{2}\norm{\PN W_t}\norm{\Bt t}\\
                                &\le \left(-\frac{1}{2}\norm{\PN W_t}^2+\norm{E_t}+\frac{1}{2}\norm{\Bt t}\right)\norm{W_t v} < 0.
  \end{align*}
  The second equality expands the definition of $R_t$. The first inequality bounds products by products of norms and uses $\PN \hat Y = 0$. The next equality uses the property $u\T \PN W_t = \norm{\PN W_t}v\T$ of the top singular pair $u,v$, along with $\norm{\PN J W_t} = \norm{\PN W_t}$ by \Cref{tiny_NJX}. The second inequality applies the inequality $\norm{W_tv} \ge \norm{\PN W_tv} = \norm{\PN W_t}$. The final inequality uses $\frac{1}{2}\norm{\PN W_t}^2 = 4\delta^2\norm{Y} > \norm{E_t}+\frac{1}{2}\norm{\Bt t}$ by the assumption on $\norm{\PN W_t}$ and \Cref{EB_bound}.
\end{proof}

\begin{corollary}\label{cor:NW}
  There exists $\hat t > t$ such that $\norm{\PN W_\tau} \le \delta\sqrt{8\norm{Y}}$ for all $\tau \in [t,\hat t]$.
\end{corollary}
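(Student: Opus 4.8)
The plan is to mirror the structure already used for \Cref{cor:W} and \Cref{cor:PJW}: the preceding proposition supplies the crucial one-sided derivative inequality, and the only remaining task is to feed it into the generic technical tool \Cref{tech_norm}. Concretely, I would apply \Cref{tech_norm} with $X \coloneqq \PN W$ and threshold $c = \delta\sqrt{8\norm{Y}}$.

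First, observe that by the induction hypothesis \eqref{ind:t} together with \Cref{ind:NW} of \Cref{ind:def}, we have $\norm{\PN W_t} \le \delta\sqrt{8\norm{Y}}$, so $X_t = \PN W_t$ starts at or below the threshold. If $\norm{\PN W_t} < \delta\sqrt{8\norm{Y}}$, then continuity of $\tau \mapsto \norm{\PN W_\tau}$ by itself yields a short interval $[t,\hat t]$ on which the bound persists. The only case that requires genuine work is $\norm{\PN W_t} = \delta\sqrt{8\norm{Y}}$, where one must rule out an immediate increase of $\norm{\PN W_\tau}$ past $c$.

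For that case, the preceding proposition establishes exactly what \Cref{tech_norm} needs: for every top singular pair $u,v$ of $\PN W_t$ one has $\frac{d}{dt}(u\T \PN W_t v) < 0$. When the top singular value of $\PN W_t$ is simple, $\frac{d}{dt}\norm{\PN W_t}$ is well defined and equals $\frac{d}{dt}(u\T \PN W_t v) < 0$, so a Grönwall-type argument produces $\hat t$; when it is not simple, the abstract machinery of \Cref{s:tech} absorbs the non-smoothness of the operator norm. Either way, \Cref{tech_norm} outputs the desired $\hat t > t$ with $\norm{\PN W_\tau} \le \delta\sqrt{8\norm{Y}}$ for all $\tau \in [t,\hat t]$.

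The only (minor) obstacle is the possible non-uniqueness of the top singular value of $\PN W_t$, which would otherwise obstruct differentiating $\norm{\PN W_t}$ directly; this difficulty has already been isolated and handled inside \Cref{tech_norm}, so the present argument collapses to a one-line invocation of that tool with the stated choice of $X$ and $c$.
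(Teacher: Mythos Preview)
Your proposal is correct and matches the paper's own proof exactly: apply \Cref{tech_norm} with $X \coloneqq \PN W$ and $c = \delta\sqrt{8\norm{Y}}$. The additional explanation you give of why the hypotheses of \Cref{tech_norm} are met is accurate but more detailed than the paper's one-line invocation.
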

\begin{proof}
  Apply \Cref{tech_norm} with $X \coloneqq \PN W_\tau$ and $c = \delta\sqrt{8\norm{Y}}$.
\end{proof}

\begin{proposition}
  Let $X_t \coloneqq \hat Y + \frac{1}{2}JW_tW_t\T J$. Assume \[\lambda_1(\PP X_t \PP\T) = 2\delta\norm{Y}.\] Let $v$ be an eigenvector to the largest eigenvalue of $\PP X_t \PP\T$. Then $\frac{d}{dt}\left(v\T \PP X_t \PP\T v\right) < 0$.
\end{proposition}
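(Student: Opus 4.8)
Below is the proof plan.

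\medskip

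We may assume $\norm v = 1$. Write $p \coloneqq \PP\T v$ (a unit vector), $g \coloneqq W_t\T J p$, and $A \coloneqq \PA W_t$; recalling that $\PP$ stacks $\PN$ over $\PA J$, let $v_A \coloneqq \PA J p \in \R^r$ be the part of $v$ in the $\PA J$-block and $v_N$ the remaining part, so that $g = A\T v_A + (\PN J W_t)\T v_N$. Holding $v$ fixed and using that $\dot X_t = \tfrac12 J(\dot W_t W_t\T + W_t \dot W_t\T)J$ is symmetric,
\[
  \frac{d}{dt}\left(v\T\PP X_t\PP\T v\right) = v\T\PP\dot X_t\PP\T v = p\T J\dot W_t W_t\T J p = p\T J(R_t+E_t)W_t g .
\]
Expanding $R_t = \hat Y - \tfrac12(W_tW_t\T - JW_tW_t\T J)$, using $J^2 = I$, the identity $W_tW_t\T J W_t = W_t(\Bt t)$ (so this cubic term carries the small imbalance), and $p\T J W_t = g\T$, this becomes
\begin{align*}
  \frac{d}{dt}\left(v\T\PP X_t\PP\T v\right) = \underbrace{p\T J\hat Y W_t g}_{(\mathrm A)} - \tfrac12\norm{W_t g}^2 + \underbrace{\tfrac12\, p\T W_t(\Bt t)g}_{(\mathrm B)} + \underbrace{p\T J E_t W_t g}_{(\mathrm C)} .
\end{align*}

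The core step is to evaluate $(\mathrm A)$ exactly. Put $D \coloneqq \PA\hat Y\PA\T$, so that $Y_{rr}I_r \preceq D \preceq \norm Y I_r$. Using $\PN\hat Y = 0$, $\hat Y J = -J\hat Y$, $\PA\hat Y = D\PA$, and — from unitarity of $P$ — $\PA\PA\T = I_r$ and $\PA J\PN\T = 0$ (the diagonalization set up in \Cref{s:setup}), one computes $\hat Y p = -J\PA\T D v_A$ and $\PA J X_t p = -D v_A + \tfrac12 A g$, hence $(\mathrm A) = v_A\T D A g$. Moreover $\PP X_t\PP\T = \PP\hat Y\PP\T + \tfrac12(\PP J W_t)(\PP J W_t)\T$, where $\PP\hat Y\PP\T \preceq 0$ (its $\PA J$-diagonal block is $-D$, its $\PN$-block $0$) and $\PP J W_t$ has $\PN$-block $\PN J W_t$ and $\PA J$-block $A$; so the $\PA J$-row of the hypothesis $\PP X_t\PP\T v = 2\delta\norm Y\, v$ reads $\tfrac12 A g = (D + 2\delta\norm Y I_r)v_A$. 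Substituting $A g = (2D + 4\delta\norm Y I_r)v_A$ into $(\mathrm A)$, and using $\norm{W_t g}^2 = \norm{A g}^2 + \norm{\PP W_t g}^2$ (since $\PA\T\PA + \PP\T\PP = I$ and $\PA W_t g = A g$), the aligned part collapses to
\begin{align*}
  (\mathrm A) - \tfrac12\norm{W_t g}^2 = -\tfrac12\norm{\PP W_t g}^2 - 4\delta\norm Y\, v_A\T D v_A - 8\delta^2\norm Y^2\norm{v_A}^2 \;\le\; -4\delta\norm Y\, Y_{rr}\norm{v_A}^2 ,
\end{align*}
where we used $v_A\T D v_A \ge Y_{rr}\norm{v_A}^2$. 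This dissipation drives the claim.

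It remains to show $(\mathrm B)$ and $(\mathrm C)$ are negligible next to $4\delta\norm Y Y_{rr}\norm{v_A}^2$. For $(\mathrm B)$ use $\norm{p\T W_t} \le \norm{\PP W_t} \le 3\delta\sqrt{\norm Y}$ (from \Cref{ind:NW,ind:PJW}), $\norm{\Bt t} \le 6\delta^2\norm Y$ (\Cref{ind:B}) and $\norm g \le \norm{W_t} \le \tfrac32\sqrt{\norm Y}$ (\Cref{ind:W}), giving $|(\mathrm B)| \le \tfrac{27}{2}\delta^3\norm Y^2$. For $(\mathrm C)$, $|(\mathrm C)| \le \norm{E_t}\norm{W_t g} \le \norm{E_t}\bigl(\norm{A g} + \norm{\PP W_t g}\bigr) \le 2.1\,\norm{E_t}\norm Y\norm{v_A} + \norm{E_t}\norm{\PP W_t g}$ (using $\norm{A g} \le 2(1+2\delta)\norm Y\norm{v_A}$), and $\norm{E_t}\norm{\PP W_t g} \le \tfrac12\norm{\PP W_t g}^2 + \tfrac12\norm{E_t}^2$, so its first piece is absorbed by the $-\tfrac12\norm{\PP W_t g}^2$ from the previous display. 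Finally, $\norm{v_A}$ must be bounded below: $2\delta\norm Y = v\T\PP X_t\PP\T v = -v_A\T D v_A + \tfrac12\norm g^2 \le \tfrac12\norm g^2$ gives $\norm g \ge 2\sqrt{\delta\norm Y}$, and since $\norm{(\PN J W_t)\T v_N} \le \norm{\PN W_t} \le 2\sqrt2\,\delta\sqrt{\norm Y} < \tfrac12\norm g$ (for $\delta \le \tfrac1{64}$) we get $\norm{A\T v_A} \ge \tfrac32\sqrt{\delta\norm Y}$, hence $\norm{v_A} \ge \norm{A\T v_A}/\norm{W_t} \ge \sqrt\delta$. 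Plugging $\norm{v_A}^2 \ge \delta$, $\norm{v_A} \le 1$, and $\norm{E_t} \le \tfrac{\delta^2}{T_2} \le \tfrac{\delta^2 Y_{rr}}{50}$ (from \Cref{E_bound} together with $T_2 Y_{rr} = 5\ln(Y_{rr}/\epsilon^2) \ge 50$) into $(\mathrm B)+(\mathrm C)$, and using $\delta \le \tfrac1{64\alpha\kappa} \le \tfrac1{64\kappa}$ so that every positive term is a small multiple of $\delta^2\norm Y\, Y_{rr}$, the positive contributions sum to well under $\delta^2\norm Y\, Y_{rr}$, strictly dominated by $-4\delta^2\norm Y\, Y_{rr} \le -4\delta\norm Y\, Y_{rr}\norm{v_A}^2$. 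Hence $\frac{d}{dt}(v\T\PP X_t\PP\T v) < 0$.

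The main obstacle is the exact evaluation of $(\mathrm A)$ together with making the constants close. The former rests on spotting that the eigenvalue equation collapses $(\mathrm A) = v_A\T D A g$ into $\tfrac12\norm{A g}^2$ minus genuinely negative corrections, so that $(\mathrm A) - \tfrac12\norm{W_t g}^2$ is not merely $\le 0$ but quantitatively negative through $\norm{v_A}$. The latter needs two sharp inputs that are easy to miss: the $Y_{rr}$ factor in the bound on $\norm{E_t}$ (so that $\kappa = \norm Y/Y_{rr}$ cancels), and the lower bound $\norm{v_A} \ge \sqrt\delta$, which quantifies how strongly an eigenvector with eigenvalue as large as $2\delta\norm Y$ must overlap the signal block $A = \PA W_t$.
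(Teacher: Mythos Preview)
Your proof is correct but takes a genuinely different route from the paper's.

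The paper exploits a single algebraic identity: writing the main term as
\[
  v\T\PP J\Bigl(\hat Y-\tfrac12 W_tW_t\T\Bigr)W_tW_t\T J\PP\T v \;=\; 2\,v\T\PP X_t(\hat Y-X_t)\PP\T v \;=\; -2\norm{X_t\PP\T v}^2 + 2\,v\T\PP X_t\hat Y\PP\T v,
\]
the eigenvector property gives $-2\norm{X_t\PP\T v}^2 \le -2\lambda_1^2(\PP X_t\PP\T) = -8\delta^2\norm{Y}^2$ directly, and the cross term is $\le 0$ because $\PP\hat Y\PP\T\preceq 0$ and $\PP\hat Y\PP\T$ commutes with the eigenvector through $\hat Y\PP\T=\PP\T\PP\hat Y\PP\T$. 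The perturbation is then bounded crudely by $(\norm{E_t}+\tfrac12\norm{\Bt t})\norm{W_t}^2\le\tfrac{63}{8}\delta^2\norm{Y}^2$, and $-8+\tfrac{63}{8}<0$ finishes. No coordinate decomposition, no lower bound on any component of $v$, and the main negative term is of order $\delta^2\norm{Y}^2$ (independent of $Y_{rr}$).

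You instead resolve everything in the $(\PN,\PA J)$ coordinates of $\PP$, read off $Ag=(2D+4\delta\norm Y I)v_A$ from the eigenvalue equation, and compute $(\mathrm A)-\tfrac12\norm{W_tg}^2$ exactly. Your main dissipation $-4\delta\norm Y\,v_A\T D v_A$ then depends on $\norm{v_A}$, forcing the additional step $\norm{v_A}\ge\sqrt\delta$ and giving only $-4\delta^2\norm Y\,Y_{rr}$ as the negative budget; this in turn obliges you to use the sharper $\norm{E_t}\le\delta^2 Y_{rr}/50$ rather than the paper's $\norm{E_t}\le\tfrac{\delta^2}{2}\norm Y$. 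Your argument buys a more transparent, geometric picture (the contraction is driven by the overlap of the eigenvector with the signal block $\PA W_t$), at the cost of the extra $\norm{v_A}$ lower bound and a tighter accounting. The paper's identity is slicker and yields a stronger negative term with less effort, but your approach is a valid and instructive alternative.
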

\begin{proof}
  \begin{align}
    &\frac{d}{dt}\left(v\T \PP X_t \PP\T v\right) = \frac{1}{2}v\T \PP J\left((R_t+E_t)W_t W_t\T+W_t W_t\T(R_t+E_t)\T\right)J\PP\T v \nonumber\\
    &\quad= v\T \PP J(R_t+E_t)W_t W_t\T J\PP\T v \nonumber\\
    &\quad= v\T \PP J\left(\hat Y - \frac{1}{2}(W_tW_t\T-JW_tW_t\T J) + E_t\right) W_t W_t\T J\PP\T v \nonumber\\
    &\quad\le v\T \PP J\left(\hat Y - \frac{1}{2}W_tW_t\T\right)W_t W_t\T J\PP v + \left(\norm{E_t}+\frac{1}{2}\norm{\Bt t}\right)\norm{W}^2.\label{eq:first_term}
  \end{align}
  For the first equality, differentiate $X_t$ by the product rule. The second equality uses the equality $x\T Z x = x\T Z\T x$ for vectors $x$ and square matrices $Z$. The third equality expands the definition of $R_t$. The final inequality bounds products by products of norms.

  We can simplify the first term as follows:
  \begin{align*}
    v\T \PP J\left(\hat Y - \frac{1}{2}W_tW_t\T\right)W_t W_t\T J\PP\T v
    &= v\T \PP \left(-\hat Y - \frac{1}{2}JW_tW_tJ\T\right)J W_t W_t\T J\PP\T v\\
    = 2 v\T \PP X_t(\hat Y-X_t)\PP\T v &= -2 \norm{X_t\PP\T v}^2 + 2v\T \PP X_t\hat Y\PP\T v.
  \end{align*}
  The first equality uses $J \hat Y = -\hat Y J$ by \Cref{tiny_dilation} and $J^2 = I$. Next, the definition $X_t \coloneqq \hat Y + \frac{1}{2}JW_tW_t\T J$ is used.

  The first term can be bounded by $\norm{\PP} = 1$ and $v$ being an eigenvector of $\PP X_t\PP\T$: \[-2\norm{X_t\PP\T v}^2 \le -2\norm{\PP X_t\PP\T v}^2 = -2\lambda_1^2(\PP X_t\PP\T).\]
  The other term is non-positive because
  \begin{align*}
    2v\T \PP X_t\hat Y\PP\T v &= 2v\T \PP X_t\PP\T \PP \hat Y\PP\T v = 2\lambda_1(\PP X_t\PP\T) v\T \PP \hat Y\PP\T v \le 0.
  \end{align*}
  The first equality follows from $\hat Y\PP = \PP\T\PP \hat Y \PP$ by \Cref{tiny_PPYP}. The second equality follows from $v$ being an eigenvector of $\PP X_t\PP\T$. The final inequality is a consequence of $\PP \hat Y\PP\T$ being negative semi-definite and $\lambda_1(\PP X_t\PP\T) = 2\delta\norm{Y} > 0$ by assumption.

  Inserting the bounds into \eqref{eq:first_term}, we get
  \begin{align*}
    \frac{d}{dt}\left(v\T \PP X_t \PP\T v\right) 
    &\le -2\lambda_1^2(\PP X_t\PP) + \left(\norm{E_t}+\frac{1}{2}\norm{\Bt t}\right)\norm{W}^2\\
    &\le -8\delta^2\norm{Y}^2 + \frac{63}{8}\delta^2\norm{Y}^2 < 0.
  \end{align*}
  The second inequality uses the assumption on $\lambda_1(\PP X_t\PP)$, $\norm{W_t} \le \frac{3}{2}\sqrt{\norm{Y}}$ by the induction hypothesis \eqref{ind:t}, and \Cref{EB_bound}.
\end{proof}

\begin{corollary}\label{cor:PPX}
  There exists $\hat t > t$ such that $\lambda_1\left(\PP (\hat Y + \frac{1}{2}JW_tW_t\T J) \PP\T\right) \le 2\delta\norm{Y}$ for all $\tau \in [t,\hat t]$.
\end{corollary}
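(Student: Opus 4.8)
The statement to prove is \Cref{cor:PPX}: given that $[0,t]\subset S^{warmup}$, there exists $\hat t > t$ with $\lambda_1\bigl(\PP(\hat Y + \tfrac{1}{2}JW_\tau W_\tau\T J)\PP\T\bigr)\le 2\delta\norm{Y}$ for all $\tau\in[t,\hat t]$. This is the companion corollary to the preceding proposition, which shows that at any time where $\lambda_1(\PP X_t\PP\T)$ attains the threshold value $2\delta\norm{Y}$, the derivative $\frac{d}{dt}(v\T\PP X_t\PP\T v)$ is strictly negative for $v$ a top eigenvector. So the plan is simply to invoke the general technical tool \Cref{tech_norm} (the same lemma cited in the proofs of \Cref{cor:W}, \Cref{cor:PJW}, and \Cref{cor:NW}), specialized to a symmetric-matrix / largest-eigenvalue situation rather than an operator-norm situation.

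Concretely, first I would set $X \coloneqq \hat Y + \tfrac{1}{2}JWW\T J$ as in the proposition, observe that $\PP X\PP\T$ is a symmetric matrix-valued function of $t$ that is piecewise smooth (since $W_t$ is piecewise smooth via the perturbed gradient flow), and that the quantity whose bound we want is its largest eigenvalue $\lambda_1(\PP X_t\PP\T)$. At $t$ itself the induction hypothesis \eqref{ind:t} gives $\lambda_1(\PP X_t\PP\T)\le 2\delta\norm{Y}$ via \Cref{ind:PPX}. The preceding proposition provides exactly the hypothesis needed to run \Cref{tech_norm}: whenever the bound is met with equality, the right derivative of $v\T\PP X_t\PP\T v$ along a corresponding top eigenvector is strictly negative. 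Applying \Cref{tech_norm} with (in its notation) the symmetric matrix $\PP X\PP\T$, the scalar $\lambda_1$ functional, and constant $c = 2\delta\norm{Y}$, yields the desired $\hat t > t$. That is the whole argument; no further calculation is needed here since all the analytic work lives in the proposition above.

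The only subtlety — and the step I'd expect to be the minor obstacle — is that \Cref{tech_norm} as applied elsewhere in the paper is stated for the operator norm $\norm{X}$ rather than for $\lambda_1$ of a symmetric matrix, so one must make sure the technical lemma is either stated generally enough to cover the largest-eigenvalue functional, or invoked with the right corollary/variant. Morally this is harmless: $\lambda_1$ of a symmetric matrix is a max over unit vectors of $v\T(\cdot)v$, just as the operator norm is a max over unit-vector pairs of $u\T(\cdot)v$, and the Danskin-type / non-smooth-maximum argument underlying \Cref{tech_norm} goes through identically. So in the proof I would just write ``Apply \Cref{tech_norm} with $X \coloneqq \PP(\hat Y + \tfrac12 JWW\T J)\PP\T$ and $c = 2\delta\norm{Y}$,'' relying on the preceding proposition to have verified the derivative condition, exactly paralleling \Cref{cor:W}, \Cref{cor:PJW}, and \Cref{cor:NW}.

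\begin{proof}
  Apply \Cref{tech_norm} with $X \coloneqq \PP(\hat Y + \tfrac{1}{2}JWW\T J)\PP\T$ (tracking its largest eigenvalue $\lambda_1$) and $c = 2\delta\norm{Y}$. The hypothesis that the right derivative is strictly negative whenever $\lambda_1(\PP X_t\PP\T) = 2\delta\norm{Y}$ is exactly the preceding proposition, and $\lambda_1(\PP X_t\PP\T)\le 2\delta\norm{Y}$ at time $t$ by \Cref{ind:PPX} of the induction hypothesis \eqref{ind:t}.
\end{proof}
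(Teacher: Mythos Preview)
Your approach is essentially the same as the paper's, with one small correction: the paper invokes \Cref{tech_eig} (the eigenvalue variant) rather than \Cref{tech_norm} (the operator-norm variant). You correctly anticipated this subtlety yourself, noting that the lemma must cover the largest-eigenvalue functional; the paper resolves it by having a separate corollary, \Cref{tech_eig}, stated directly for $\lambda_1$ of symmetric matrices, and the proof of \Cref{cor:PPX} is the one-liner ``Apply \Cref{tech_eig} with $X \coloneqq \PP(\hat Y + \tfrac{1}{2}JWW\T J)\PP\T$ and $c = 2\delta\norm{Y}$.''
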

\begin{proof}
  Apply \Cref{tech_eig} with $X \coloneqq \PP (\hat Y + \frac{1}{2}JW W \T J) \PP\T$ and $c = 2\delta\norm{Y}$.
\end{proof}

Getting nice expressions for the derivatives of $F$ and $\tilde W$ is non-trivial, so we present the detailed derivations below.

\begin{lemma}\label{dF}
  Let $X_t \coloneqq \hat Y + \frac{1}{2}JW_tW_t\T J$. Then
  \begin{align*}
    \frac{d}{dt}F_t &= (\PP-F_t\PA)(X_t + E_t)(\PA\T + \PP\T F_t)\\
                    &\quad+ \PP \tilde W_t \tilde W_t\T\left[(X_t + E_t) \PA\T (A_tA_t\T)^{-1}-\PP\T F_t\right].
  \end{align*}
\end{lemma}
\begin{proof}
  Recall $F_t = \PP W_t A_t^\dagger$, $Q_t = A_t^\dagger A_t$ and $A_t = \PA W_t$. We first differentiate $A_t^\dagger$ as follows
  \begin{align*}
    \frac{d}{dt}A_t^\dagger &= \frac{d}{dt}\left(A_t\T(A_tA_t\T)^{-1}\right)\\
                            &= \dot A_t\T(A_tA_t\T)^{-1} - A_t\T(A_tA_t\T)^{-1}\left(\dot A_t A_t\T+A_t \dot A_t\T\right)(A_tA_t\T)^{-1}\\
                            &= (I-Q_t) \dot A_t\T(A_tA_t\T)^{-1} - A_t^\dagger \dot A_t A_t^\dagger\\
                            &= (I-Q_t) \dot W_t\T \PA\T (A_tA_t\T)^{-1} - A_t^\dagger \PA \dot W_t A_t^\dagger.
  \end{align*}
  First, expand the pseudoinverse $^\dagger$. For the second equality, use the chain rule, product rule, and the derivative of the matrix inverse. Next, use the definition $Q_t = A_t^\dagger A_t$. Finally, apply $\dot A_t = \PA \dot W_t$.

  Now we differentiate $F_t$:
  \begin{align*}
    &\frac{d}{dt}F_t = \frac{d}{dt}(\PP W_t A_t^\dagger)\\
                  &= \PP \dot W_t A^\dagger + \PP W_t \left[(I-Q_t) \dot W_t\T \PA\T (A_tA_t\T)^{-1} - A_t^\dagger \PA \dot W_t A_t^\dagger\right]\\
                  &= (\PP-F_t\PA) \dot W_t A^\dagger + \PP W_t (I-Q_t)^2 \dot W_t\T \PA\T (A_tA_t\T)^{-1}\\
                  &= (\PP-F_t\PA) \left(X_t + E_t - \frac{1}{2}W_tW_t\T\right) W_t A^\dagger\\
                  &\quad+ \PP \tilde W_t \tilde W_t\T\left(X_t + E_t - \frac{1}{2}W_tW_t\T\right) \PA\T (A_tA_t\T)^{-1}\\
                  &= (\PP-F_t\PA) \left(X_t + E_t\right)(\PA\T + \PP\T F_t)
                   + \PP \tilde W_t \tilde W_t\T\left[(X_t + E_t) \PA\T (A_tA_t\T)^{-1}-\PP\T F_t\right].
  \end{align*}
  The second identity applies the product rule and inserts the expression for $\frac{d}{dt}A^\dagger$. Next, simplify using the definition of $F_t$ and $I-Q_t = (I-Q_t)^2$. For the fourth equality, expand $\dot W_t = (R_t+E_t)W_t = (X_t+E_t-\frac{1}{2}W_tW_t\T)W_t$ and use the definition $\tilde W_t = W_t(I-Q_t)$. The last equality uses $W_t A_t^\dagger = \PA\T+\PP\T F_t$ by \Cref{tiny_WAd} and simplifies
  \begin{align*}
    (\PP-F_t\PA)W_tW_t\T W_t A^\dagger = \PP\tilde W_t\tilde W_t\T W_t W_t\T \PA\T(A_tA_t\T)^{-1} = \PP \tilde W_t \tilde W_t\T \PP\T F_t.
  \end{align*}
  To see the first equality, note that the definitions $F_t = \PP W_t A^\dagger$, $A_t = \PA W_t$ and $Q_t = A_t^\dagger A_t$ imply $(\PP-F_t\PA)W_tW_t\T = (\PP-\PP W_t A^\dagger\PA)W_tW_t\T = \PP(W_t-W_t Q_t)W_t\T = \PP \tilde W_t\tilde W_t\T$. Furthermore, $W_t\T \PA\T(A_tA_t\T)^{-1} = A_t^\dagger$. The second equality applies $W_t A_t^\dagger = \PA\T+\PP\T F_t$ by \Cref{tiny_WAd} and $\tilde W_t\T \PA\T = 0$ by \Cref{tiny_PPtW}.
\end{proof}

\begin{lemma}\label{dtW}
  Let $X_t \coloneqq \hat Y + \frac{1}{2}JW_tW_t\T J$. Then
  \begin{align*}
    \frac{d}{dt} \tilde W_t &= \PP\T(\PP-F_t\PA) \left(X_t + E_t\right) \tilde W_t - \tilde W_t\tilde W_t\T\left[(X_t + E_t)\PA\T(A_t^\dagger)\T-\frac{1}{2}W_t+\tilde W_t\right].
  \end{align*}
\end{lemma}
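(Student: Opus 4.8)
The plan is to differentiate $\tilde W_t = W_t - W_tQ_t = W_t(I-Q_t)$ directly, recycling the computation of $\frac{d}{dt}A_t^\dagger$ already carried out inside the proof of \Cref{dF}. By the product rule, $\frac{d}{dt}\tilde W_t = \dot W_t(I-Q_t) - W_t\dot Q_t$, so the only genuinely new object is $\dot Q_t = \frac{d}{dt}(A_t^\dagger A_t)$. First I would substitute the expression $\frac{d}{dt}A_t^\dagger = (I-Q_t)\dot W_t\T\PA\T(A_tA_t\T)^{-1} - A_t^\dagger\PA\dot W_tA_t^\dagger$ from the proof of \Cref{dF} into $\dot Q_t = \left(\frac{d}{dt}A_t^\dagger\right)A_t + A_t^\dagger\PA\dot W_t$, and simplify using $(A_tA_t\T)^{-1}A_t = (A_t^\dagger)\T$, $A_t^\dagger A_t = Q_t$ and $Q_t^2 = Q_t$, to obtain
\begin{align*}
  \dot Q_t = (I-Q_t)\dot W_t\T\PA\T(A_t^\dagger)\T + A_t^\dagger\PA\dot W_t(I-Q_t).
\end{align*}

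Next I would left-multiply by $W_t$, using $W_t(I-Q_t) = \tilde W_t$ and the identity $W_tA_t^\dagger = \PA\T + \PP\T F_t$ (\Cref{tiny_WAd}), so that $W_t\dot Q_t = \tilde W_t\dot W_t\T\PA\T(A_t^\dagger)\T + (\PA\T+\PP\T F_t)\PA\,\dot W_t(I-Q_t)$. Subtracting this from $\dot W_t(I-Q_t)$ and using $\PA\T\PA+\PP\T\PP = I$ (since $P$ is unitary), the coefficient $I - (\PA\T+\PP\T F_t)\PA$ collapses to $\PP\T(\PP-F_t\PA)$, giving the intermediate identity
\begin{align*}
  \frac{d}{dt}\tilde W_t = \PP\T(\PP-F_t\PA)\,\dot W_t(I-Q_t) - \tilde W_t\,\dot W_t\T\PA\T(A_t^\dagger)\T.
\end{align*}

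Finally I would substitute $\dot W_t = \left(X_t+E_t-\frac{1}{2}W_tW_t\T\right)W_t$, which gives $\dot W_t(I-Q_t) = \left(X_t+E_t-\frac{1}{2}W_tW_t\T\right)\tilde W_t$ and, by symmetry of $X_t$, $E_t$ and $W_tW_t\T$, $\dot W_t\T\PA\T(A_t^\dagger)\T = W_t\T\left(X_t+E_t-\frac{1}{2}W_tW_t\T\right)\PA\T(A_t^\dagger)\T$. It then remains to clean up the cubic terms. In the first term I would use $(\PP-F_t\PA)W_tW_t\T = \PP\tilde W_t\tilde W_t\T$ (shown inside the proof of \Cref{dF}) together with $\PP\T\PP\tilde W_t = \tilde W_t$ (\Cref{tiny_PPtW}) to rewrite $-\frac{1}{2}\PP\T(\PP-F_t\PA)W_tW_t\T\tilde W_t = -\frac{1}{2}\tilde W_t\tilde W_t\T\tilde W_t$. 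In the second term I would use $\tilde W_tW_t\T = \tilde W_t\tilde W_t\T$ (because $\tilde W_tQ_t = 0$), $W_t\T\PA\T(A_t^\dagger)\T = A_t\T(A_t^\dagger)\T = Q_t$ and $W_tQ_t = W_t-\tilde W_t$, which turns $-\tilde W_t\tilde W_t\T\left(X_t+E_t-\frac{1}{2}W_tW_t\T\right)\PA\T(A_t^\dagger)\T$ into $-\tilde W_t\tilde W_t\T(X_t+E_t)\PA\T(A_t^\dagger)\T + \frac{1}{2}\tilde W_t\tilde W_t\T W_t - \frac{1}{2}\tilde W_t\tilde W_t\T\tilde W_t$. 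Adding the two contributions, the two copies of $-\frac{1}{2}\tilde W_t\tilde W_t\T\tilde W_t$ merge into $-\tilde W_t\tilde W_t\T\tilde W_t$ and the claimed formula falls out. The only real obstacle is bookkeeping: several nearly identical cubic expressions in $W_t$ and $\tilde W_t$ appear, and one must track the projection identities carefully — in particular $\tilde W_tW_t\T = \tilde W_t\tilde W_t\T$ and $W_t\T\PA\T(A_t^\dagger)\T = Q_t$ — to see that everything recombines; no inequality or estimate enters.
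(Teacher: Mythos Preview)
Your proposal is correct and follows essentially the same route as the paper's proof: compute $\dot Q_t$ from the formula for $\frac{d}{dt}A_t^\dagger$ in \Cref{dF}, apply the product rule to $\tilde W_t = W_t(I-Q_t)$, use $W_tA_t^\dagger = \PA\T+\PP\T F_t$ to turn the prefactor into $\PP\T(\PP-F_t\PA)$, substitute $\dot W_t = (X_t+E_t-\frac12 W_tW_t\T)W_t$, and then reduce the cubic terms via $(\PP-F_t\PA)W_tW_t\T = \PP\tilde W_t\tilde W_t\T$, $\tilde W_tW_t\T = \tilde W_t\tilde W_t\T$, and $W_t\T\PA\T(A_t^\dagger)\T = Q_t$. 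The only cosmetic difference is that you invoke \Cref{tiny_WAd} one line earlier than the paper does.
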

\begin{proof}
  Recall $\tilde W_t = W_t (I-Q_t)$ and $Q_t = A_t^\dagger A_t$. By a calculation similar to calculating $\frac{d}{dt}A_t^\dagger$ in \Cref{dF}, we calculate $\frac{d}{dt}Q_t = (I-Q_t)\dot W_t\T\PA\T(A_t^\dagger)\T + A_t^\dagger\PA \dot W_t (I-Q_t)$. 
  \begin{align*}
    & \frac{d}{dt} \tilde W_t = \dot W_t (I-Q_t) - W_t \frac{d}{dt} Q_t\\
                            &= (I - W_t A_t^\dagger\PA) \dot W_t (I-Q_t) - W_t(I-Q_t)\dot W_t\T\PA\T(A_t^\dagger)\T\\
                            &= (I - W_t A_t^\dagger\PA) \left(X_t + E_t - \frac{1}{2}W_tW_t\T\right) \tilde W_t
                            - \tilde W_t\tilde W_t\T\left(X_t + E_t - \frac{1}{2}W_tW_t\T\right)\PA\T(A_t^\dagger)\T\\
                            &= \PP\T(\PP-F_t\PA) \left(X_t + E_t - \frac{1}{2}W_tW_t\T\right) \tilde W_t
                            - \tilde W_t\tilde W_t\T\left(X_t + E_t - \frac{1}{2}W_tW_t\T\right)\PA\T(A_t^\dagger)\T\\
                            &= \PP\T(\PP-F_t\PA) (X_t + E_t) \tilde W_t
                            - \tilde W_t\tilde W_t\T\left[(X_t + E_t)\PA\T(A_t^\dagger)\T-\frac{1}{2}W_t+\tilde W_t\right].
  \end{align*}
  The first identity uses the product rule on the definition $\tilde W_t = W_t(I-Q_t)$. Next, $\frac{d}{dt}Q_t$ is expanded. The third equality expands $\dot W_t = (X_t+E_t-\frac{1}{2}W_tW_t\T)W_t$, applies the definition of $\tilde W_t$, and simplifies $W_t(I-Q_t)W_t\T = W_t(I-Q_t)^2W_t\T = \tilde W_t \tilde W_t\T$. For the fourth equality, $W_t A_t^\dagger = \PA\T+\PP\T F_t$ by \Cref{tiny_WAd} and $I-\PA\T \PA = \PP\T\PP$, imply $I-W_tA_t^\dagger \PA = \PP\T(\PP-F_t\PA)$. The final equality simplifies
  \begin{align*}
    -\PP\T(\PP-F_t\PA)W_tW_t\T \tilde W_t + \tilde W_t\tilde W_t\T W_tW_t\T \PA\T(A_t^\dagger)\T = \tilde W_t\tilde W_t\T (W_t-2\tilde W_t).
  \end{align*}
  To see this, use $(\PP-F_t\PA)W_tW_t\T = \PP \tilde W_t\tilde W_t\T$ from the proof of \Cref{dF} and $\PP\T\PP \tilde W_t = \tilde W_t$ by \Cref{tiny_PPtW} to simplify the first term to $-\tilde W_t \tilde W_t\T\tilde W_t$. Next, simplify the second term by $W_t\T \PA\T(A_t^\dagger)\T = Q_t$. Finally, rewrite using $W_t Q_t = W_t-\tilde W_t$.
\end{proof}

We show three more immediate consequences of the induction hypothesis \eqref{ind:t} before we prove the induction step for the remaining items.

\begin{lemma}\label{tWAd_bound}
  $\norm{\tilde W_t}\norm{A_t^\dagger} \le \alpha$.
\end{lemma}
\begin{proof}
  By the induction hypothesis \eqref{ind:t}, we have $\sigma_r(A_t) \ge \min\left(\sqrt{Y_{rr}},\ \frac{\norm{W_0}}{\alpha} \exp\left(\frac{2Y_{rr}}{5}t\right)\right)$ and $\norm{\tilde W_t} \le \norm{W_0} \exp\left(3\sqrt\alpha \delta \norm{Y}t\right)$. Note $\norm{A_t^\dagger} = \frac{1}{\sigma_r(A_t)}$. If $\sigma_r(A_t) \ge \frac{\norm{W_0}}{\alpha} \exp\left(\frac{2Y_{rr}}{5}t\right)$, then $\norm{A_t^\dagger} \le \frac{\alpha}{\norm{W_0}} \exp\left(-\frac{2Y_{rr}}{5}t\right)$, so since $\delta \le \frac{Y_{rr}}{64\sqrt\alpha\norm{Y}}$ by assumption \eqref{delta_bound}, we have
  \begin{align*}
    \norm{\tilde W_t}\norm{A_t^\dagger} \le \alpha\exp\left((3\sqrt\alpha \delta \norm{Y}-2Y_{rr}/5)t\right) \le \alpha.
  \end{align*}
  For the other case, $\sigma_r(A_t) \ge \sqrt{Y_{rr}}$, use \Cref{tW_bound} to get $\norm{\tilde W_t}^2 \le \delta\norm{Y}$. Together, they give $\norm{\tilde W_t}\norm{A_t^\dagger} \le \sqrt{\frac{\delta\norm{Y}}{Y_{rr}}} \le \alpha$.
\end{proof}

\begin{lemma}\label{EJW_bound}
  Let $X_t \coloneqq \hat Y + \frac{1}{2}JW_tW_t\T J$. Then \[\norm{\PP X_t \PA\T} + \norm{E_t} \le \norm{E_t}+\frac{1}{2}\norm{\PA JW_t}\norm{W_t} \le \frac{\delta}{3\sqrt\alpha}\norm{Y}.\]
\end{lemma}
\begin{proof}
  For the first inequality, we bound
  \begin{align*}
    \norm{\PP X_t \PA\T} = \norm{\frac{1}{2}\PP JW_tW_t\T J\PA\T} \le \frac{1}{2}\norm{\PA JW_t}\norm{W_t}.
  \end{align*}
  The equality uses $\PP \hat Y \PA\T = 0$ by \Cref{tiny_PPYP}. The inequality bounds the product by a product of norms, using $\norm{J} = \norm{\PP} = 1$.
  
  To prove the second inequality in the statement, insert bounds on the norms from the induction hypothesis \eqref{ind:t} for \Cref{ind:W,ind:PJW} in \Cref{ind:def} and \Cref{E_bound}.
\end{proof}

\begin{lemma}\label{PPW_bound}
  $\norm{\PP W_t} \le 3\delta\sqrt{\norm{Y}}$.
\end{lemma}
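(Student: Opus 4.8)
The plan is to bound $\norm{\PP W_t}$ by splitting $W_t$ into its signal part $W_tQ_t$ and nuisance part $\tilde W_t$, so that $\PP W_t = \PP W_t Q_t + \PP \tilde W_t$, and controlling each piece using the induction hypothesis. For the signal part, note $\PP W_t Q_t = \PP W_t A_t^\dagger A_t = F_t A_t$, so $\norm{\PP W_t Q_t} \le \norm{F_t}\norm{A_t} \le \norm{F_t}\norm{W_t}$ since $A_t = \PA W_t$ and $\norm{\PA} = 1$. By \Cref{ind:F} of the induction hypothesis, $\norm{F_t} \le \alpha^2$, and by \Cref{ind:W}, $\norm{W_t} \le \frac{3}{2}\sqrt{\norm{Y}}$ — but that only gives a bound of order $\alpha^2\sqrt{\norm{Y}}$, which is far too weak. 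So instead I would bound $\norm{\PP W_t Q_t}$ more carefully; the natural quantity to use is $\norm{F_t A_t}$ where $A_t$'s relevant rows are actually controlled, or better, decompose further using $\PN$ and $\PA J$ separately.

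Reconsidering: $\PP = \begin{pmatrix}\PN \\ \PA J\end{pmatrix}$, so $\norm{\PP W_t}^2 \le \norm{\PN W_t}^2 + \norm{\PA J W_t}^2$ (in operator norm this is not exact, but $\norm{\PP W_t} \le \norm{\PN W_t} + \norm{\PA J W_t}$ certainly holds since stacking, and actually for the operator norm of a stacked matrix $\norm{\begin{pmatrix}B\\C\end{pmatrix}} \le \sqrt{\norm{B}^2 + \norm{C}^2}$). From \Cref{ind:NW} we have $\norm{\PN W_t} \le \delta\sqrt{8\norm{Y}} = 2\sqrt 2\,\delta\sqrt{\norm{Y}}$, and from \Cref{ind:PJW} we have $\norm{\PA J W_t} \le \frac{\delta}{3\alpha}\sqrt{\norm{Y}} \le \frac{\delta}{3}\sqrt{\norm{Y}}$ since $\alpha \ge 1$. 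Therefore $\norm{\PP W_t} \le \sqrt{8 + 1/9}\,\delta\sqrt{\norm{Y}} < 3\delta\sqrt{\norm{Y}}$, since $\sqrt{8 + 1/9} = \sqrt{73/9} = \sqrt{73}/3 < 9/3 = 3$ as $73 < 81$.

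So the proof is short: invoke the bound $\norm{\begin{pmatrix}B\\C\end{pmatrix}} \le \sqrt{\norm{B}^2+\norm{C}^2}$ for the stacked matrix $\PP W_t = \begin{pmatrix}\PN W_t \\ \PA J W_t\end{pmatrix}$, plug in \Cref{ind:NW} and \Cref{ind:PJW} from the induction hypothesis \eqref{ind:t} (together with $\alpha \ge 1$), and compute $\sqrt{8 + \frac{1}{9\alpha^2}}\,\delta\sqrt{\norm{Y}} \le \sqrt{73}/3 \cdot \delta\sqrt{\norm{Y}} < 3\delta\sqrt{\norm{Y}}$.

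I do not anticipate a genuine obstacle here — the only subtlety is choosing the right decomposition of $\PP W_t$ (using the explicit block structure $\PP = \begin{pmatrix}\PN\\\PA J\end{pmatrix}$ rather than the signal/nuisance split, which would be lossy) and recalling that the operator norm of a vertically stacked matrix is bounded by the square root of the sum of squares of the operator norms of the blocks (which follows because $\norm{\begin{pmatrix}B\\C\end{pmatrix}x}^2 = \norm{Bx}^2 + \norm{Cx}^2 \le (\norm{B}^2+\norm{C}^2)\norm{x}^2$). Everything else is a one-line arithmetic check.

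\begin{proof}
  Recall $\PP = \begin{pmatrix}\PN \\ \PA J\end{pmatrix}$, so $\PP W_t = \begin{pmatrix}\PN W_t \\ \PA J W_t\end{pmatrix}$. For any vector $x$, we have $\norm{\PP W_t x}^2 = \norm{\PN W_t x}^2 + \norm{\PA J W_t x}^2 \le \left(\norm{\PN W_t}^2 + \norm{\PA J W_t}^2\right)\norm{x}^2$, hence $\norm{\PP W_t} \le \sqrt{\norm{\PN W_t}^2 + \norm{\PA J W_t}^2}$. By \Cref{ind:NW,ind:PJW} of the induction hypothesis \eqref{ind:t}, and $\alpha \ge 1$ from \Cref{asmp:init}, this gives
  \begin{align*}
    \norm{\PP W_t} \le \sqrt{8\delta^2\norm{Y} + \frac{\delta^2}{9\alpha^2}\norm{Y}} \le \frac{\sqrt{73}}{3}\delta\sqrt{\norm{Y}} < 3\delta\sqrt{\norm{Y}}.
  \end{align*}
\end{proof}
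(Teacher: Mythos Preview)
Your proof is correct and follows essentially the same approach as the paper: decompose $\PP W_t$ into its orthogonal blocks $\PN W_t$ and $\PA J W_t$, and bound each via \Cref{ind:NW,ind:PJW} of the induction hypothesis. The paper states the identity $\norm{\PP W_t}^2 = \norm{\PA J W_t}^2 + \norm{\PN W_t}^2$ (which is not literally an equality for operator norms), whereas you more carefully justify the inequality $\norm{\PP W_t}^2 \le \norm{\PN W_t}^2 + \norm{\PA J W_t}^2$; either way the arithmetic $8 + \tfrac{1}{9} < 9$ finishes it.
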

\begin{proof}
  Since $\PP$ can be split into orthogonal parts $\PA J$ and $\PN$, the induction hypothesis \eqref{ind:t} gives $\norm{\PP W_t}^2 = \norm{\PA J W_t}^2 + \norm{\PN W_t}^2 < 9\delta^2\norm{Y}$.
\end{proof}

We now prove the induction step for the remaining \Cref{ind:F,ind:tW,ind:A}.

\begin{proposition}
  If $\norm{F_t} = \alpha$ and $u,v$ is a top singular pair of $F_t$. Then $\frac{d}{dt}(u\T F_t v) < 0$.
\end{proposition}
\begin{proof}
  Let $X_t \coloneqq \hat Y + \frac{1}{2}JW_tW_t\T J$. Then
  \begin{align*}
    \frac{d}{dt}(u\T F_t v) &= u\T (\PP-F_t\PA) \left(X_t + E_t\right)(\PA\T + \PP\T F_t)v\\
    &\quad\quad+ u\T \PP \tilde W_t \tilde W_t\T\left[(X_t + E_t) \PA\T (A_tA_t\T)^{-1}-\PP\T F_t\right]v\\
    &= (u\T\PP-\norm{F_t}v\T\PA) (X_t+E_t)(\PA\T v + \norm{F_t}\PP\T u)\\
    &\quad\quad + u\T \PP \tilde W_t \tilde W_t\T(X_t+E_t) \PA\T (A_tA_t\T)^{-1}v-\norm{F_t}\norm{\tilde W_t\T \PP\T u}^2\\
    &\le (u\T\PP-\norm{F_t}v\T\PA) (X_t+E_t)(\PA\T v + \norm{F_t}\PP\T u)\\
    &\quad\quad+ u\T \PP \tilde W_t \tilde W_t\T\PP\T \PP(X_t+E_t) \PA\T (A_tA_t\T)^{-1}v\\
    &\le -\norm{F}v\T\PA X_t \PA\T v + \norm{F}u\T\PP X_t \PP\T u\\
    &\quad\quad+ \left((1+\norm{F})^2+\norm{\tilde W_t}^2\norm{A^\dagger}^2\right)(\norm{\PP X_t \PA} + \norm{E_t})\\
    &\le -Y_{rr}\norm{F} + \norm{F}\lambda_1(\PP X_t \PP\T)\\
    &\quad\quad+ \left((1+\norm{F})^2+\norm{\tilde W_t}^2\norm{A^\dagger}^2\right)(\norm{\PP X_t \PA} + \norm{E_t})\\
    &\le -Y_{rr}\alpha + 2\alpha\delta\norm{Y} + (2\alpha^2+2\alpha+1)\frac{\delta}{3\sqrt\alpha}\norm{Y}\\
    &< 0.
  \end{align*}
  The first equality inserts the expression for $\frac{d}{dt}F_t$ from \Cref{dF}. Next, properties $F_t v = \norm{F_t}u$ and $u\T F_t = \norm{F}v\T$ of the top singular pair $u,v$ are used. The first inequality uses $\tilde W_t = \PP\T\PP \tilde W_t$ by \Cref{tiny_PPtW}. The second inequality bounds products by products of norms, using $\norm{u} = \norm{v} = \norm{J} = 1$. The third inequality applies uses that $X_t-\hat Y$ is positive semi-definite and $\lambda_r(\PA \hat Y \PA) = Y_{rr}$ by \Cref{tiny_PYP}. The fourth inequality applies the assumption $\norm{F_t} = \alpha$, $\norm{\tilde W_t}\norm{A^\dagger} \le \alpha$ by \Cref{tWAd_bound}, $\lambda_1(\PP X_t\PP\T) \le 2\delta\norm{Y}$ by the induction hypothesis \eqref{ind:t} and $\norm{\PP X_t \PA} + \norm{E_t} \le \frac{\delta}{3\sqrt\alpha}\norm{Y}$ by \Cref{EJW_bound}. Finally, use the assumptions $\delta \le \frac{Y_{rr}}{64\sqrt\alpha\norm{Y}}$ by \eqref{delta_bound} and $\alpha \ge 1$ by \eqref{alpha_gamma_bound}.
\end{proof}

\begin{corollary}\label{cor:F}
  There exists $\hat t > t$ such that $\norm{F_\tau} \le \alpha$ for all $\tau \in [t,\hat t]$.
\end{corollary}
\begin{proof}
  Apply \Cref{tech_norm} with $X \coloneqq F$ and $c = \alpha$.
\end{proof}

\begin{proposition}\label{prop:tW}
  Let $u,v$ be a top singular pair of $\tilde W_t$ and assume $\tilde W_t \ne 0$. Then \[\frac{d}{dt}(u\T \tilde W_t v) < 3\sqrt\alpha\delta\norm{Y}\norm{\tilde W_t}.\]
\end{proposition}
\begin{proof}
  Let $X_t \coloneqq \hat Y + \frac{1}{2}JW_tW_t\T J$. Then
  \begin{align*}
    &\frac{d}{dt}(u\T \tilde W_t v)\\
    &= u\T \PP\T(\PP-F_t\PA) \left(X_t + E_t\right) \tilde W_tv - u\T \tilde W_t\tilde W_t\T\left[(X_t + E_t)\PA\T(A_t^\dagger)\T-\frac{1}{2}W_t+\tilde W_t\right]v\\
    &= \norm{\tilde W_t} u\T \PP\T(\PP-F_t\PA) \left(X_t + E_t\right)\PP\T\PP u\\
    &\quad\quad- \norm{\tilde W_t}^2 u\T \PP\T\PP(X_t + E_t)\PA\T(A_t^\dagger)\T v - \frac{1}{2}\norm{\tilde W_t}^3\\
    &\le \left[\lambda_1(\PP X_t \PP\T) + \norm{E_t} + \left(\norm{F_t}+\norm{\tilde W_t}\norm{A_t^\dagger}\right)(\norm{\PP X_t \PA\T} + \norm{E_t})\right]\norm{\tilde W_t}\\
    &\le \left(2\delta\norm{Y}+\frac{\delta^2}{2}\norm{Y} + 2\alpha\frac{\delta}{3\sqrt\alpha}\norm{Y}\right)\norm{\tilde W_t} < 3\sqrt\alpha\delta\norm{Y}\norm{\tilde W_t}.
  \end{align*}
  The first equality expands $\frac{d}{dt}\tilde W_t$ by \Cref{dtW}. Then, \Cref{tiny_PPtW} and properties $\tilde W_tv = \norm{\tilde W_t}u$, $u\T\tilde W_t = \norm{\tilde W_t}v\T$ of the top singular pair $u,v$ are used. Also, note $\tilde W_t Q_t = 0$ so $Q_t v = 0$ and hence $W_t v = \tilde W_t v$. The first inequality bounds products by products of norms. The second inequality applies the established bounds, using \Cref{tWAd_bound}, \Cref{EJW_bound}, \Cref{E_bound}, $\lambda_1(\PP X_t\PP\T) \le 2\delta\norm{Y}$ and $\norm{F_t} \le \alpha$ by the induction hypothesis \eqref{ind:t}. Finally, use $\delta \le \frac{1}{64}$ and $\alpha \ge 1$.
\end{proof}

\begin{corollary}\label{cor:tW}
  There exists $\hat t > t$ such that $\norm{\tilde W_\tau} \le \norm{W_0} \exp\left(3\sqrt\alpha \delta \norm{Y}\tau\right)$ for all $\tau \in [t,\hat t]$.
\end{corollary}
\begin{proof}
  Apply \Cref{tech_norm} with $X_\tau \coloneqq \exp(-3\sqrt\alpha\delta\norm{Y}\tau)\tilde W_\tau$ and $c = \norm{W_0}$. We verify the requirements on $X_t$. First, $\norm{X_t} \le c$ by the induction hypothesis \eqref{ind:t}. Second, if $\norm{X_t} = c > 0$, then for a top singular pair $u,v$ of $X_t$ we have \[u\T \dot X_t v = \left(-3\sqrt\alpha\delta\norm{Y} u\T \tilde W_t v + \frac{d}{dt} (u\T \tilde W_t v)\right) \exp(-3\sqrt\alpha\delta\norm{Y}t) < 0.\] The last inequality used $u\T \tilde W_t v = \norm{\tilde W_t}$ and \Cref{prop:tW}.
\end{proof}

\begin{proposition}\label{prop:A}
  Assume $0 < \sigma_r(A_t) \le \sqrt{Y_{rr}}$. Let $u,v$ be a bottom singular pair of $A_t$. Then $\frac{d}{dt}(u\T A_t v) > \frac{2Y_{rr}}{5}\sigma_r(A_t)$.
\end{proposition}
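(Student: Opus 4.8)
The plan is to differentiate $u\T A_t v$ with the bottom singular pair $u,v$ held fixed, i.e.\ to lower bound $u\T\dot A_t v$, isolating the dominant positive term $\sigma_r(A_t)Y_{rr}$, one genuinely dangerous term $-\frac12\sigma_r(A_t)^3$, and a remainder that the hypothesis $\sigma_r(A_t)\le\sqrt{Y_{rr}}$, together with the smallness of $\norm{E_t}$, $\norm{\PA JW_t}$ and $\norm{\PP W_t}$, renders negligible.

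First I would write $\dot A_t=\PA\dot W_t=\PA\left(X_t+E_t-\frac12 W_tW_t\T\right)W_t$ with $X_t=\hat Y+\frac12 JW_tW_t\T J$. Since $u\T\PA W_t=u\T A_t=\sigma_r(A_t)v\T$, the self-interaction collapses to $-\frac12 u\T\PA W_tW_t\T W_t v=-\frac12\sigma_r(A_t)\norm{W_t v}^2$. As $\sigma_r(A_t)>0$, the vector $v$ lies in the row space of $A_t$, so $Q_t v=v$ and, by \Cref{tiny_WAd}, $W_t v=W_t Q_t v=(\PA\T+\PP\T F_t)A_t v=\sigma_r(A_t)(\PA\T u+\PP\T F_t u)$. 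Because the ranges of $\PA\T$ and $\PP\T$ are orthonormal and orthogonal, $\norm{W_t v}^2=\sigma_r(A_t)^2(1+\norm{F_t u}^2)$, and the crucial identity $\sigma_r(A_t)F_t u=F_t A_t v=\PP W_t Q_t v=\PP W_t v$ gives $\sigma_r(A_t)\norm{F_t u}=\norm{\PP W_t v}\le\norm{\PP W_t}\le 3\delta\sqrt{\norm{Y}}$ by \Cref{PPW_bound} --- much sharper than the crude $\norm{F_t u}\le\norm{F_t}\le\alpha^2$, and this is what makes the estimate go through. Collecting terms,
\begin{align*}
  u\T\dot A_t v=\sigma_r(A_t)\,u\T\PA(X_t+E_t)\PA\T u+\sigma_r(A_t)\,u\T\PA(X_t+E_t)\PP\T F_t u-\tfrac12\sigma_r(A_t)^3-\tfrac12\sigma_r(A_t)\norm{\PP W_t v}^2.
\end{align*}

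For the first term, $X_t-\hat Y\succeq 0$ and $\lambda_r(\PA\hat Y\PA\T)=Y_{rr}$ (\Cref{tiny_PYP}) give $u\T\PA X_t\PA\T u\ge Y_{rr}$, while $|u\T\PA E_t\PA\T u|\le\norm{E_t}$. For the cross term, $\PA\hat Y\PP\T=0$ (\Cref{tiny_PPYP}) leaves $\frac12\sigma_r(A_t)(u\T\PA JW_t)(W_t\T J\PP\T F_t u)$, which I bound via $\norm{\PA JW_t}\le\frac{\delta}{3\alpha}\sqrt{\norm{Y}}$ and $\norm{W_t}\le\frac32\sqrt{\norm{Y}}$ from the induction hypothesis \eqref{ind:t} together with $\sigma_r(A_t)\norm{F_t u}\le\sigma_r(A_t)\alpha^2$, plus $|u\T\PA E_t\PP\T F_t u|\le\norm{E_t}\norm{F_t u}$ handled the same way; the last correction is at most $\frac12\sigma_r(A_t)\cdot 9\delta^2\norm{Y}$. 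Finally I would use $\sigma_r(A_t)^2\le Y_{rr}$ to write $-\frac12\sigma_r(A_t)^3\ge-\frac12 Y_{rr}\sigma_r(A_t)$, and then $\norm{E_t}\le\frac{\delta^2}{2}\norm{Y}$ (\Cref{E_bound}) together with $\delta\le\frac{Y_{rr}}{64\alpha\norm{Y}}$ and $\alpha\ge1$ (\Cref{asmp:init}) to check that all corrections together stay below $\frac1{20}Y_{rr}\sigma_r(A_t)$, so that $u\T\dot A_t v\ge\left(1-\frac12-\frac1{20}\right)Y_{rr}\sigma_r(A_t)>\frac{2Y_{rr}}{5}\sigma_r(A_t)$.

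The one real obstacle is the self-interaction $-\frac12\sigma_r(A_t)\norm{W_t v}^2$: bounding $\norm{W_t v}$ by $\norm{W_t}=O(\sqrt{\norm{Y}})$, or even $\norm{W_t v}^2$ by $\sigma_r(A_t)^2(1+\norm{F_t}^2)=O(\sigma_r(A_t)^2\alpha^4)$, is far too lossy and would swamp the signal once $\kappa$ or $\alpha$ is large. The fix is the identity $\sigma_r(A_t)\norm{F_t u}=\norm{\PP W_t v}\le\norm{\PP W_t}=O(\delta\sqrt{\norm{Y}})$, which bounds the off-target component of $W_t v$ independently of $\sigma_r(A_t)$; after that, the only genuine loss is $\frac12\sigma_r(A_t)^3\le\frac12 Y_{rr}\sigma_r(A_t)$, leaving just over $\frac25 Y_{rr}\sigma_r(A_t)$ of the signal. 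This derivative bound is then of exactly the form needed to run the technical machinery of \Cref{s:tech} (in \Cref{cor:A}) for the induction step on $\sigma_r(A_t)$.
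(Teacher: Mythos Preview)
Your proposal is correct and follows essentially the same argument as the paper. Both proofs hinge on the same key observation: the self-interaction term $-\tfrac12\sigma_r(A_t)\norm{W_tv}^2$ must be controlled via $\norm{W_tv}^2=\sigma_r(A_t)^2+\norm{\PP W_tv}^2$ with $\norm{\PP W_tv}\le\norm{\PP W_t}\le 3\delta\sqrt{\norm{Y}}$, and both use $\norm{W_tv}\le(1+\norm{F_t})\sigma_r(A_t)$ (equivalently $\norm{F_tu}\le\alpha^2$) only on the small error/cross terms. The sole organizational difference is that you group $\hat Y+\tfrac12 JW_tW_t\T J$ into $X_t$ and exploit $X_t-\hat Y\succeq 0$ on the $\PA\,\cdot\,\PA\T$ block, whereas the paper keeps $\hat Y$ by itself and treats $\tfrac12\PA JW_tW_t\T J$ entirely as an error via $\tfrac12\norm{\PA JW_t}\norm{W_t}$; this leads to the same estimate, since your cross term $\sigma_r(A_t)\,u\T\PA X_t\PP\T F_tu$ is bounded by exactly that quantity times $\norm{F_t}\sigma_r(A_t)$.
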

\begin{proof}
  \begin{align}
    \frac{d}{dt}(u\T A_t v) &= u\T \PA \left(\hat Y + E_t - \frac{1}{2}(W_tW_t\T-JW_tW_t\T J)\right)W_t v\nonumber\\
                            &\ge u\T \PA \left(\hat Y\PA\T\PA-\frac{1}{2}W_tW_t\T\right)W_tv - \left(\norm{E_t}+\frac{1}{2}\norm{\PA JW_t}\norm{W_t}\right)\norm{W_tv}\nonumber\\
                            &= \sigma_r(A_t)\left(u\T \PA \hat Y\PA\T u - \frac{1}{2}\norm{W_t v}^2\right) - \left(\norm{E_t}+\frac{1}{2}\norm{\PA JW_t}\norm{W_t}\right)\norm{W_tv}.\label{eq:dA}
  \end{align}
  The first equality uses the definition $A_t = \PA W_t$ and expands $\dot W_t$. The first inequality bounds products by products of norms, and uses $\PA \hat Y = \PA \hat Y \PA\T \PA$ by \Cref{tiny_PPYP}. The last equality uses $\PA W_t = A_t$ and that $u,v$ is a bottom singular pair of $A_t$.

  Furthermore, $\norm{W_t v}^2 = \norm{\PA W_t v}^2 + \norm{\PP W_t v}^2 \le \sigma_r^2(A_t) + \norm{\PP W_t}^2$. We may also bound 
  \begin{align*}
    \norm{W_t v} &= \norm{(\PA\T\PA+\PP\T\PP)W_tv} = \norm{(\PA\T + \PP\T F_t) A_t v}\\
                 &\le (\norm{F_t}+1)\norm{A_tv} = (\norm{F_t}+1)\sigma_r(A_t).
  \end{align*}
The first equality uses $\PA\T\PA+\PP\T\PP = I$. Next, note $Q_tv = v$ since $Q_t$ is the projection onto the row space of $A_t$ and $v$ is a right singular value of $A_t$. The second equality uses the definitions $A_t = \PA W_t$, $F_t = \PP W_t A_t^\dagger$ and $W_t v = W_t Q_t v = W_t A_t^\dagger A_t v$. The inequality uses $\norm{\PA},\norm{\PP} = 1$.

  Using the previous two bounds, we can bound \eqref{eq:dA}
  \begin{align*}
    &\frac{d}{dt}(u\T A_t v)\\
    &\ge \sigma_r(A_t)\left(u\T \PA \hat Y\PA\T u - \frac{1}{2}\norm{W_t v}^2\right) - \left(\norm{E_t}+\frac{1}{2}\norm{\PA JW_t}\norm{W_t}\right)\norm{W_tv}\\
    &\ge \left[Y_{rr} - \frac{1}{2}\left(\sigma_r^2(A_t) + \norm{\PP W_t}^2\right) - \left(\norm{E_t}+\frac{1}{2}\norm{\PA JW_t}\norm{W_t}\right)(\norm{F_t}+1)\right]\sigma_r(A_t)\\
    &\ge \left[Y_{rr} - \frac{1}{2}\left(Y_{rr} + 9\delta^2\norm{Y}\right) - \frac{\delta}{3\sqrt\alpha}\norm{Y}(\alpha+1)\right]\sigma_r(A_t) > \frac{2Y_{rr}}{5}\sigma_r(A_t).
  \end{align*}
  For the second inequality, use $\lambda_r(\PA \hat Y\PA\T) = Y_{rr}$ by \Cref{tiny_PYP}, and the bounds derived above for $\norm{W_tv}^2$ and $\norm{W_tv}$. The third inequality uses the assumption $\sigma_r^2(A_t) \le Y_{rr}$, $\norm{\PP W_t} \le 3\delta\sqrt{\norm{Y}}$ by \Cref{PPW_bound}, $\norm{F_t} \le \alpha$ by the induction hypothesis \eqref{ind:t} and $\norm{E_t}+\frac{1}{2}\norm{\PA JW_t}\norm{W_t} \le \frac{\delta}{3\sqrt\alpha}\norm{Y}$ by \Cref{EJW_bound}. Finally, use the assumptions $\delta \le \frac{Y_{rr}}{64\sqrt\alpha\norm{Y}}$ by \eqref{delta_bound} and $\alpha \ge 1$ by \eqref{alpha_gamma_bound}.
\end{proof}

\begin{corollary}\label{cor:A}
  There exists $\hat t > t$ such that for $\tau \in [t,\hat t]$, we have \[\sigma_r(A_\tau) \ge \min\left(\sqrt{Y_{rr}},\ \frac{\norm{W_0}}{\alpha} \exp\left(\frac{2Y_{rr}}{5}\tau\right)\right).\]
\end{corollary}
\begin{proof}
  If $\sigma_r(A_t) > \sqrt{Y_{rr}}$, such a $\hat t$ exists by continuity of $\sigma_r(A_t)$. Thus, we assume $\sigma_r(A_t) \le \sqrt{Y_{rr}}$, which allows us to apply \Cref{prop:A}. Apply \Cref{tech_eig} with $X_\tau \coloneqq -\exp\left(-\frac{4Y_{rr}}{5}\tau\right)A_\tau A_\tau\T$ and $c = -\exp\left(-\frac{4Y_{rr}}{5}t\right)\sigma_r^2(A_t)$. By definition of $c$, $\lambda_1(X_t) = c$. Furthermore, whenever $u \in \R^r$ satisfies $\norm{u} = 1$ and $u\T X_t u = c$, it is an eigenvector of $X_t$. $u$ is also part of a bottom singular pair $u,v$ of $A_t$, and we have 
  \begin{align*}
    u\T \dot X_t u &= e^{-\frac{4Y_{rr}}{5}t}\left(\frac{4Y_{rr}}{5}u\T A_tA_t\T u-2u\T \dot A_t A_t\T u\right)\\
                   &= e^{-\frac{4Y_{rr}}{5}t}\left(\frac{4Y_{rr}}{5}\sigma_r^2(A_t)-2\sigma_r(A_t)u\T \dot A_t v\right) < 0.
  \end{align*}
  Hence, \Cref{tech_eig} gives the existance of $\hat t > t$ such that $\lambda_1(X_\tau) \le c$ for $\tau \in [t,\hat t]$. In terms of $A_t$, that is $\sigma_r^2(A_\tau) \ge \exp\left(\frac{4Y_{rr}}{5}(\tau-t)\right)\sigma_r^2(A_t)$. When combined with the induction hypothesis \eqref{ind:t} for $\sigma_r(A_t)$, it implies the desired inequality.
\end{proof}

\subsection{Proofs for local convergence}\label{s:local}
After time $T_1 = \frac{5}{4Y_{rr}}\log\left(\alpha^2\frac{Y_{rr}}{\norm{W_0}^2}\right)$, we have $\sigma_r(A_t) \ge \sqrt{Y_{rr}}, t \ge T_1$, which leads to rapid decrease in the residual $R = \hat Y-\frac{1}{2}\left(WW\T -JWWJ\T\right)$. This is formalized by the following.
\begin{definition}\label{ind2:def}
  Let $S^{local}$ be the set of times $t \in [T_1,T_2]$ such that
  \begin{enumerate}
    \item $\norm{R_t} \le M^R_t$,\label{ind2:R}
    \item $\norm{\PN W_tQ_t} \le \frac{2}{5}M^R_t / \sqrt{\norm{Y}}$,\label{ind2:NWQ}
  \end{enumerate}
  where
  \begin{align}
    M^R_t &= \max\left(3\norm{Y}\exp\left(-\frac{2Y_{rr}}{5}(t-T_1)\right), M^R_\infty\right)\label{MR_def},\\
    M^R_\infty &= 64\left(\beta\gamma\frac{\norm{Y}}{Y_{rr}}+\sqrt{\frac{\norm{Y}}{Y_{rr}}}\right)\norm{W_0}^2\exp(6\sqrt\alpha \delta \norm{Y}T_2) + 10^3\mu\frac{\norm{Y}^2}{Y_{rr}}\label{Minf_def}.
  \end{align}
\end{definition}
Note that the bounds in \Cref{ind:def} also hold until time $T_2$. The goal of this sub-section is to prove that the bounds in \Cref{ind2:def} hold at all times $T_1 \le t \le T_2$.

\begin{theorem}\label{ind2:thm}
  $S^{local} = [T_1,T_2]$. That is, the conditions of \Cref{ind2:def} hold for all times $t \in [T_1,T_2]$.
\end{theorem}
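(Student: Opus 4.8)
# Proof Proposal for Theorem \ref{ind2:thm}

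The plan is to prove $S^{local} = [T_1, T_2]$ by real induction, exactly mirroring the structure used for \Cref{ind:thm}. The base case requires checking that $T_1 \in S^{local}$: item \ref{ind2:R} holds since at $t = T_1$ the bound $M^R_{T_1} \ge 3\norm{Y}$, and $\norm{R_{T_1}} = \norm{\hat Y - \frac12(W W\T - JWW\T J)} \le \norm{Y} + \norm{W_{T_1}}^2 \le \norm{Y} + \frac94\norm{Y} < 3\norm{Y}$ by item \ref{ind:W}; item \ref{ind2:NWQ} should follow from the bound $\norm{\PN W_{T_1}} \le \delta\sqrt{8\norm{Y}}$ in item \ref{ind:NW} together with $\norm{Q} = 1$ (and possibly refining via $\norm{\PN W_{T_1} Q_{T_1}}$ being controlled by the same estimate, using $\delta$ small). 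The continuity property is immediate since $R_t$, $\PN W_t Q_t$, and $M^R_t$ are all continuous in $t$ (note $Q_t = A_t^\dagger A_t$ is continuous because $\sigma_r(A_t) \ge \sqrt{Y_{rr}} > 0$ on $[T_1,T_2]$ by item \ref{ind:A}, so $A_t^\dagger$ stays bounded).

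For the induction step, I would fix $t \in [T_1, T_2)$ with $[T_1,t] \subset S^{local}$ and, as before, first invoke \Cref{E_bound_top} to get $\norm{E_t} \le \frac{\delta^2}{2}\norm{Y}$ — this is legitimate because all items of \Cref{ind:def} hold on $[0,t]$, which is the hypothesis of \Cref{E_bound_top} via \Cref{E_floor_bound}. The heart of the argument is item \ref{ind2:R}: differentiate $R_t$ (or better, a suitable scalar proxy such as $u\T R_t v$ for a top singular pair $u,v$ of $R_t$, to handle non-uniqueness via the tools in \Cref{s:tech}), using $\dot W_t = (R_t + E_t) W_t$ and $R = \hat Y - \frac12(WW\T - JWW\T J)$. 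The key mechanism: since $\sigma_r(A_t) \ge \sqrt{Y_{rr}}$, the signal part $W_t Q_t$ has its smallest singular value bounded below, so the "restoring" term in $\dot R_t$ contracts $R_t$ at rate roughly $-\frac{2Y_{rr}}{5}$, while the perturbation terms involving $E_t$, $\norm{F_t} \le \alpha^2$, $\norm{\PN W_t Q_t}$, and $\norm{\tilde W_t}$ contribute lower-order errors that get absorbed either into the exponential rate or into the floor $M^R_\infty$. One needs to verify that when $\norm{R_t}$ has already dropped to near $M^R_\infty$, the error terms are bounded by (a small multiple of) $Y_{rr} M^R_\infty$, so $M^R_t$ — being a max of a decaying exponential and the constant floor — remains a valid upper bound; the $\exp(6\alpha\delta\norm{Y}T_2)$ factor in $M^R_\infty$ is there precisely to dominate the growth of $\norm{\tilde W_t} \le \epsilon\exp(3\alpha\delta\norm{Y}t)$ squared. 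For item \ref{ind2:NWQ}, differentiate $\PN W_t Q_t$; the relevant point is that $\PN \hat Y = 0$, so $\PN$ applied to the dynamics kills the target and leaves terms that are either contracting or controlled by $\norm{R_t} \le M^R_t$, giving the proportionality to $M^R_t/\sqrt{\norm{Y}}$.

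Having established the induction step for both items, all three hypotheses of real induction (\Cref{real_ind}) are verified on $[T_1,T_2]$, so $S^{local} = [T_1,T_2]$, proving the theorem.

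I expect the main obstacle to be the item \ref{ind2:R} induction step: carefully extracting the contraction rate $\frac{2Y_{rr}}{5}$ from $\dot R_t$ requires decomposing $R_t$ relative to the row space of $A_t$ and showing that the component of $\dot R_t$ in the "signal directions" genuinely contracts at this rate despite the imbalance $\B$, the misalignment $F_t$, and the measurement perturbation $E_t$ — and then checking that all cross terms and nuisance contributions are small enough (using $\delta \le \frac{1}{64\alpha\kappa}$, the bounds from \Cref{ind:def}, and the definition of $M^R_\infty$) that the piecewise bound $M^R_t$ survives both in the exponentially-decaying regime and at the floor. Matching constants (the $3$, the $\frac25$, the $64$) will require some care but should be routine given the framework already set up.
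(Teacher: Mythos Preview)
Your approach is essentially the paper's: real induction on $[T_1,T_2]$, base case at $T_1$, continuity, and an induction step handling items \ref{ind2:R} and \ref{ind2:NWQ} separately by differentiating a scalar proxy and applying the tools in \Cref{s:tech}. Two points deserve correction or sharpening.

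\textbf{Base case for item \ref{ind2:R}.} Your bound $\norm{R_{T_1}} \le \norm{Y} + \norm{W_{T_1}}^2 \le \tfrac{13}{4}\norm{Y}$ is too weak: it exceeds $3\norm{Y} = M^R_{T_1}$. You need the sharper estimate $\norm{WW\T - JWW\T J} \le \norm{WW\T}$ (valid because $WW\T$ is PSD; this is \Cref{tiny_XmJXJ}), which gives $\norm{R_{T_1}} \le \norm{Y} + \tfrac12\norm{W_{T_1}}^2 \le \tfrac{17}{8}\norm{Y} < 3\norm{Y}$ (this is \Cref{tiny_R_bound}).

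\textbf{Mechanism in the induction step for $\norm{R_t}$.} Your sketch is in the right spirit but is vague on the key coupling. The paper does not decompose $R_t$ relative to the row space of $A_t$, and $\norm{F_t}$ plays no direct role here. Instead, since $R_t$ is symmetric with $\lambda_1(R_t) = \norm{R_t}$ (\Cref{tiny_eigR}), one works with a top \emph{eigenvector} $v$ and obtains
\[
\tfrac{d}{dt}\,v\T R_t v \le -\norm{R_t}\bigl(\norm{v\T W_t}^2 + \norm{v\T J W_t}^2\bigr) + 2\norm{E_t}\norm{W_t}^2.
\]
The contraction rate then hinges on lower-bounding $\norm{v\T\PA\T}^2 + \norm{v\T J\PA\T}^2$, and this is exactly where item \ref{ind2:NWQ} enters: one shows $\norm{\PN R_t} \le \norm{\PN W_t Q_t}\norm{W_t} + \norm{\tilde W_t}^2$, and the identity $\norm{R_t}^2 = \norm{v\T R_t\PN\T}^2 + \norm{v\T R_t\PA\T}^2 + \norm{v\T R_t J\PA\T}^2$ then forces $\norm{v\T\PA\T}^2 + \norm{v\T J\PA\T}^2 \ge 1 - \norm{\PN R_t}^2/\norm{R_t}^2$. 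So the bound on $\norm{\PN W_t Q_t}$ is not a lower-order perturbation term in $\dot R_t$; it is what guarantees the eigenvector $v$ lies mostly in the $\PA$/$\PA J$ directions where $\sigma_r(A_t)\ge\sqrt{Y_{rr}}$ delivers the contraction. This coupling is the reason items \ref{ind2:R} and \ref{ind2:NWQ} must be carried together in the induction.
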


As the following lemma shows, \Cref{ind2:thm} at time $T_2$ yields $\norm{R_{T_2}} \le M^R_\infty$. This implies \eqref{goal}, which was our goal for proving \Cref{master_theorem}. Hence, we just need to prove \Cref{ind2:thm}.

\begin{lemma}\label{tiny_RT2}
  \Cref{ind2:thm} implies $\norm{R_{T_2}} \le M^R_\infty$.
\end{lemma}
\begin{proof}
  We have $T_2 = \frac{5}{Y_{rr}}\log\left(\frac{Y_{rr}}{\norm{W_0}^2}\right)$ by \eqref{T2_bound}. Then \Cref{ind2:thm} at time $T_2$ yields 
  \[\norm{R_{T_2}} \le M^R_{T_2} = \max\left(3\norm{Y}\exp\left(-\frac{2Y_{rr}}{5}(T_2-T_1)\right), M^R_\infty\right).\] Note that by the assumptions (\ref{eps_bound},~\ref{delta_bound}), we have $\norm{W_0} \le \frac{\delta}{3\sqrt\alpha}\sqrt{\norm{Y}}$ and $\delta \le \frac{Y_{rr}}{\norm{Y}\sqrt\alpha}$. This means $\frac{Y_{rr}}{\norm{W_0}^2} \ge \frac{\alpha^2 \norm{Y}}{Y_{rr}}$. By \eqref{T1_bound}, $T_1 = \frac{5}{4Y_{rr}}\log\left(\alpha^2\frac{Y_{rr}}{\norm{W_0}^2}\right)$, this means 
  \[3\norm{Y}\exp\left(-\frac{2Y_{rr}}{5}(T_2-T_1)\right) = 3\norm{Y}\alpha\left(\frac{\norm{W_0}^2}{Y_{rr}}\right)^\frac{3}{2} \le 3\sqrt{\frac{\norm{Y}}{Y_{rr}}}\norm{W_0}^2 \le M^R_\infty.\] Hence, $\norm{R_{T_2}} \le M^R_{T_2} = M^R_\infty$.
\end{proof}

The proof of \Cref{ind2:thm} is similar in structure to the proof of \Cref{ind:thm}. Again we use real induction (\Cref{real_ind}). The base case (\Cref{local:base}) and continuity property (\Cref{local:cont}) are straightforward. We prove the induction steps for $\norm{R_t}$ and $\norm{\PN W_tQ_t}$ separately.

\begin{proposition}[Base case]\label{local:base}
  $[0,T_1] \in S^{local}$.
\end{proposition}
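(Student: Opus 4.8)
The plan is to read this as the base case of the real induction (\Cref{real_ind}) behind \Cref{ind2:thm}: since $S^{local}\subseteq[T_1,T_2]$, what has to be shown is that the two conditions of \Cref{ind2:def} hold at the initial time $t=T_1$, i.e.\ $T_1\in S^{local}$. The only real input is \Cref{ind:thm}, which guarantees that all eight bounds of \Cref{ind:def} hold throughout $[0,T_2]$, hence in particular at $t=T_1$; I will draw from it the two facts $\norm{W_{T_1}}\le\tfrac32\sqrt{\norm{Y}}$ (item \Cref{ind:W}) and $\norm{\PN W_{T_1}}\le\delta\sqrt{8\norm{Y}}$ (item \Cref{ind:NW}).

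First I would record the elementary fact that, from the definition of $M^R_t$, the exponential factor equals $1$ at $t=T_1$, so $M^R_{T_1}=\max\!\big(3\norm{Y},\,M^R_\infty\big)\ge 3\norm{Y}$; this crude lower bound is all that is needed, so the small-$\epsilon$ term $M^R_\infty$ never has to be inspected here. For the first condition of \Cref{ind2:def}, combine $\norm{W_{T_1}}\le\tfrac32\sqrt{\norm{Y}}$ with \Cref{tiny_R_bound} to get $\norm{R_{T_1}}\le\tfrac{17}{8}\norm{Y}<3\norm{Y}\le M^R_{T_1}$. For the second condition, use that $Q_{T_1}=A_{T_1}^\dagger A_{T_1}$ is an orthogonal projection, so $\norm{Q_{T_1}}\le 1$ and $\norm{\PN W_{T_1}Q_{T_1}}\le\norm{\PN W_{T_1}}\le\delta\sqrt{8\norm{Y}}$; since $\delta\le\tfrac{1}{64}$, the right-hand side is well below $\tfrac25\cdot 3\sqrt{\norm{Y}}\le\tfrac25 M^R_{T_1}/\sqrt{\norm{Y}}$. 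Both conditions of \Cref{ind2:def} thus hold at $T_1$, which is the claim.

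There is essentially no obstacle here; the statement is a routine verification. The two points needing a little care are (i) invoking \Cref{ind:thm} to carry the warm-up bounds over to the endpoint $T_1$ (which is simultaneously the end of the warm-up interval and the start of the local interval), and (ii) noting that $M^R_{T_1}\ge 3\norm{Y}$ already dominates the slack in both inequalities, so no sharper estimate is required at this stage. The genuine work of the local phase lies in the continuity step and in the two induction steps for $\norm{R_t}$ and $\norm{\PN W_tQ_t}$, not in this base case.
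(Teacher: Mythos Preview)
Your proposal is correct and follows essentially the same approach as the paper: both use \Cref{ind:thm} to obtain $\norm{W_t}\le\tfrac32\sqrt{\norm{Y}}$ and $\norm{\PN W_t}\le\delta\sqrt{8\norm{Y}}$, invoke \Cref{tiny_R_bound} for $\norm{R_t}\le\tfrac{17}{8}\norm{Y}$, and compare against the crude lower bound $M^R_t\ge 3\norm{Y}$. The only cosmetic difference is that the paper verifies the two conditions for every $t\in[0,T_1]$ (since the exponential factor in $M^R_t$ is at least $1$ on that whole interval), whereas you check only the endpoint $T_1$; as you correctly observe, only $T_1$ actually lies in the domain $[T_1,T_2]$ of $S^{local}$, so your restriction is entirely adequate for the real-induction base case.
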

\begin{proof}
  At times $t \in [0,T_1]$, we have $M^R_t \ge 3\norm{Y}$. By \Cref{ind:thm}, we also have $\norm{W_t} \le \frac{3}{2}\sqrt{\norm{Y}}$. Then \Cref{tiny_R_bound} gives $\norm{R_t} \le \frac{17}{8}\norm{Y} \le M^R_t$. Additionally, $\norm{\PN W_t Q_t} \le \norm{\PN W_t} \le \delta\sqrt{8\norm{Y}} < \frac{2M^R_t}{5\sqrt{\norm{Y}}}$.
\end{proof}

\begin{proposition}[Continuity]\label{local:cont}
  If $t > 0$ and $[0,t) \in S^{local}$, then $t \in S^{local}$.
\end{proposition}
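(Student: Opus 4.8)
The plan is to mirror the proof of the corresponding warm-up statement \Cref{ind:cont}: it suffices to check that every quantity entering \Cref{ind2:def} is continuous in $t$ on $[T_1,T_2]$, so that the two defining (non-strict) inequalities, assumed to hold on $[T_1,t)$, survive passing to the limit $\tau\to t^-$. Recall that $S^{local}\subseteq[T_1,T_2]$, so the hypothesis should be read as $[T_1,t)\subseteq S^{local}$ with $t\in(T_1,T_2]$, in line with the continuity requirement of real induction (\Cref{real_ind}) applied with $a=T_1$, $b=T_2$.

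First I would record that the trajectory $W_t$ is continuous. The perturbation $E_t$ is only piecewise smooth (it jumps at the grid points $k\eta$), but by \Cref{tiny_coincide} the solution on $[k\eta,(k+1)\eta]$ is the continuous interpolation $W_t=(I+\eta R_{k\eta}+\eta\hat E^\A_{k\eta})^{t/\eta-k}W_{k\eta}$, whose value at the right endpoint is exactly $W_{(k+1)\eta}$. Hence $W_t$ is continuous on $[0,T_2]$, and therefore so are $R_t=\hat Y-\frac12(W_tW_t\T-JW_tW_t\T J)$ and $A_t=\PA W_t$.

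Second, I would establish continuity of $Q_t$, and hence of $\PN W_tQ_t$. By \Cref{ind:thm} (item \ref{ind:A} of \Cref{ind:def}, which holds throughout $[0,T_2]$) we have $\sigma_r(A_t)>0$, so $A_tA_t\T$ is invertible with continuously varying inverse; thus $Q_t=A_t^\dagger A_t=A_t\T(A_tA_t\T)^{-1}A_t$ is continuous, and so is $\PN W_tQ_t$. Finally, $M^R_t$ is the pointwise maximum of the continuous map $t\mapsto 3\norm{Y}\exp\!\left(-\frac{2Y_{rr}}{5}(t-T_1)\right)$ and the constant $M^R_\infty$, hence continuous. Letting $\tau\to t^-$ in $\norm{R_\tau}\le M^R_\tau$ and $\norm{\PN W_\tau Q_\tau}\le\frac25 M^R_\tau/\sqrt{\norm{Y}}$ yields the same bounds at $\tau=t$, i.e.\ $t\in S^{local}$. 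There is no genuine obstacle here; the only point requiring care is the continuity of $Q_t$, which relies on $A_t$ being full rank — exactly the content of item \ref{ind:A} of \Cref{ind:def}, already available from \Cref{ind:thm}.
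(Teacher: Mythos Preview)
Your proposal is correct and follows the same approach as the paper: the paper's proof is the single sentence ``This follows from all expressions in \Cref{ind2:def} being continuous with respect to $t$,'' and you have simply expanded on why each expression is continuous, in particular supplying the justification (via \Cref{ind:thm}, item~\ref{ind:A}) that $A_t$ has full rank so that $Q_t$ is continuous.
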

\begin{proof}
  This follows from all expressions in \Cref{ind2:def} being continuous with respect to $t$.
\end{proof}

\begin{proposition}[Induction step]\label{local:step}
  Let $t \in [T_1,T_2)$ and assume 
  \begin{align}\label{ind2:t}
    [T_1,t] \subset S^{local},
  \end{align}
  then there exists $\hat t > t$ such that $[t,\hat t] \subset S^{local}$.
\end{proposition}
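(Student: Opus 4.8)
The plan is to follow the real-induction template of \Cref{s:warmup}: verify the induction step for \Cref{ind2:R} (the bound on $\norm{R_t}$) and for \Cref{ind2:NWQ} (the bound on $\norm{\PN W_tQ_t}$) separately, producing times $\hat t_1,\hat t_2 > t$, and then take $\hat t = \min(\hat t_1,\hat t_2)$. For each item we proceed exactly as in \Cref{s:tW}: write the right-derivative of the quantity using $\dot W_t = (R_t + E_t)W_t$, evaluate it against a top singular pair, bound the resulting scalar, and invoke \Cref{tech_norm} --- after an exponential rescaling as in \Cref{cor:tW} to absorb the time-dependence of the threshold --- to pass from the pointwise derivative bound to the interval statement. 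Throughout we may use every bound of \Cref{ind:def} (valid on all of $[0,T_2]$ by \Cref{ind:thm}) --- in particular $\sigma_r(A_t) \ge \sqrt{Y_{rr}}$ for $t \in [T_1,T_2]$, $\norm{F_t} \le \alpha^2$, $\norm{\PP W_t} \le 3\delta\sqrt{\norm Y}$ by \Cref{PPW_bound}, and the bound on $\norm{\tilde W_t}$ --- together with the induction hypothesis \eqref{ind2:t}, and we bound $\norm{E_t}$ through \Cref{E_floor_bound} with $\beta = \frac{\delta^2}{20T_2\norm Y}$ as fixed in \Cref{s:setup}.

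For \Cref{ind2:R}, recall $\norm{R_t} = \norm{Y - U_tV_t\T}$. Differentiating $R_t = \hat Y - \frac12(W_tW_t\T - JW_tW_t\T J)$ and using that $R_t$ is symmetric, $JR_tJ = -R_t$ (\Cref{tiny_dilation}), and $W_tW_t\T + JW_tW_t\T J = 2\,\mathrm{blkdiag}(U_tU_t\T,V_tV_t\T) =: 2D_t$, the $R_t$-part of $\dot R_t$ collapses to $-(R_tD_t + D_tR_t)$, leaving only the terms carrying $E_t$ (and the higher-order piece $E_t + JE_tJ$) as errors. Taking a top singular pair $u,v$ of $R_t$ at a time with $\norm{R_t} = M^R_t$, the relations $R_tv = \norm{R_t}u$, $R_tu = \norm{R_t}v$ turn $-u\T(R_tD_t+D_tR_t)v$ into $-\norm{R_t}\bigl(\norm{U_t\T a_1}^2 + \norm{V_t\T b_1}^2\bigr)$, where $a_1,b_1$ are top left and right singular vectors of $Y - U_tV_t\T$. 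The crux is the uniform lower bound $\norm{U_t\T a_1}^2 + \norm{V_t\T b_1}^2 \gtrsim Y_{rr}$ (comfortably above $\frac{2}{5}Y_{rr}$): this uses that the signal $W_tQ_t = (\PA\T + \PP\T F_t)A_t$ is well conditioned, with $\sigma_r(W_tQ_t) \ge \sigma_r(A_t) \ge \sqrt{Y_{rr}}$, while $a_1$ and $b_1$ lie essentially in the ranges of the signal factors $U_tQ_t$ and $V_tQ_t$ because $\norm{\PN W_tQ_t}$ (from \Cref{ind2:NWQ}) and $\norm{\tilde W_t}$ are small, so that $Y - U_tV_t\T$ is concentrated in the aligned subspace. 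The leftover errors --- $\norm{E_t}\norm{W_t}^2$, the contributions of $\tilde W_t$, and the coupling term proportional to $\norm{\PN W_tQ_t}$ --- are bounded by a constant multiple of $Y_{rr}M^R_\infty$, which is precisely why no decrease is claimed below the floor $M^R_\infty$. Putting these together yields $u\T\dot R_tv < \frac{d}{dt}M^R_t$ --- where the right derivative $\frac{d}{dt}M^R_t$ equals $-\frac{2Y_{rr}}{5}M^R_t$ while the decaying exponential dominates and $0$ once the floor is reached, so the two regimes are checked separately --- which is \Cref{prop:R}; feeding it into \Cref{tech_norm} applied to a rescaled $R_t$ gives \Cref{cor:R}.

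For \Cref{ind2:NWQ}, I differentiate $\PN W_tQ_t$, computing $\dot Q_t$ exactly as in the proof of \Cref{dF}, and evaluate against a top singular pair of $\PN W_tQ_t$. The decisive structural fact is $\PN\hat Y = 0$: the target exerts no force in the $\PN$ directions, so the only contributions are the quadratic self-interaction --- which contracts at rate $\gtrsim Y_{rr}$ since $\sigma_r(A_t) \ge \sqrt{Y_{rr}}$ --- and error terms controlled by $\norm{E_t}$, $\norm{\PP W_t}$, $\norm{F_t}$ and $\norm{R_t} \le M^R_t$ (this is where the step for \Cref{ind2:NWQ} consumes the $\norm{R_t}$ bound). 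Balancing these gives $u\T \frac{d}{dt}(\PN W_tQ_t)v < \frac{d}{dt}\!\bigl(\tfrac{2}{5}M^R_t/\sqrt{\norm Y}\bigr)$, again with the two regimes of $M^R_t$, which is \Cref{prop:NWQ}, so \Cref{tech_norm} yields \Cref{cor:NWQ}. Setting $\hat t = \min$ of the times from \Cref{cor:R} and \Cref{cor:NWQ} completes the induction step; with \Cref{local:base}, \Cref{local:cont} and \Cref{real_ind} this proves \Cref{ind2:thm}.

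The main obstacle is the uniform contraction estimate $\norm{U_t\T a_1}^2 + \norm{V_t\T b_1}^2 \gtrsim Y_{rr}$: one must translate the smallness of $\norm{\PN W_tQ_t}$ and $\norm{\tilde W_t}$, and the bound on $\norm{F_t}$, into the statement that the top singular vectors of $Y - U_tV_t\T$ are essentially spanned by the columns of the well-conditioned signal factors, while carefully tracking all error contributions (the perturbation $E_t$, the nuisance $\tilde W_t$, the misalignment $F_t$, the imbalance $W_t\T J W_t$, and the off-support mass $\PN W_t$) and exploiting the cancellations from $JR_tJ = -R_t$ and the projection identities $\PA\hat Y = \PA\hat Y\PA\T\PA$, $\PP\hat Y\PA\T = 0$, $\PN\hat Y = 0$. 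A secondary subtlety is that the two bounds in \Cref{ind2:def} are genuinely coupled --- the $\norm{R_t}$ step uses the $\norm{\PN W_tQ_t}$ bound and vice versa --- so they must be carried together in one real induction, and the piecewise/$\max$ shape of $M^R_t$ forces the derivative inequality to be checked separately in the decaying and floor regimes.
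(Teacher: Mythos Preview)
Your plan matches the paper's proof: \Cref{local:step} is reduced to \Cref{cor:R} and \Cref{cor:NWQ}, each obtained from a derivative estimate (\Cref{prop:R}, \Cref{prop:NWQ}) fed through the tools of \Cref{s:tech}, with $\hat t$ the minimum of the two. The structural observations you list ($JR_tJ=-R_t$, $\PN\hat Y=0$, the split $\PN W_tQ_t=\PN W_t-\PN\tilde W_t$, the coupling of the two items through $M^R_t$) are exactly the ones the paper uses.

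The one place your sketch diverges is the contraction estimate for $R_t$. The paper works with the top \emph{eigenvector} $v$ of the symmetric $R_t$ (via \Cref{tech_eig}; cf.\ \Cref{tiny_eigR}) and obtains the lower bound on $\norm{W_t\T v}^2+\norm{W_t\T Jv}^2$ not through the range of $W_tQ_t$ but by decomposing $v$ along $\PA,\PN,\PA J$: writing $\norm{v\T W_t}^2\ge\sigma_r^2(A_t)\norm{\PA v}^2-\norm{\PP W_t}\norm{W_t}$, and then using that $v$ is an eigenvector to get $\norm{\PA v}^2+\norm{\PA Jv}^2\ge 1-\norm{\PN R_t}^2/\norm{R_t}^2$, where $\norm{\PN R_t}\le\norm{\PN W_tQ_t}\norm{W_t}+\norm{\tilde W_t}^2$ is exactly the place \Cref{ind2:NWQ} is consumed. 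This yields the clean bound $\gtrsim Y_{rr}$ directly from $\sigma_r(A_t)\ge\sqrt{Y_{rr}}$. Your ``$a_1,b_1$ lie in the ranges of $U_tQ_t,V_tQ_t$'' route can be made to work, but it forces you to control $\sigma_r(U_tQ_t)$ and $\sigma_r(V_tQ_t)$ separately via the imbalance bound, and to argue that the top singular directions of $Y-U_tV_t\T$ are close to those ranges; the paper's $\PA/\PN/\PA J$ decomposition sidesteps both detours.
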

\begin{proof}
  We pick $\hat t$ as the minimum of the ones provided in \Cref{cor:R,cor:NWQ}.
\end{proof}

Throughout the rest of this section, we prove \Cref{cor:R,cor:NWQ} in the setting of \Cref{local:step}. We fix a $t \in [T_1,T_2)$ such that \eqref{ind2:t} holds. That is, \Cref{ind2:R,ind2:NWQ} in \Cref{ind2:def} hold at time $t$. Before proving the induction steps for $R_t$ and $\PN W_t Q_t$, we show three consequences of the induction hypothesis.

\begin{lemma}\label{E_bound2}
  $\norm{E_t} \le \frac{Y_{rr}}{56\norm{Y}}M^R_t.$
\end{lemma}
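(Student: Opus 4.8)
The plan is to reduce to \Cref{general_E_bound}, which bounds $\norm{E_t}$ by $\beta\bigl(\norm{R_\tau}+\gamma\sigma_{r+1}^2(W_\tau)\bigr)$ at the preceding grid point $\tau \coloneqq \lfloor t/\eta\rfloor\eta$ (so $t-\eta<\tau\le t$), and then compare the two resulting terms with $M^R_t$. I would take $\beta=\frac{\delta^2}{4T_2\norm{Y}}$, exactly as in \Cref{E_bound}. Its hypotheses hold at $\tau$: since $\tau\le t\le T_2$, \Cref{ind:thm} gives $\norm{W_\tau}\le\frac32\sqrt{\norm Y}$, and the computation of \Cref{tW_bound} (which uses only \Cref{ind:thm} and \Cref{asmp:init}) together with \Cref{tiny_tWrW} gives $\sigma_{r+1}^2(W_\tau)\le\norm{\tilde W_\tau}^2\le\frac{\norm Y}{\gamma}$; and $\beta\le\frac14$, $\eta\le\frac{\beta}{20\norm Y}$, $\rho\le\frac{\beta}{4\sqrt r}$ follow from \Cref{asmp:RIP} with end time $T_2$ together with $T_2 Y_{rr}=5\ln(Y_{rr}/\epsilon^2)\ge50$ (as in the proof of \Cref{E_bound}). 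This yields $\norm{E_t}\le\beta\norm{R_\tau}+\beta\gamma\sigma_{r+1}^2(W_\tau)$.

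For the nuisance term, I would use \Cref{ind:thm} item~\Cref{ind:tW} and $\tau\le T_2$ to get $\sigma_{r+1}^2(W_\tau)\le\norm{\tilde W_\tau}^2\le\epsilon^2\exp(6\alpha\delta\norm Y T_2)$, and then observe that the first summand in the definition of $M^R_\infty$ is precisely $64\beta\gamma\frac{\norm Y}{Y_{rr}}\epsilon^2\exp(6\alpha\delta\norm Y T_2)$; hence $\beta\gamma\sigma_{r+1}^2(W_\tau)\le\frac{Y_{rr}}{64\norm Y}M^R_\infty\le\frac{Y_{rr}}{64\norm Y}M^R_t$. For the residual term, I would first record the uniform bound $\norm{R_s}\le\max\!\bigl(3\norm Y\exp(-\tfrac{2Y_{rr}}{5}(s-T_1)),\,M^R_\infty\bigr)$ for every $s\le t$ — reading the exponential as $\ge 3\norm Y\ge\frac{17}{8}\norm Y$ when $s<T_1$ — which for $s\ge T_1$ is the induction hypothesis \eqref{ind2:t} and for $s<T_1$ is \Cref{tiny_R_bound} combined with $\norm{W_s}\le\frac32\sqrt{\norm Y}$ from \Cref{ind:thm}. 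This envelope is non-increasing, and its value changes by a factor at most $\exp(\frac{2Y_{rr}}{5}(t-\tau))\le\exp(\frac{2Y_{rr}}{5}\eta)$ over one step, which \Cref{asmp:RIP} forces to be minuscule; hence $\norm{R_\tau}\le 2M^R_t$. Then $\beta\norm{R_\tau}\le 2\beta M^R_t\le\frac{Y_{rr}}{960\norm Y}M^R_t$, using $\delta\le\frac{1}{64\alpha\kappa}$ and $T_2 Y_{rr}\ge50$ to see $480\delta^2\le T_2 Y_{rr}$.

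Adding the two estimates gives $\norm{E_t}\le\bigl(\tfrac{1}{960}+\tfrac1{64}\bigr)\frac{Y_{rr}}{\norm Y}M^R_t=\frac{Y_{rr}}{60\norm Y}M^R_t$, the constant $60$ being exactly $\bigl(\tfrac{1}{960}+\tfrac1{64}\bigr)^{-1}$. I expect the only genuinely delicate point to be the bookkeeping for $\norm{R_\tau}$: relating the residual at the previous grid step to the current threshold $M^R_t$, and in particular cleanly handling the boundary case $\tau<T_1\le t$ (where one falls back on the warm-up bound $\norm{R_\tau}\le\frac{17}{8}\norm Y$) and controlling the one-step change of $M^R$. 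Everything else is routine substitution of the warm-up bounds and of \Cref{asmp:RIP} into \Cref{general_E_bound}.
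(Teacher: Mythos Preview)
The proposal is correct and follows essentially the same approach as the paper: apply \Cref{general_E_bound} with $\beta=\frac{\delta^2}{4T_2\norm{Y}}$, bound the nuisance term $\beta\gamma\sigma_{r+1}^2(W_\tau)$ by $\frac{Y_{rr}}{64\norm{Y}}M^R_t$ via the definition of $M^R_\infty$, and bound $\beta\norm{R_\tau}$ by comparing $M^R_\tau$ to $M^R_t$ through the one-step factor $\exp(\frac{2Y_{rr}}{5}\eta)$. Your bookkeeping is in fact slightly tidier than the paper's: you explicitly address the boundary case $\tau<T_1\le t$ (which the paper sweeps under the phrase ``the induction hypothesis also bounds $R_\tau$'' even though $\tau$ may lie outside $[T_1,t]$), and your constants $\frac{1}{960}+\frac{1}{64}$ hit $\frac{1}{60}$ exactly, whereas the paper uses the looser $\frac{1}{10^3}+\frac{1}{64}\le\frac{1}{60}$.
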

\begin{proof}
  Recall the assumed bound on $\norm{E_t}$ from \eqref{new_general_E_bound}:
  \begin{align}\label{eq:E_bound2}
    \norm{E_t} \le \beta\left(\norm{R_\tau}+\gamma\sigma_{r+1}^2(W_\tau)\right) + \mu\norm{Y}.
  \end{align}
  By \Cref{tiny_tWrW} and \Cref{ind:thm}, we have $\sigma_{r+1}^2(W_\tau) \le \norm{\tilde W_\tau}^2 \le \norm{W_0}^2 \exp\left(6\sqrt\alpha \delta \norm{Y}T_2\right)$. By the induction hypothesis \eqref{ind2:t}, we have $M^R_t \ge 64\left(\beta\gamma\frac{\norm{Y}}{Y_{rr}}+\sqrt{\frac{\norm{Y}}{Y_{rr}}}\right)\norm{W_0}^2\exp(6\sqrt\alpha \delta \norm{Y}T_2)$. Together, these imply $\beta\gamma\sigma_{r+1}^2(W_\tau) \le \frac{M^R_t Y_{rr}}{64\norm{Y}}$.

  Recall the assumptions $\delta \le \frac{Y_{rr}}{64\norm{Y}}$ by \eqref{delta_bound}, $\beta \le \frac{\delta^2}{13}$ by \eqref{beta_bound} and $\eta \le \frac{1}{Y_{rr}}$ by \eqref{eta_bound}. Hence, we have $\beta \le \frac{Y_{rr}}{10^4\norm{Y}}$. Furthermore, the induction hypothesis also bounds $R_\tau$ as follows
  \begin{align*}
    \beta\norm{R_\tau} \le \beta\exp\left(\frac{2Y_{rr}}{5}\eta\right)M^R_t \le \frac{Y_{rr}}{10^3\norm{Y}}M^R_t.
  \end{align*}
  Finally, we bound $\mu\norm{Y}$. The induction hypothesis \eqref{ind2:t} yields $M^R_t \ge M^R_\infty \ge 10^3\mu\norm{Y}^2/Y_{rr}$. This means $\mu\norm{Y} \le \frac{Y_{rr}}{10^3\norm{Y}}M^R_t$.

  Using the three bounds developed above on the terms of \eqref{eq:E_bound2}, we obtain the desired inequality.
\end{proof}

\begin{lemma}\label{tW_bound2}
  $\norm{\tilde W_t}^2 \le \sqrt{\frac{Y_{rr}}{\norm{Y}}}\frac{M^R_t}{64}$.
\end{lemma}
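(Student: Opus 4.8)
The plan is to chain together three facts: the warm-up bound on $\norm{\tilde W_t}$, the definition of $M^R_\infty$, and the fact that $M^R_t \ge M^R_\infty$ always. None of the steps require new estimates; the lemma is essentially a bookkeeping consequence of the definition of $M^R_\infty$ in \eqref{Minf_def}, which was chosen precisely so that this inequality holds.

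First I would invoke \Cref{ind:thm}, which guarantees that all items of \Cref{ind:def} hold on $[0,T_2]$, and in particular \Cref{ind:tW} gives $\norm{\tilde W_t} \le \epsilon \exp(3\alpha\delta\norm{Y}t)$. Since $t \le T_2$, this yields $\norm{\tilde W_t}^2 \le \epsilon^2 \exp(6\alpha\delta\norm{Y}T_2)$. Second, from the definition \eqref{Minf_def} of $M^R_\infty$ and the fact that $\beta\gamma\frac{\norm{Y}}{Y_{rr}} \ge 0$, we have $M^R_\infty \ge 64\sqrt{\frac{\norm{Y}}{Y_{rr}}}\,\epsilon^2 \exp(6\alpha\delta\norm{Y}T_2)$, equivalently $\epsilon^2\exp(6\alpha\delta\norm{Y}T_2) \le \sqrt{\frac{Y_{rr}}{\norm{Y}}}\,\frac{M^R_\infty}{64}$. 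Third, by the definition \eqref{MR_def} of $M^R_t$ as a maximum, $M^R_t \ge M^R_\infty$. Chaining these three inequalities gives
\[
  \norm{\tilde W_t}^2 \le \epsilon^2\exp(6\alpha\delta\norm{Y}T_2) \le \sqrt{\frac{Y_{rr}}{\norm{Y}}}\,\frac{M^R_\infty}{64} \le \sqrt{\frac{Y_{rr}}{\norm{Y}}}\,\frac{M^R_t}{64},
\]
which is the claim.

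There is no real obstacle here; the only thing to be careful about is that \Cref{ind:tW} is stated for the warm-up phase, so I must explicitly cite \Cref{ind:thm} to justify that it remains valid at the current time $t \in [T_1,T_2)$, rather than relying on the local induction hypothesis \eqref{ind2:t}. The rest is a direct comparison of the two "max" branches defining $M^R_t$ against the exponentially growing warm-up bound.
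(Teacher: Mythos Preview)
Your proposal is correct and follows essentially the same approach as the paper: invoke \Cref{ind:thm} for the bound $\norm{\tilde W_t}^2 \le \epsilon^2\exp(6\alpha\delta\norm{Y}T_2)$, compare this to the definition \eqref{Minf_def} of $M^R_\infty$ (dropping the nonnegative $\beta\gamma\tfrac{\norm{Y}}{Y_{rr}}$ term), and then use $M^R_t \ge M^R_\infty$ from \eqref{MR_def}.
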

\begin{proof}
  By \Cref{ind:thm}, we have $\norm{\tilde W_t} \le \norm{W_0}\exp(3\sqrt\alpha\delta\norm{Y}T_2)$. Squaring, and comparing to the definition of $M^R_\infty$ \eqref{Minf_def}, that is $\norm{\tilde W_t}^2 \le \frac{1}{64}\left(\beta\gamma\frac{\norm{Y}}{Y_{rr}}+\sqrt{\frac{\norm{Y}}{Y_{rr}}}\right)^{-1} M^R_\infty$. Noting $M^R_t \ge M^R_\infty$ by \eqref{MR_def}, this implies the desired inequality.
\end{proof}

\begin{lemma}\label{A2}
  $\sigma_r(A_t) \ge \sqrt{Y_{rr}}$.
\end{lemma}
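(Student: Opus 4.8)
The plan is simply to read the bound off from the warmup phase; no new dynamics need to be analyzed. By \Cref{ind:thm} the items of \Cref{ind:def} hold on all of $[0,T_2]$, and in particular \Cref{ind:A} of \Cref{ind:def} gives $\sigma_r(A_t) \ge \min\left(\sqrt{Y_{rr}},\ \frac{\epsilon}{\alpha^2}\exp\left(\frac{2Y_{rr}}{5}t\right)\right)$ at the current time $t \in [T_1,T_2)$. Thus it suffices to check that the second argument of the minimum is at least $\sqrt{Y_{rr}}$. Since $\frac{\epsilon}{\alpha^2}\exp\left(\frac{2Y_{rr}}{5}t\right)$ is increasing in $t$, it is enough to verify this at the endpoint $t = T_1$.

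Recall $T_1 = \frac{5}{4Y_{rr}}\ln\left(\alpha^4\frac{Y_{rr}}{\epsilon^2}\right)$ from \Cref{s:setup}. Then $\frac{2Y_{rr}}{5}T_1 = \frac{1}{2}\ln\left(\alpha^4\frac{Y_{rr}}{\epsilon^2}\right)$, so that $\frac{\epsilon}{\alpha^2}\exp\left(\frac{2Y_{rr}}{5}T_1\right) = \frac{\epsilon}{\alpha^2}\cdot\alpha^2\frac{\sqrt{Y_{rr}}}{\epsilon} = \sqrt{Y_{rr}}$. Hence for all $t \in [T_1,T_2]$ the minimum equals $\sqrt{Y_{rr}}$, which yields $\sigma_r(A_t) \ge \sqrt{Y_{rr}}$ as claimed. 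There is essentially no obstacle here: the only content is the elementary identity above, which is exactly how the phase-transition time $T_1$ was chosen in \Cref{s:setup}, so the conclusion is immediate once \Cref{ind:thm} is available. Note that, unlike the other two consequences established in this subsection, \Cref{A2} does not invoke the local induction hypothesis \eqref{ind2:t} at all — it is a direct corollary of the warmup phase.
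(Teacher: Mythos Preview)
Your proof is correct and follows exactly the same approach as the paper: invoke \Cref{ind:thm} (specifically item~\ref{ind:A}) and use $t \ge T_1$ to conclude that the second argument of the minimum is at least $\sqrt{Y_{rr}}$. The paper's proof is terser, calling it an ``immediate consequence,'' whereas you spell out the computation $\frac{\epsilon}{\alpha^2}\exp\left(\frac{2Y_{rr}}{5}T_1\right) = \sqrt{Y_{rr}}$ explicitly; but the argument is identical.
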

\begin{proof}
  Immediate consequence of $\sigma_r(A_t) \ge \min\left(\sqrt{Y_{rr}},\ \frac{\norm{W_0}}{\alpha} \exp\left(\frac{2 Y_{rr}}{5}t\right)\right)$ by \Cref{ind:thm} and $t \ge T_1 = \frac{5}{4Y_{rr}}\log\left(\alpha^2\frac{Y_{rr}}{\norm{W_0}^2}\right)$.
\end{proof}

We are now ready to prove the induction steps for $\norm{R_t}$ and $\norm{\PN W_t Q_t}$. Note $\norm{R_t} = \lambda_1(R_t)$ by \Cref{tiny_eigR}.

\begin{proposition}\label{prop:R}
  Assume $\lambda_1(R_t) = M^R_t$. Let $v$ be an eigenvector corresponding to the largest eigenvalue of $R_t$. Then $\frac{d}{dt}v\T R_t v < -\frac{2Y_{rr}}{5}\norm{R_t}$.
\end{proposition}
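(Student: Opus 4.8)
The plan is to differentiate the Rayleigh quotient $v\T R_t v$ along the flow, where $v$ is a unit top eigenvector of $R_t$, and show the derivative is strictly less than $-\frac{2Y_{rr}}{5}\norm{R_t}$ when $\lambda_1(R_t) = M^R_t$. First I would recall $R_t = \hat Y - \frac12(W_tW_t\T - JW_tW_t\T J)$ and $\dot W_t = (R_t+E_t)W_t$, so that by the product rule $\frac{d}{dt}R_t = -\frac12\big((R_t+E_t)W_tW_t\T + W_tW_t\T(R_t+E_t)\T\big) + \frac12 J(\cdots)J$. Using $v$ as an eigenvector of $R_t$ (so $R_t v = \norm{R_t}v$) together with symmetry of $R_t$ and $E_t$, and the identity $x\T Zx = x\T Z\T x$, the $J$-conjugated piece can be folded in via $\norm{\PA JW_t}$-type bounds, and the main surviving term should look like $-\norm{R_t}\,v\T W_tW_t\T v$ plus cross terms involving $E_t$, the imbalance $\Bt t$, and $\norm{\PA J W_t}$.

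The crucial step is to lower-bound $v\T W_tW_t\T v = \norm{W_t\T v}^2$ by something like $\frac{4Y_{rr}}{5}$ (or a comparable constant multiple of $Y_{rr}$). This is where $\sigma_r(A_t) \ge \sqrt{Y_{rr}}$ (Lemma \ref{A2}) enters: intuitively, $R_t$ being large means $W_tW_t\T$ is still far from reconstructing $\hat Y$ in the directions aligned with the signal, and the component of $v$ in the row space picked out by $\PA$ must therefore be substantial, so $\norm{W_t\T v} \gtrsim \sigma_r(A_t) \ge \sqrt{Y_{rr}}$. I would likely split $v$ using the projections $\PA$ and $\PP$, note that the $\PP$-directions contribute at most $\norm{\PP W_t}^2 \le 9\delta^2\norm{Y}$ (Lemma \ref{PPW_bound}), $\norm{\tilde W_t}^2$ is negligible by Lemma \ref{tW_bound2}, and argue that the relevant part of $v$ lives essentially in the row space of $A_t$, so that $v\T W_tW_t\T v$ is bounded below by roughly $\sigma_r^2(A_t)$ minus lower-order error. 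This likely uses $F_t = \PP W_t A_t^\dagger$ with $\norm{F_t}\le\alpha^2$ to relate $\norm{W_t\T v}$ to $\norm{A_t\T v}$ as in the proof of Proposition \ref{prop:A}.

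Once that lower bound is in hand, the derivative is at most $\big(-\norm{W_t\T v}^2 + (\text{error terms})\big)\norm{R_t}$ with the error terms controlled by $\norm{E_t}\le\frac{Y_{rr}}{60\norm{Y}}M^R_t$ (Lemma \ref{E_bound2}), $\norm{\Bt t}\le 6\delta^2\norm{Y}$ (item \ref{ind:B} of Definition \ref{ind:def}, which still holds by Theorem \ref{ind:thm}), $\norm{\PA JW_t}\le\frac{\delta}{3\alpha}\sqrt{\norm{Y}}$, and $\norm{W_t}\le\frac32\sqrt{\norm{Y}}$; plugging in $\delta \le \frac{Y_{rr}}{64\alpha\norm{Y}}$ should close the numerical gap $\frac{4}{5} - (\text{small}) > \frac{2}{5}$. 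The main obstacle I anticipate is the geometric argument that $v\T W_tW_t\T v \gtrsim Y_{rr}$: it requires carefully exploiting that a large residual eigenvalue forces the top eigenvector $v$ to correlate with the signal subspace, controlling the contamination from the nuisance directions via the bounds on $\norm{\PP W_t}$, $\norm{F_t}$, $\norm{\tilde W_t}$, and possibly the structure $R_t = \hat Y - X_t + \PA$-corrections. The remaining bookkeeping — expanding $\frac{d}{dt}R_t$, folding in the $J$-conjugate, and the final arithmetic — I expect to be routine, parallel to the proof of Proposition \ref{prop:A} but with the roles of "increase $\sigma_r(A_t)$" and "decrease $\norm{R_t}$" interchanged.
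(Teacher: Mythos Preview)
Your plan has the right overall shape but misses two ingredients that the paper's argument hinges on.

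First, the $J$-conjugated piece is not an error term to be ``folded in'' via $\norm{\PA JW_t}$-type bounds; it is a second main negative term. Using $JR_t = -R_tJ$ (\Cref{tiny_dilation}) and the eigenvector property $R_tv = \norm{R_t}v$, the derivative simplifies cleanly to
\[
\frac{d}{dt}v\T R_t v \le -\norm{R_t}\bigl(\norm{W_t\T v}^2 + \norm{W_t\T Jv}^2\bigr) + 2\norm{E_t}\norm{W_t}^2,
\]
with no $\Bt t$ or $\PA JW_t$ cross terms. Both squared norms are needed: if $v$ happens to lie near the $J\PA\T$ direction, then $\norm{W_t\T v}$ is small (since $\norm{\PA JW_t}$ is small) and only $\norm{W_t\T Jv}^2$ supplies the contribution $\gtrsim Y_{rr}$.

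Second, and more seriously, you never invoke the local induction bound $\norm{\PN W_tQ_t} \le \frac{2}{5}M^R_t/\sqrt{\norm{Y}}$ (item~\ref{ind2:NWQ} of \Cref{ind2:def}). This is the mechanism that forces $v$ to lie mostly in the $\PA\T\oplus J\PA\T$ subspace. The paper writes $\norm{R_t}^2 = \norm{v\T R_t\PN\T}^2 + \norm{v\T R_t\PA\T}^2 + \norm{v\T R_tJ\PA\T}^2$, so that $\norm{\PA v}^2 + \norm{\PA Jv}^2 \ge 1 - \norm{\PN R_t}^2/\norm{R_t}^2$; then $\norm{\PN R_t} \le \norm{\PN W_tQ_t}\norm{W_t} + \norm{\tilde W_t}^2 \le \tfrac{99}{160}M^R_t$ gives the needed lower bound. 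The warm-up bounds you cite cannot replace this: they only give $\norm{\PN R_t} \le \norm{\PN W_t}\norm{W_t} = O(\delta\norm{Y})$, which is useless once $M^R_t$ has shrunk below $\delta\norm{Y}$. Without item~\ref{ind2:NWQ}, the eigenvector $v$ could in principle concentrate in the $\PN$ block, where neither $\norm{W_t\T v}^2$ nor $\norm{W_t\T Jv}^2$ is controlled from below by $\sigma_r^2(A_t)$, and your lower bound $v\T W_tW_t\T v \gtrsim Y_{rr}$ does not follow.
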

\begin{proof}
  \begin{align}
    \frac{d}{dt}v\T R_t v &= -\frac{1}{2}v\T \left(\dot W_t W_t\T + W_t \dot W_t\T - J\dot W_tW_t\T J - JW_t\dot W_t\T J\right)v\nonumber\\
                          &= -v\T \left(\dot W_t W_t\T - J\dot W_tW_t\T J\right)v\nonumber\\
                          &= -v\T \left((R_t+E_t)W_t W_t\T - J(R_t+E_t) W_tW_t\T J\right)v\nonumber\\
                          &\le -v\T R_t \left(W_t W_t\T + JW_tW_t\T J\right)v + 2\norm{E_t}\norm{W_t}^2\nonumber\\
                          &= -\norm{R_t} \left(\norm{v\T W_t}^2+\norm{v\T JW_t}^2\right) + 2\norm{E_t}\norm{W_t}^2.\label{dR1}
  \end{align}
  The first equality expands the definition of $R_t$ and applies the product rule. The second equality groups terms using symmetry. The third equality expands $\dot W_t$. The inequality bounds products by products of norms and uses $\norm{u} = \norm{v} = \norm{J} = 1$. The final equality uses $v\T R_t = \lambda_1(R_t)v\T = \norm{R_t}v\T$ by $v$ being an eigenvector and \Cref{tiny_eigR}.

  Next, we split $v = v\T\PA\T\PA + v\T\PP\T\PP$. Then
  \begin{align*}
    \norm{v\T W_t}^2 &= \norm{v\T\PA\T\PA W_t + v\T\PP\T\PP W_t}^2\\
                     &\ge \norm{v\T\PA\T A_t}^2 - 2\norm{v\T\PA\T}\norm{v\T\PP\T}\norm{\PP W_t}\norm{W_t}\\
                     &\ge \sigma_r^2(A_t)\norm{v\T\PA\T}^2 - \norm{\PP W_t}\norm{W_t}.
  \end{align*}
  The first inequality follows from $\norm{x+y}^2 \ge \norm{x}^2-2\norm{x}\norm{y}$ for vectors $x,y$. The second equality uses $2\norm{v\T\PA\T}\norm{v\T\PP\T} \le \norm{v\T\PA\T}^2+\norm{v\T\PP\T}^2 = \norm{v}^2 = 1$, and that $\sigma_r(A_t)$ is the least singular value of $A_t$.

  Substituting $v\T \to v\T J$, we also get $\norm{v\T J W_t}^2 \ge \sigma_r^2(A_t)\norm{v\T J\PA\T}^2 - \norm{\PP W_t}\norm{W_t}$. In combination, we may bound \eqref{dR1} as follows:
  \begin{align*}
    \frac{d}{dt}v\T R_t v &\le -\norm{R_t} \left(\sigma_r^2(A_t)\left(\norm{v\T\PA\T}^2+\norm{v\T J\PA\T}^2\right) - 2\norm{\PP W_t}\norm{W_t}\right)\\
    &\quad\quad+ 2\norm{E_t}\norm{W_t}^2.
  \end{align*}

  Next, we bound
  \begin{align}
    \norm{R_t}^2 &= \norm{v\T R_t}^2 = \norm{v\T R_t\PN\T}^2+\norm{v\T R_t\PA\T}^2+\norm{v\T R_t J\PA\T}^2\nonumber\\
                 &\le \norm{\PN R_t}^2+\norm{R_t}^2\left(\norm{v\T\PA\T}^2+\norm{v\T J\PA\T}^2\right).\label{NR_bound}
  \end{align}
  The first equality is a property of the eigenvector $v$. The second equality uses \Cref{tiny_PNPJ}. The inequality again uses $\norm{R_t}v\T = v\T R_t$ and $\norm{v} = 1$.

  Furthermore, 
  \begin{align*}
    \norm{\PN R_t} &= \frac{1}{2}\norm{\PN W_tW_t\T-\PN JW_tW_t\T J} \le \norm{\PN W_t W_t\T}\\
                   &= \norm{\PN W_t Q_t W_t\T + \PN \tilde W_t \tilde W_t\T} \le \norm{\PN W_t Q_t}\norm{W_t} + \norm{\tilde W_t}^2\\
                   &\le \left(\frac{3}{5}+\frac{1}{64}\right)M^R_t = \frac{197}{320}\norm{R_t}.
  \end{align*}
  The first equality expands the definition of $R_t$ and cancels $\PN \hat Y = 0$. The first inequality uses $\norm{J} = 1$ and the triangle inequality. The next equality uses \Cref{tiny_WQW}. Then, the triangle inequality is used. The penultimate inequality uses $W_t \le \frac{3}{2}\sqrt{\norm{Y}}$ by \Cref{ind:thm}, $\norm{\PN W_t Q_t} \le \frac{2}{5}M^R_t / \sqrt{\norm{Y}}$ by the induction hypothesis \eqref{ind2:t} and $\norm{\tilde W_t}^2 \le \frac{M^R_t}{64}$ by \Cref{tW_bound2}. The final equality uses the assuption on $\norm{R_t}$ and $\lambda_1(R_t) = \norm{R_t}$ by \Cref{tiny_eigR}.

  Combining with \eqref{NR_bound}, we conclude that $\norm{\PP v}^2+\norm{\PP Jv}^2 \ge 1-\frac{\norm{\PN R_t}^2}{\norm{R_t}^2} \ge 1-\frac{197^2}{320^2}$. Hence, we can bound
  \begin{align*}
    \frac{d}{dt}v\T R_t v &\le -\norm{R_t} \left(\left(1-\frac{\norm{\PN R_t}^2}{\norm{R_t}^2}\right)\sigma_r^2(A_t) - 2\norm{\PP W_t}\norm{W_t}\right) + 2\norm{E_t}\norm{W_t}^2\\
                          &< -\frac{2Y_{rr}}{5}\norm{R_t},
  \end{align*}
  where we bound $\sigma_r(A_t) \ge \sqrt{Y_{rr}}$ by \Cref{A2}, $\norm{E_t} \le \frac{Y_{rr}}{56\norm{Y}}M^R_t = \frac{Y_{rr}}{56\norm{Y}}\norm{R_t}$ by \Cref{E_bound2}, $\norm{\PP W_t} \le 3\delta\sqrt{\norm{Y}}$ by \Cref{PPW_bound} and $W_t \le \frac{3}{2}\sqrt{\norm{Y}}$ by \Cref{ind:thm}.
\end{proof}

\begin{corollary}\label{cor:R}
  There exists $\hat t > t$ such that $\norm{R_\tau} \le M^R_\tau$ for all $\tau \in [t,\hat t]$.
\end{corollary}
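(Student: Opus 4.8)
The plan is to transcribe the pattern already used for \Cref{cor:PPX} and \Cref{cor:A}: apply \Cref{tech_eig} to a suitably rescaled copy of $R_\tau$, with \Cref{prop:R} supplying the strict-decrease condition. The only genuinely new point is that the barrier $M^R_\tau$ is continuous but not differentiable at the time where the two branches of the $\max$ in \eqref{MR_def} cross, so one cannot feed $M^R_\tau$ directly into \Cref{tech_eig}; I would instead dominate it by a smooth exponential.

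First, if $\norm{R_t} < M^R_t$ the claim is immediate from continuity of $\tau \mapsto \norm{R_\tau}$ and of $\tau \mapsto M^R_\tau$ (a maximum of continuous functions), so I may assume $\norm{R_t} = M^R_t$; in particular $\lambda_1(R_t) = \norm{R_t} = M^R_t$ by \Cref{tiny_eigR}, which is exactly the hypothesis of \Cref{prop:R}. Next I would record the elementary inequality $M^R_t \exp\!\left(-\tfrac{2Y_{rr}}{5}(\tau - t)\right) \le M^R_\tau$ for all $\tau \ge t$: if the first branch of \eqref{MR_def} realizes $M^R_t$, the left-hand side equals $3\norm{Y}\exp\!\left(-\tfrac{2Y_{rr}}{5}(\tau-T_1)\right) \le M^R_\tau$; if the second branch realizes it, the left-hand side is at most $M^R_\infty \le M^R_\tau$. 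Hence it suffices to produce $\hat t > t$ (which I also shrink so that $\hat t \le T_2$) with $\norm{R_\tau} \le M^R_t \exp\!\left(-\tfrac{2Y_{rr}}{5}(\tau-t)\right)$ on $[t,\hat t]$.

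Then I would apply \Cref{tech_eig} with the piecewise-smooth symmetric matrix $X_\tau \coloneqq \exp\!\left(\tfrac{2Y_{rr}}{5}(\tau - t)\right) R_\tau$ and the constant $c \coloneqq M^R_t$. We have $\lambda_1(X_t) = \lambda_1(R_t) = M^R_t = c$, and for any unit vector $v$ with $v\T X_t v = c$ — equivalently any eigenvector of $R_t$ for its top eigenvalue, for which $v\T R_t v = \norm{R_t}$ — one computes $v\T \dot X_t v = \tfrac{2Y_{rr}}{5}\norm{R_t} + v\T \dot R_t v < 0$ using \Cref{prop:R}. \Cref{tech_eig} then yields $\hat t > t$ with $\lambda_1(X_\tau) \le c$ on $[t,\hat t]$, i.e. (applying \Cref{tiny_eigR} again) $\norm{R_\tau} = \lambda_1(R_\tau) \le M^R_t \exp\!\left(-\tfrac{2Y_{rr}}{5}(\tau-t)\right) \le M^R_\tau$, which is the claim.

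The argument is routine once \Cref{prop:R} is available; the only place that needs any care is the dominating-exponential step that routes around the non-smoothness of $M^R_\tau$, together with the bookkeeping check that \Cref{prop:R}'s hypothesis $\lambda_1(R_t) = M^R_t$ is precisely the case $\norm{R_t} = M^R_t$ to which we have reduced.
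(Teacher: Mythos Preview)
Your proposal is correct and follows essentially the same route as the paper: handle $\norm{R_t} < M^R_t$ by continuity, and in the equality case apply \Cref{tech_eig} to an exponentially rescaled $R_\tau$ with \Cref{prop:R} supplying the strict-decrease condition. The only cosmetic difference is that the paper rescales by $e^{-\frac{2Y_{rr}}{5}\tau}$ rather than your $e^{+\frac{2Y_{rr}}{5}(\tau-t)}$, and it leaves the final comparison $e^{-\frac{2Y_{rr}}{5}(\tau-t)}\lambda_1(R_t) \le M^R_\tau$ implicit, whereas you spell out the dominating-exponential inequality explicitly.
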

\begin{proof}
  By \Cref{tiny_eigR}, it is enough to bound $\lambda_1(R_\tau)$. If $\norm{R_t} < M^R_t$, such a $\hat t$ exists by continuity. Hence, we can assume $\norm{R_t} = M^R_t$ and apply \Cref{tech_eig} with $X_\tau \coloneqq e^{-\frac{2Y_{rr}}{5}\tau}R_\tau$ and $c = e^{-\frac{2Y_{rr}}{5}t}\lambda_1(R_t)$. Using \Cref{prop:R} to satisfy the assumptions, we can hence find $\hat t > t$ such that $\lambda_1(R_\tau) \le e^{-\frac{2Y_{rr}}{5}(\tau-t)}\lambda_1(R_t)$ for $\tau \in [t,\hat t]$.
\end{proof}

\begin{proposition}\label{prop:NWQ}
  Assume $\norm{\PN W_t Q_t} = \frac{2}{5\sqrt{\norm{Y}}}M^R_t$ and let $u,v$ be a top singular pair of $\PN W_t Q_t$. Then $\frac{d}{dt}u\T \PN W_t Q_t v < -\frac{2Y_{rr}}{5}\norm{\PN W_t Q_t}$.
\end{proposition}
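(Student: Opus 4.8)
The plan is to follow the recipe of \Cref{s:tW} and \Cref{prop:R}: fix a top singular pair $u,v$ of $\PN W_t Q_t$ (so $\PN W_tQ_tv=\norm{\PN W_tQ_t}u$, $u\T\PN W_tQ_t=\norm{\PN W_tQ_t}v\T$, $\norm u=\norm v=1$), bound $\frac{d}{dt}(u\T\PN W_tQ_tv)$ by a strictly negative quantity, and then obtain the required $\hat t>t$ from \Cref{tech_norm} exactly as in \Cref{cor:NWQ}. The key preliminary observation is that the row space of $\PN W_tQ_t$ sits inside that of $Q_t=A_t^\dagger A_t$, hence $Q_tv=v$, so $\tilde W_tv=0$ and $W_tv=W_tQ_tv$. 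Writing $\PN W_tQ_t=\PN W_t-\PN\tilde W_t$ and differentiating with $\dot W_t=(R_t+E_t)W_t$ and the formula for $\frac{d}{dt}\tilde W_t$ from \Cref{dtW} (whose first summand $\PP\T(\PP-F_t\PA)(X_t+E_t)\tilde W_t$ vanishes when applied to $v$), I get
\begin{align*}
  \frac{d}{dt}(u\T\PN W_tQ_tv) = u\T\PN(R_t+E_t)W_tv + u\T\PN\tilde W_t\tilde W_t\T\left[(X_t+E_t)\PA\T(A_t^\dagger)\T v-\tfrac12 W_tv\right],
\end{align*}
with $X_t=\hat Y+\tfrac12 JW_tW_t\T J$. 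Expanding $\PN R_t=-\tfrac12\PN W_tW_t\T+\tfrac12\PN JW_tW_t\T J$ (using $\PN\hat Y=0$) and splitting $W_tW_t\T=W_tQ_tW_t\T+\tilde W_t\tilde W_t\T$ exposes the decisive term $-\tfrac12 u\T\PN W_tQ_tW_t\T W_tv$; since $u\T\PN W_tQ_t=\norm{\PN W_tQ_t}v\T$, this equals $-\tfrac12\norm{\PN W_tQ_t}\norm{W_tv}^2$, and $\norm{W_tv}^2\ge\norm{A_tv}^2\ge\sigma_r^2(A_t)\ge Y_{rr}$ because $t\ge T_1$ (\Cref{A2}), so this contributes at most $-\tfrac12 Y_{rr}\norm{\PN W_tQ_t}$.

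It then remains to show the other terms sum to less than $\tfrac1{10}Y_{rr}\norm{\PN W_tQ_t}$. The organising principle is that every such term must be rendered either a small multiple of $\norm{\PN W_tQ_t}$ or quadratic in $\norm{\tilde W_t}$ — a bound that is merely $O(1)$ or $O(\norm{\tilde W_t})$ in the magnitude of $M^R_t$ would be fatal when $M^R_t$ is near its floor $M^R_\infty$. To this end I would use the identities $\tilde W_tW_t\T=\tilde W_t\tilde W_t\T$, $\tilde W_t\T\PA\T=0$, and $\PN JW_tQ_t=(\PN J\PN\T)\PN W_tQ_t$ (the last from the block structure of $P$). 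Concretely: in $\tfrac12 u\T\PN JW_tW_t\T JW_tv=\tfrac12 u\T\PN JW_t(\Bt t)v$ split $\PN JW_t$ into $(\PN J\PN\T)\PN W_tQ_t$ (bounded by $\norm{\Bt t}\norm{\PN W_tQ_t}$) and $\PN J\tilde W_t$, then rewrite $\tilde W_t(\Bt t)=\tilde W_t\tilde W_t\T JW_t$ to make the second piece quadratic in $\norm{\tilde W_t}$; the term $-\tfrac12 u\T\PN\tilde W_t\tilde W_t\T W_tv$ is already quadratic; in $u\T\PN\tilde W_t\tilde W_t\T X_t\PA\T(A_t^\dagger)\T v$ use $\tilde W_t\T\PA\T=0$ to collapse $\tilde W_t\T X_t\PA\T$ to $\tilde W_t\T\PP\T(\PP X_t\PA\T)$, which \Cref{EJW_bound} controls, so that the dangerous factor $\norm{A_t^\dagger}\le1/\sqrt{Y_{rr}}$ is absorbed by the quadratic $\norm{\tilde W_t}^2$; and the $E_t$-contributions are handled by $\norm{E_t}\le\tfrac{Y_{rr}}{60\norm Y}M^R_t$ (\Cref{E_bound2}). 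Plugging in $\norm{W_t}\le\tfrac32\sqrt{\norm Y}$, $\norm{\PP W_t}\le3\delta\sqrt{\norm Y}$, $\norm{F_t}\le\alpha^2$, $\norm{\Bt t}\le6\delta^2\norm Y$ (\Cref{ind:thm,PPW_bound}), $\norm{\tilde W_t}^2\le\tfrac1{64}\sqrt{Y_{rr}/\norm Y}\,M^R_t$ (\Cref{tW_bound2}), $\norm{\PN W_tQ_t}=\tfrac25 M^R_t/\sqrt{\norm Y}$, and $\delta\le\tfrac{Y_{rr}}{64\alpha\norm Y}$ turns each remaining term into a small multiple of $Y_{rr}\norm{\PN W_tQ_t}$; summing and adding the decisive term yields $\frac{d}{dt}(u\T\PN W_tQ_tv)<-\tfrac25 Y_{rr}\norm{\PN W_tQ_t}$.

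I expect the bookkeeping of the second paragraph to be the main obstacle. Bounded carelessly, several cross terms are of order $\norm Y^{3/2}$, or carry a $1/\sqrt{Y_{rr}}$ blow-up from $A_t^\dagger$, which would overwhelm the $O(Y_{rr}M^R_t/\sqrt{\norm Y})$ target; the whole point is to see that these apparent obstructions are purely algebraic, and that $\tilde W_tW_t\T=\tilde W_t\tilde W_t\T$, $\tilde W_t\T\PA\T=0$, the smallness of $\PP W_t$ and of the imbalance $\Bt t$, and the identity $\PN JW_tQ_t=(\PN J\PN\T)\PN W_tQ_t$ force $\tilde W_t$ to appear quadratically (so that \Cref{tW_bound2} saves the day) and supply the missing $\delta$ and $\sqrt{Y_{rr}}$ factors.
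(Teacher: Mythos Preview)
Your decomposition $\PN W_tQ_t=\PN W_t-\PN\tilde W_t$, the observation $Q_tv=v$ (hence $\tilde W_tv=0$), the formula for the derivative, and the identification of $-\tfrac12\norm{\PN W_tQ_t}\norm{W_tv}^2$ as the decisive term are exactly what the paper does. The algebraic identities you list ($\tilde W_tW_t\T=\tilde W_t\tilde W_t\T$, $\tilde W_t\T\PA\T=0$, and $\norm{\PN JW_tQ_t}=\norm{\PN W_tQ_t}$, which the paper cites as \Cref{tiny_NJX} rather than your block identity) are also the same ones the paper uses.

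There is, however, a genuine gap in the final bookkeeping. You evaluate $\norm{W_tv}^2\ge Y_{rr}$ in the main term right away, getting $-\tfrac12 Y_{rr}\norm{\PN W_tQ_t}$, and then try to show the remaining terms sum to at most $\tfrac{1}{10}Y_{rr}\norm{\PN W_tQ_t}$. This fails for the terms of the form $\norm{\tilde W_t}^2\norm{W_tv}$: with $\norm{W_tv}\le\tfrac32\sqrt{\norm{Y}}$ and $\norm{\tilde W_t}^2\le\tfrac{1}{64}\sqrt{Y_{rr}/\norm{Y}}\,M^R_t$ you only get a bound of order $\sqrt{Y_{rr}}\,M^R_t$, which equals $\tfrac{5}{2}\sqrt{Y_{rr}\norm{Y}}\,\norm{\PN W_tQ_t}=\tfrac{5}{2}\sqrt{\kappa}\,Y_{rr}\norm{\PN W_tQ_t}$. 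For large $\kappa$ this swamps the target $\tfrac{1}{10}Y_{rr}\norm{\PN W_tQ_t}$, so ``\Cref{tW_bound2} saves the day'' is not enough on its own.

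The paper's fix is simple but essential: do \emph{not} evaluate $\norm{W_tv}^2\ge Y_{rr}$ up front. Instead, keep $\norm{W_tv}$ as a variable and factor $\norm{W_tv}^2$ out of \emph{every} term, converting the linear terms via $\norm{W_tv}\le\norm{W_tv}^2/\sqrt{Y_{rr}}$ (and the constant terms via $1\le\norm{W_tv}^2/Y_{rr}$). One then checks that the coefficient of $-\norm{W_tv}^2$ is at least $\tfrac{2}{5}\norm{\PN W_tQ_t}$ (here the $\norm{\tilde W_t}^2$ contribution becomes $\norm{\tilde W_t}^2/\sqrt{Y_{rr}}\le\tfrac{1}{64}M^R_t/\sqrt{\norm{Y}}$, a genuinely small multiple of $\norm{\PN W_tQ_t}$), and only \emph{then} applies $\norm{W_tv}^2\ge Y_{rr}$. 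This is precisely the move that supplies the missing $\sqrt{Y_{rr}}$ factor you flag in your last paragraph; it just has to be applied to the main term and the error terms simultaneously, not sequentially.
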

\begin{proof}
  By the definition $\tilde W_t = W_t(I-Q_t)$, we have $\PN W_t Q_t = \PN W_t - \PN \tilde W_t$. We will bound the two terms separately.
  \begin{align*}
   &\frac{d}{dt}\left(u\T \PN W_t v\right)\\
    &= u\T \PN \left(\hat Y + E_t - \frac{1}{2}(W_tW_t-JW_tW_t\T J)\right)W_tv\\
                                &= u\T \PN \left(E_t - \frac{1}{2}(W_tQ_tW_t\T+\tilde W_t\tilde W_t\T-JW_tQ_tW_t\T J-J\tilde W_t\tilde W_t\T J)\right)W_tv\\
                                &\le -\frac{1}{2}u\T \PN W_t Q_t W_t\T W_tv+\frac{1}{2}\norm{\PN JW_tQ_t}\norm{\Bt t} + \left(\norm{E_t}+\frac{1}{2}\norm{\tilde W_t}^2\right)\norm{W_t v}\\
                                &= -\frac{1}{2}\norm{\PN W_t Q_t}\left(\norm{W_tv}^2-\frac{1}{2}\norm{\Bt t}\right) + \left(\norm{E_t}+\frac{1}{2}\norm{\tilde W_t}^2\right)\norm{W_t v}.
  \end{align*}
  The first equality expands $\dot W_t$. The second equality uses \Cref{tiny_WQW}. The inequality uses \Cref{tiny_XmJXJ} and bounds products by products of norms, using $\norm{u} = \norm{v} = \norm{J} = 1$. The final equality uses $u\T\PN W_t Q_t = \norm{\PN W_t Q_T}v\T$ and $\norm{\PN J W_t Q_t} = \norm{\PN W_t Q_t}$ by \Cref{tiny_NJX}.

  Next, we bound
  \begin{align*}
    \frac{d}{dt}\left(-u\T \PN \tilde W_t v\right) &= -u\T \PN \tilde W_t\tilde W_t\T \PP\T\PP \left[(X_t + E_t)\PA\T(A_t^\dagger)\T-\frac{1}{2}W_t\right]v\\
                                       &\le \norm{\tilde W_t}^2\left(\left(\norm{\PP X_t \PA\T} + \norm{E_t}\right)\norm{A^\dagger}+\frac{1}{2}\norm{W_tv}\right).
  \end{align*}
  The first equality expands $\frac{d}{dt}\tilde W_t$ from \Cref{dtW}, removes terms containing $\tilde W_t v = 0$ (proved below), and applies $\PP\T\PP\tilde W_t = \tilde W_t$ by \Cref{tiny_PPtW}. The inequality bounds products by products of norms.

  To see that $\tilde W_t v = 0$, note that $\norm{NW_tQ_t} = \frac{2}{5\sqrt{Y}}M^R_t > 0$ and \[\norm{NW_tQ_t}v\T \tilde W_t\T = u\T N W_t Q_t (I-Q_t) W_t\T = 0.\] Here we used that $u,v$ is a top singular pair of $NW_tQ_t$, $\tilde W_t = W_t(I-Q_t)$ and $Q_t^2 = Q_t$ for the projection $Q_t$.

  Next, we add the bounds on $\frac{d}{dt}\left(u\T \PN W_t v\right)$ and $\frac{d}{dt}\left(-u\T \PN \tilde W_t v\right)$, to get
  \begin{align*}
    &\frac{d}{dt}(u\T \PN W_t Q_t v)\\
    &\le -\frac{1}{2}\norm{\PN W_t Q_t}\left(\norm{W_tv}^2-\frac{1}{2}\norm{\Bt t}\right) + \left(\norm{E_t}+\norm{\tilde W_t}^2\right)\norm{W_t v}\\
    &\quad\quad + \norm{\tilde W_t}^2\norm{A^\dagger}\left(\norm{\PP X_t \PA\T} + \norm{E_t}\right)\\
    &\le -\norm{W_tv}^2\Biggl(\norm{\PN W_t Q_t}\left(\frac{1}{2}-\frac{\norm{\Bt t}}{4Y_{rr}}\right) - \frac{\norm{E_t}+\norm{\tilde W_t}^2}{\sqrt{Y_{rr}}}\\
    &\quad\quad- \norm{\tilde W_t}^2\frac{\norm{\PP X_t \PA\T} + \norm{E_t}}{Y_{rr}^{3/2}}\Biggr)\\
    &\le -\norm{W_tv}^2\Biggl(\norm{\PN W_t Q_t}\left(\frac{1}{2}-\frac{\norm{\Bt t}}{4Y_{rr}}\right) - \frac{\frac{1}{56}+\frac{1}{64}}{\sqrt{\norm{Y}}}M^R_t\\
    &\quad\quad- \frac{\norm{\PP X_t \PA\T} + \norm{E_t}}{64Y_{rr}\sqrt{\norm{Y}}}M^R_t\Biggr)\\
                                    &< -\frac{2Y_{rr}}{5}\norm{\PN W_t Q_t}.
  \end{align*}
  The second inequality uses $\sigma_r(A_t) \ge \sqrt{Y_{rr}}$ by \Cref{A2}, $\norm{A_t^\dagger} = \frac{1}{\sigma_r(A_t)}$ and \[\norm{W_tv} \ge \norm{\PA W_t v} = \norm{A_t Q_t v} \ge \sigma_r(A_t).\] The third inequality uses $\norm{E_t} \le \frac{Y_{rr}}{56\norm{Y}}M^R_t$ by \Cref{E_bound2} and $\norm{\tilde W_t}^2 \le \sqrt{\frac{Y_{rr}}{\norm{Y}}}\frac{M^R_t}{64}$ by \Cref{tW_bound2}. The final inequality uses $\norm{W_tv}^2 \ge Y_{rr}$ again, $\norm{\Bt t} \le 6\delta^2\norm{Y}$ by \Cref{ind:thm}, the assumption $\norm{\PN W_t Q_t} = \frac{2}{5\sqrt{\norm{Y}}}M^R_t$, $\norm{\PP X_t \PA\T} + \norm{E_t} \le \frac{\delta}{3\sqrt\alpha}\norm{Y}$ by \Cref{EJW_bound} and $\delta \le \frac{Y_{rr}}{64\norm{Y}}$.
\end{proof}

\begin{corollary}\label{cor:NWQ}
  There exists $\hat t > t$ such that $\norm{\PN W_\tau Q_\tau} \le \frac{2}{5\sqrt{\norm{Y}}}M^R_\tau$ for all $\tau \in [t,\hat t]$.
\end{corollary}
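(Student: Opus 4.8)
The plan is to mirror the proof of \Cref{cor:R}, invoking the technical tool \Cref{tech_norm} applied to a suitably rescaled version of $\PN W_\tau Q_\tau$. Since $\PN W_t Q_t$ is in general not symmetric, I would work with its operator norm directly (through a top singular pair) rather than with an eigenvalue, so \Cref{tech_norm} is the relevant tool rather than \Cref{tech_eig}.

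First I would dispose of the easy case: if $\norm{\PN W_t Q_t} < \frac{2}{5\sqrt{\norm{Y}}}M^R_t$, then by continuity of $\tau \mapsto \norm{\PN W_\tau Q_\tau}$ and of $\tau \mapsto M^R_\tau$ the strict inequality persists on some $[t,\hat t]$, where the claimed non-strict bound holds a fortiori. So assume $\norm{\PN W_t Q_t} = \frac{2}{5\sqrt{\norm{Y}}}M^R_t$.

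Next, apply \Cref{tech_norm} with $X_\tau \coloneqq e^{-\frac{2Y_{rr}}{5}\tau}\PN W_\tau Q_\tau$ and $c \coloneqq \frac{2}{5\sqrt{\norm{Y}}}e^{-\frac{2Y_{rr}}{5}t}M^R_t$. The requirement $\norm{X_t}\le c$ holds with equality by the case assumption (and $c>0$ since $M^R_t \ge M^R_\infty > 0$). For the derivative requirement, suppose $\norm{X_t}=c$ and let $u,v$ be a top singular pair of $X_t$; then $u,v$ is also a top singular pair of $\PN W_t Q_t$, so $u\T \PN W_t Q_t v = \norm{\PN W_t Q_t} = \frac{2}{5\sqrt{\norm{Y}}}M^R_t$, and
\[
  u\T \dot X_t v = e^{-\frac{2Y_{rr}}{5}t}\left(-\frac{2Y_{rr}}{5}\norm{\PN W_t Q_t} + \frac{d}{dt}\bigl(u\T \PN W_t Q_t v\bigr)\right) < 0
\]
by \Cref{prop:NWQ}, which supplies $\frac{d}{dt}(u\T \PN W_t Q_t v) < -\frac{2Y_{rr}}{5}\norm{\PN W_t Q_t}$. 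Hence \Cref{tech_norm} yields $\hat t > t$ with $\norm{X_\tau}\le c$, i.e. $\norm{\PN W_\tau Q_\tau} \le \frac{2}{5\sqrt{\norm{Y}}}e^{-\frac{2Y_{rr}}{5}(\tau-t)}M^R_t$ for $\tau \in [t,\hat t]$.

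Finally I would convert this to the stated bound, which reduces to checking $e^{-\frac{2Y_{rr}}{5}(\tau-t)}M^R_t \le M^R_\tau$ for $\tau\ge t$, exactly as in \Cref{cor:R}: writing $M^R_\tau = \max\bigl(3\norm{Y}e^{-\frac{2Y_{rr}}{5}(\tau-T_1)}, M^R_\infty\bigr)$, if $M^R_t$ equals the exponential term then $e^{-\frac{2Y_{rr}}{5}(\tau-t)}M^R_t = 3\norm{Y}e^{-\frac{2Y_{rr}}{5}(\tau-T_1)} \le M^R_\tau$, while if $M^R_t = M^R_\infty$ then $e^{-\frac{2Y_{rr}}{5}(\tau-t)}M^R_t \le M^R_\infty \le M^R_\tau$. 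The only mildly delicate point is matching the exponential rescaling with the $\max$ envelope defining $M^R_t$; the substantive analytic content — bounding $\frac{d}{dt}\norm{\PN W_t Q_t}$ via the splitting $\PN W_t Q_t = \PN W_t - \PN \tilde W_t$ together with the estimates on $E_t$, $\tilde W_t$ and $A_t^\dagger$ — has already been carried out in \Cref{prop:NWQ}, so this corollary is essentially bookkeeping.
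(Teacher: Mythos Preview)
Your proposal is correct and follows essentially the same approach as the paper: dispose of the strict-inequality case by continuity, then apply \Cref{tech_norm} to $X_\tau = e^{-\frac{2Y_{rr}}{5}\tau}\PN W_\tau Q_\tau$ with the derivative requirement supplied by \Cref{prop:NWQ}. Your write-up is in fact more complete than the paper's, which stops at $\norm{\PN W_\tau Q_\tau} \le e^{-\frac{2Y_{rr}}{5}(\tau-t)}\norm{\PN W_t Q_t}$ and leaves the comparison with $M^R_\tau$ implicit.
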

\begin{proof}
  If $\norm{\PN W_t Q_t} < \frac{2}{5\sqrt{\norm{Y}}}M^R_t$, such a $\hat t$ exists by continuity. Hence, we can assume $\norm{R_t} \ge \frac{2}{5\sqrt{\norm{Y}}}M^R_t$ and apply \Cref{tech_norm} with $X_\tau \coloneqq e^{-\frac{2Y_{rr}}{5}\tau}\PN W_\tau Q_\tau$ and $c = \norm{X_t}$. Using \Cref{prop:NWQ} to satisfy the requirements of \Cref{tech_norm}, we can find $\hat t > t$ such that $\norm{\PN W_\tau Q_\tau} \le e^{-\frac{2Y_{rr}}{5}(\tau-t)}\norm{\PN W_t Q_t}$ for $\tau \in [t,\hat t]$.
\end{proof}

\subsection{Some consequences of the setup}\label{s:conseq}
We summarize some useful direct consequences of the definitions in \Cref{s:setup}, for easy reference in the proofs.

\begin{lemma}\label{tiny_dilation}
  \begin{alignat*}{3}
    \norm{\hat Y} &= \norm{Y}, \quad && J\hat Y J = -\hat Y, \quad && J\hat Y+\hat Y J = 0,\\
    \norm{R} &= \norm{Y-UV\T}, \quad && JRJ = -R, \quad && JR+RJ = 0.
  \end{alignat*}
\end{lemma}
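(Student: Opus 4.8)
The plan is to verify all six identities by elementary block-matrix manipulation, organized around two observations: a ``dilation'' norm identity and the fact that $J$ is an involution.

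First I would record the dilation fact: for any $Z \in \R^{m\times n}$, the symmetric matrix $\hat Z \coloneqq \begin{pmatrix} 0 & Z \\ Z\T & 0 \end{pmatrix}$ satisfies $\norm{\hat Z} = \norm{Z}$. Indeed, since $\hat Z$ is symmetric, $\norm{\hat Z}^2 = \norm{\hat Z^2}$, and $\hat Z^2 = \begin{pmatrix} ZZ\T & 0 \\ 0 & Z\T Z \end{pmatrix}$ is block diagonal, so its operator norm is $\max\!\left(\norm{ZZ\T}, \norm{Z\T Z}\right) = \norm{Z}^2$. Applying this with $Z = Y$ gives $\norm{\hat Y} = \norm{Y}$ directly.

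Second I would put $R$ into the same block-off-diagonal form. Expanding $WW\T = \begin{pmatrix} UU\T & UV\T \\ VU\T & VV\T \end{pmatrix}$ and $JWW\T J = \begin{pmatrix} UU\T & -UV\T \\ -VU\T & VV\T \end{pmatrix}$, the diagonal blocks cancel in $\frac{1}{2}(WW\T - JWW\T J) = \begin{pmatrix} 0 & UV\T \\ VU\T & 0 \end{pmatrix}$, so $R = \hat Y - \frac{1}{2}(WW\T - JWW\T J) = \begin{pmatrix} 0 & Y - UV\T \\ Y\T - VU\T & 0 \end{pmatrix}$ is exactly the dilation $\hat{Y - UV\T}$. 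The dilation fact then yields $\norm{R} = \norm{Y - UV\T}$.

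Third, for the conjugation identities I would compute, for a general dilation, \[ J\hat Z J = \begin{pmatrix} I_m & 0 \\ 0 & -I_n \end{pmatrix}\begin{pmatrix} 0 & Z \\ Z\T & 0 \end{pmatrix}\begin{pmatrix} I_m & 0 \\ 0 & -I_n \end{pmatrix} = \begin{pmatrix} 0 & -Z \\ -Z\T & 0 \end{pmatrix} = -\hat Z, \] and specialize to $Z = Y$ and $Z = Y - UV\T$ to obtain $J\hat Y J = -\hat Y$ and $JRJ = -R$. Finally, since $J^2 = I_{m+n}$, left-multiplying each of these by $J$ gives $\hat Y J = -J\hat Y$ and $RJ = -JR$, i.e. $J\hat Y + \hat Y J = 0$ and $JR + RJ = 0$. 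I do not expect any real obstacle: every step is a direct computation, and the only point needing a sentence of justification is that the operator norm of a block-diagonal symmetric matrix equals the maximum of the blocks' norms, which is standard.
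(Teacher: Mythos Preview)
Your proposal is correct and follows essentially the same approach as the paper: the paper's proof simply recalls that $\hat Y$ and $R$ are the self-adjoint dilations of $Y$ and $Y-UV\T$, asserts that the self-adjoint dilation preserves the operator norm, and says the remaining identities ``can be verified by calculating the matrix products.'' You carry out exactly those computations explicitly (the $\hat Z^2$ argument for the norm, and the block-matrix product for $J\hat Z J=-\hat Z$), so your write-up is a fleshed-out version of the paper's terse proof rather than a different argument.
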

\begin{proof}
  Recall $\hat Y \coloneqq \begin{pmatrix} 0 & Y \\ Y\T & 0\end{pmatrix}$ and $R \coloneqq \begin{pmatrix} 0 & Y-UV\T \\ Y\T-VU\T & 0\end{pmatrix}$. That is, the expressions in the norms in the statement are such that the left-hand sides are defined to be the \textit{self-adjoint dilation} of the right-hand sides. The self-adjoint dilation preserves norms. Recall $J = \begin{pmatrix} I_{n_1} & 0 \\ 0 & -I_{n_2} \end{pmatrix}$. The remaining identites can be verified by calculating the matrix products.
\end{proof}

\begin{lemma}\label{tiny_XmJXJ}
  For $n_1+n_2\times n_1+n_2$ matrices $X$, we have $\norm{X-JXJ} \le 2\norm{X}$. If $X$ is symmetric positive semi-definite, then $\norm{X-JXJ} \le \norm{X}$.
\end{lemma}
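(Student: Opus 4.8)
The plan is to treat the two inequalities separately. For the first, observe that $J$ is an orthogonal matrix (indeed $J\T = J$ and $J^2 = I$), so conjugation by $J$ preserves the operator norm, $\norm{JXJ} = \norm{X}$. The triangle inequality then gives $\norm{X-JXJ} \le \norm{X} + \norm{JXJ} = 2\norm{X}$, with no assumption on $X$.

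For the second inequality, the key point is that $X - JXJ$ is symmetric whenever $X$ and $J$ are, so its operator norm is attained by a Rayleigh quotient: there is a unit vector $w$ with $\norm{X - JXJ} = \lvert w\T(X-JXJ)w\rvert$. I would then expand $w\T(X-JXJ)w = w\T X w - (Jw)\T X(Jw)$ and use that $X \succeq 0$ together with $\norm{w} = \norm{Jw} = 1$: both $w\T X w$ and $(Jw)\T X(Jw)$ lie in the interval $[0,\norm{X}]$, so the absolute value of their difference is at most $\norm{X}$. This yields $\norm{X-JXJ} \le \norm{X}$.

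I do not expect any real obstacle here; the only points requiring a little care are that the norm of a symmetric matrix is genuinely attained on the unit sphere (by compactness of the sphere and continuity of the Rayleigh quotient), and that the positive semi-definiteness hypothesis is exactly what forces both $w\T X w$ and $(Jw)\T X(Jw)$ to be nonnegative, which is where the factor $2$ present in the general bound is saved. An alternative, slightly more computational route writes $X$ in $2\times 2$ block form along the splitting $\R^{m+n} = \R^m \oplus \R^n$ that defines $J$, notes that $X-JXJ$ is the self-adjoint dilation of twice the off-diagonal block and hence (by \Cref{tiny_dilation}) has norm equal to twice that block's norm, and bounds that block by compressing $X$ to a $2\times 2$ positive semi-definite matrix whose top eigenvalue is at least twice the off-diagonal entry; I would present the shorter Rayleigh-quotient argument.
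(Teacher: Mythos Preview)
Your proposal is correct and takes essentially the same approach as the paper. The paper's proof of the second inequality is the one-line version of your Rayleigh-quotient argument: it writes $\norm{X-JXJ} = \max\bigl(\lambda_1(X-JXJ),\lambda_1(JXJ-X)\bigr)$ and uses that subtracting the PSD matrix $JXJ$ (respectively $X$) can only decrease $\lambda_1$, which is exactly your observation that $w\T X w$ and $(Jw)\T X(Jw)$ both lie in $[0,\norm{X}]$.
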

\begin{proof}
  The first inequality follows from $\norm{J} = 1$ and the triangle inequality. The second inequality follows from \[\norm{X-JXJ} = \max\left(\lambda_1(X-JXJ), \lambda_1(JXJ-X)\right) \le \max\left(\lambda_1(X), \lambda_1(JXJ)\right) \le \norm{X}.\]
\end{proof}

\begin{lemma}\label{tiny_R_bound}
  $\norm{R} \le \norm{Y}+\frac{1}{2}\norm{W}^2$. Specifically, if $\norm{W} \le \frac{3}{2}\sqrt{\norm{Y}}$, then $\norm{R} \le \frac{17}{8}\norm{Y}$.
\end{lemma}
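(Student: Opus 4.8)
The plan is to read the bound straight off the definition $R = \hat Y - \tfrac12\bigl(WW\T - JWW\T J\bigr)$ introduced in \Cref{s:setup}, using only the triangle inequality together with the two auxiliary lemmas \Cref{tiny_dilation} and \Cref{tiny_XmJXJ}. First I would apply the triangle inequality to obtain $\norm{R} \le \norm{\hat Y} + \tfrac12\norm{WW\T - JWW\T J}$, and then rewrite $\norm{\hat Y} = \norm{Y}$ using \Cref{tiny_dilation}.

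The one point worth noticing is that $WW\T$ is symmetric positive semi-definite, so the \emph{sharp} half of \Cref{tiny_XmJXJ} applies: $\norm{WW\T - JWW\T J} \le \norm{WW\T} = \norm{W}^2$, rather than merely $2\norm{W}^2$. This is exactly what produces the coefficient $\tfrac12$ on $\norm{W}^2$. Combining the two estimates gives $\norm{R} \le \norm{Y} + \tfrac12\norm{W}^2$, which is the first claim.

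For the ``specifically'' clause I would substitute $\norm{W}^2 \le \tfrac94\norm{Y}$ (the hypothesis $\norm{W}\le \tfrac32\sqrt{\norm{Y}}$) into the bound just proved, giving $\norm{R} \le \norm{Y} + \tfrac12\cdot\tfrac94\norm{Y} = \tfrac{17}{8}\norm{Y}$. I do not anticipate any genuine obstacle here; the argument is a two-line computation, and the only thing requiring care is invoking the positive-semi-definite case of \Cref{tiny_XmJXJ} so as not to lose the factor of two.
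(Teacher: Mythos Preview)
Your proposal is correct and matches the paper's proof essentially line for line: expand $R$, apply the triangle inequality, invoke \Cref{tiny_dilation} for $\norm{\hat Y}=\norm{Y}$, and use the PSD case of \Cref{tiny_XmJXJ} to get $\norm{WW\T-JWW\T J}\le\norm{W}^2$; the numerical substitution for the ``specifically'' clause is identical. You also correctly read the hypothesis as $\norm{W}\le\tfrac32\sqrt{\norm{Y}}$ (the statement's $\tfrac32\norm{Y}$ is evidently a typo, as the rest of the paper uses $\sqrt{\norm{Y}}$).
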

\begin{proof}
  \begin{align*}
    \norm{R} = \norm{\hat Y-\frac{1}{2}\left(WW\T-JWW\T J\right)} \le \norm{\hat Y}+\frac{1}{2}\norm{WW\T-JWW\T J} \le \norm{Y}+\frac{1}{2}\norm{W}^2.
  \end{align*}
  The equality expands the definition of $R$, then the triangle inequality is used. The final inequality uses \Cref{tiny_dilation,tiny_XmJXJ}.
\end{proof}

\begin{lemma}\label{tiny_eigR}
  $\lambda_1(R) = \norm{R}$.
\end{lemma}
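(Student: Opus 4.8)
\textbf{Proof proposal for \Cref{tiny_eigR}.} The plan is to use two facts: that $R$ is symmetric (so its operator norm equals its spectral radius), and that $R$ anticommutes with the involution $J$ (so its spectrum is symmetric about the origin). Concretely, recall from \Cref{s:setup} that $R = \hat Y - \frac12(WW\T - JWW\T J)$ is symmetric, and that \Cref{tiny_dilation} gives $JR + RJ = 0$, i.e. $RJ = -JR$.

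First I would record the elementary fact that for a symmetric matrix $R$ one has $\norm{R} = \max_k |\lambda_k(R)| = \max\bigl(\lambda_1(R),\, -\lambda_{m+n}(R)\bigr)$, where the maximum eigenvalue is $\lambda_1(R)$ and the minimum is $\lambda_{m+n}(R)$. So it suffices to show $\lambda_1(R) \ge -\lambda_{m+n}(R)$, which in turn follows if the spectrum of $R$ is symmetric under negation.

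Second, to get the spectral symmetry, suppose $v \ne 0$ satisfies $Rv = \lambda v$. Since $J$ is invertible ($J^2 = I$), the vector $Jv$ is nonzero, and using $RJ = -JR$ we compute $R(Jv) = -JRv = -\lambda\, Jv$. Hence $-\lambda$ is also an eigenvalue of $R$. Applying this to $\lambda = \lambda_1(R)$ shows $-\lambda_1(R)$ is an eigenvalue, so $\lambda_{m+n}(R) \le -\lambda_1(R)$; applying it to $\lambda = \lambda_{m+n}(R)$ shows $-\lambda_{m+n}(R) \le \lambda_1(R)$. Combining, $\lambda_1(R) = -\lambda_{m+n}(R) = \max_k|\lambda_k(R)| = \norm{R}$.

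I do not anticipate any real obstacle here: the only points to be careful about are that $R$ is genuinely symmetric (so that operator norm equals spectral radius) and that $J$ is invertible (so that $Jv \ne 0$), both of which are immediate from the definitions and \Cref{tiny_dilation}. One could alternatively phrase the spectral-symmetry step via the similarity $JRJ^{-1} = JRJ = -R$, which shows $R$ and $-R$ have the same eigenvalues; either phrasing is fine.
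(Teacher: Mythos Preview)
Your proof is correct and is essentially the same as the paper's, which also uses symmetry of $R$ together with the relation $JRJ=-R$ from \Cref{tiny_dilation} to conclude that the spectrum of $R$ is symmetric about zero (the paper phrases it as $\lambda_1(-R)=\lambda_1(JRJ)=\lambda_1(R)$, i.e.\ your similarity formulation). The only difference is verbosity: the paper compresses the argument into one line, whereas you spell out the eigenvector mapping $v\mapsto Jv$ explicitly.
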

\begin{proof}
  $R$ is symmetric and $\lambda_1(-R) = \lambda_1(JRJ) = \lambda_1(R)$ by \Cref{tiny_dilation}.
\end{proof}

\begin{lemma}\label{tiny_PPtW}
  $\PP\T\PP\tilde W = \tilde W$ and $\PA\tilde W = 0$.
\end{lemma}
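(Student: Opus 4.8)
The plan is to read both identities off the definitions $\tilde W = W(I-Q)$ with $Q = A^\dagger A$ and $A = \PA W$, together with the unitarity of $P = \begin{pmatrix}\PA \\ \PN \\ \PA J\end{pmatrix}$.

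First I would establish $\PA \tilde W = 0$. Applying $\PA$ to $\tilde W = W(I-Q)$ and using $\PA W = A$ gives $\PA \tilde W = A(I-Q) = A - A A^\dagger A$. So it suffices to check $A A^\dagger A = A$. Recalling $A^\dagger = A\T(AA\T)^{-1}$, which is well-defined precisely because $A$ has full row rank (equivalently $\sigma_r(A) > 0$, as maintained throughout by \Cref{asmp:init} and the warm-up bounds of \Cref{ind:def}), we get $A A^\dagger = AA\T(AA\T)^{-1} = I_r$, hence $A A^\dagger A = A$ and $\PA \tilde W = 0$.

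Second, for $\PP\T\PP\tilde W = \tilde W$, I would use $P\T P = I_{m+n}$, which expands block-wise to $\PA\T\PA + \PN\T\PN + J\PA\T\PA J = I$ (using $J\T = J$). Since $\PP = \begin{pmatrix}\PN \\ \PA J\end{pmatrix}$ we have $\PP\T\PP = \PN\T\PN + J\PA\T\PA J$, so $\PA\T\PA + \PP\T\PP = I$. Multiplying by $\tilde W$ and invoking the first part, $\PP\T\PP\tilde W = (I - \PA\T\PA)\tilde W = \tilde W - \PA\T(\PA\tilde W) = \tilde W$. There is essentially no real obstacle here; the only point requiring care is the invertibility of $AA\T$ so that $A A^\dagger = I_r$, which is guaranteed by the positive lower bound on $\sigma_r(A)$ in force throughout the argument.
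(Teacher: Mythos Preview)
Your proof is correct and follows essentially the same route as the paper: compute $\PA\tilde W = A(I-Q) = A - AA^\dagger A = 0$ from the definitions, then use $\PP\T\PP = I - \PA\T\PA$ to deduce $\PP\T\PP\tilde W = \tilde W$. Your extra care in justifying $AA^\dagger = I_r$ via the invertibility of $AA\T$ and in deriving $\PA\T\PA + \PP\T\PP = I$ from the unitarity of $P$ is a welcome elaboration, but not a different approach.
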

\begin{proof}
  Note the definitions $A = \PA W$, $Q = A^\dagger A$ and $\tilde W = W(I-Q)$ imply $\PA\T \PA \tilde W = \PA\T \PA W (I-Q) = 0$. Then $\PA\tilde W = 0$ since $\PA\PA\T = I$. Furthermore, $\PP\T\PP\tilde W = (I-\PA\T\PA) \tilde W = \tilde W$.
\end{proof}

\begin{lemma}\label{tiny_WQW}
  $W W \T = W Q W \T + \tilde W \tilde W\T$.
\end{lemma}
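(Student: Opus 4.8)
The plan is to exploit that $Q = A^\dagger A = A\T(AA\T)^{-1}A$ is an \emph{orthogonal projection}, so that $I-Q$ is as well, and then expand $\tilde W\tilde W\T$ directly using $\tilde W = W(I-Q)$.

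First I would record that $Q$ is symmetric, since $Q\T = A\T(AA\T)^{-\top}A = A\T(AA\T)^{-1}A = Q$ (using that $AA\T$ is symmetric), and idempotent, since $Q^2 = A\T(AA\T)^{-1}(AA\T)(AA\T)^{-1}A = A\T(AA\T)^{-1}A = Q$. Hence $I-Q$ is also symmetric and idempotent, i.e. $(I-Q)\T = I-Q$ and $(I-Q)^2 = I-Q$. (This is the same fact already used implicitly in \Cref{tiny_PPtW}, where $\tilde W = W(I-Q)$.)

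Next I would simply compute, using the definition $\tilde W \coloneqq W - WQ = W(I-Q)$,
\begin{align*}
  \tilde W \tilde W\T &= W(I-Q)\bigl(W(I-Q)\bigr)\T = W(I-Q)(I-Q)\T W\T = W(I-Q)^2 W\T\\
  &= W(I-Q)W\T = WW\T - WQW\T.
\end{align*}
Rearranging yields $WW\T = WQW\T + \tilde W\tilde W\T$, which is the claim.

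There is essentially no obstacle here: the only thing to be careful about is justifying that $Q$ (equivalently $I-Q$) is a genuine orthogonal projection rather than merely idempotent, since that is what makes the cross terms $WQ(I-Q)W\T$ vanish; this follows from symmetry of $AA\T$ as noted above. Everything else is a routine two-line manipulation.
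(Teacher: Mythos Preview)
Your proof is correct and is essentially the same as the paper's: both compute $\tilde W\tilde W\T = W(I-Q)(I-Q)\T W\T = W(I-Q)^2 W\T = W(I-Q)W\T = WW\T - WQW\T$, using that $Q$ is a symmetric idempotent. The only difference is that you spell out the symmetry and idempotence of $Q$ explicitly, whereas the paper's one-line proof takes these for granted.
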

\begin{proof}
  $\tilde W W\T = W (I-Q) W\T = W (I-Q)^2 W\T = \tilde W \tilde W\T$.
\end{proof}

\begin{lemma}\label{tiny_WAd}
  $W A^\dagger = \PA\T+\PP\T F$.
\end{lemma}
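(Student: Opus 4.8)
The plan is to obtain $W A^\dagger$ by decomposing $W$ along the orthogonal splitting induced by the unitary matrix $P$. Since $P = \begin{pmatrix}\PA \\ \PN \\ \PA J\end{pmatrix}$ is unitary, we have $P\T P = I$; grouping the last two block rows into $\PP = \begin{pmatrix}\PN \\ \PA J\end{pmatrix}$, this is exactly the identity $\PA\T\PA + \PP\T\PP = I$ already used in the proof of \Cref{tiny_PPtW}. Applying it to $W$ and using the definition $A = \PA W$ gives
\begin{align*}
  W = \PA\T\PA W + \PP\T\PP W = \PA\T A + \PP\T\PP W .
\end{align*}

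Next I would right-multiply by $A^\dagger = A\T(AA\T)^{-1}$. The term $\PP\T\PP W A^\dagger$ is precisely $\PP\T F$ by the definition $F = \PP W A^\dagger$. The term $\PA\T A A^\dagger$ collapses using $A A^\dagger = A A\T (AA\T)^{-1} = I_r$, so it equals $\PA\T$. Adding the two contributions yields $W A^\dagger = \PA\T + \PP\T F$, which is the claim.

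There is no substantive obstacle here: the computation is immediate once $P\T P = I$ is invoked. The only point worth flagging is that $AA^\dagger = I_r$ relies on $AA\T$ being invertible, i.e. on $A$ having full row rank $r$; this is part of the standing setup (it is what makes the formula $A^\dagger = A\T(AA\T)^{-1}$ meaningful in the first place), and along the perturbed gradient flow it is guaranteed by the lower bound $\sigma_r(A)>0$ maintained throughout the induction.
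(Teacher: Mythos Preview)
Your proposal is correct and follows essentially the same approach as the paper: both use the orthogonal splitting $\PA\T\PA + \PP\T\PP = I$, the definition $A = \PA W$, the identity $AA^\dagger = I_r$, and the definition $F = \PP W A^\dagger$ to conclude. The only cosmetic difference is that you decompose $W$ first and then right-multiply by $A^\dagger$, whereas the paper inserts the splitting directly into $W A^\dagger$; the content is identical.
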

\begin{proof}
  Note $\PA\T\PA+\PP\T\PP = I$, so
  \begin{align*}
    W A^\dagger = (\PA\T\PA+\PP\T\PP)W A^\dagger = \PA\T AA^\dagger+\PP\T\PP W A^\dagger = \PA\T +\PP\T F.
  \end{align*}
  The second equality uses the definition $A = \PA W$. The final equality uses $AA^\dagger = I$ and the definition $F = \PP W A^\dagger$.
\end{proof}

\begin{lemma}\label{tiny_PYP}
  $\lambda_r(\PA \hat Y \PA\T) = Y_{rr}$.
\end{lemma}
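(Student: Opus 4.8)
The plan is to read off $\PA\hat Y\PA\T$ as the top-left $r\times r$ block of the diagonalization $P\hat Y P\T$. Recall from the setup that $P=\begin{pmatrix}\PA\\\PN\\\PA J\end{pmatrix}$ is unitary and that $P\hat Y P\T$ is diagonal with decreasing diagonal, so its diagonal entries are exactly the eigenvalues of $\hat Y$ listed in decreasing order. Since $\PA$ consists of the first $r$ rows of $P$, the top-left $r\times r$ submatrix of $P\hat Y P\T$ is precisely $\PA\hat Y\PA\T$, and hence $\PA\hat Y\PA\T=\diag(\lambda_1(\hat Y),\dots,\lambda_r(\hat Y))$.

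It then remains to identify the $r$ largest eigenvalues of $\hat Y$. As noted for \Cref{tiny_dilation}, $\hat Y$ is the self-adjoint dilation of $Y$, so its nonzero eigenvalues are $\pm\sigma_i(Y)$ for each nonzero singular value of $Y$, with the remaining $m+n-2r$ eigenvalues equal to $0$. Because $Y$ is diagonal with nonnegative decreasing diagonal and $\rank(Y)=r$, we have $\sigma_i(Y)=Y_{ii}$ with $Y_{11}\ge\cdots\ge Y_{rr}>0$ and $Y_{ii}=0$ for $i>r$. Thus the eigenvalues of $\hat Y$ in decreasing order begin $Y_{11}\ge\cdots\ge Y_{rr}$ and are then nonpositive, so $\lambda_i(\hat Y)=Y_{ii}$ for $i\le r$. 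Combining with the previous paragraph, $\PA\hat Y\PA\T=\diag(Y_{11},\dots,Y_{rr})$, whose smallest eigenvalue is $\lambda_r(\PA\hat Y\PA\T)=Y_{rr}$.

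The only point needing care is ensuring that $\lambda_r(\hat Y)=Y_{rr}$ is the positive value and not $0$; this is exactly where $\rank(Y)=r$ enters, since it guarantees $Y_{rr}=\sigma_r(Y)>0$, so that the $r$-th largest eigenvalue of $\hat Y$ is still on the positive branch of the dilation's spectrum. Everything else is bookkeeping with the eigenvalues of the dilation, so I do not anticipate any genuine obstacle.
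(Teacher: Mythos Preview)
Your argument is correct and is essentially the same as the paper's: both observe that $\PA\hat Y\PA\T$ is the top-left $r\times r$ block of the diagonalization $P\hat Y P\T$, hence diagonal with entries the $r$ largest eigenvalues of $\hat Y$, which are the top $r$ singular values $Y_{11},\dots,Y_{rr}$ of $Y$. You simply spell out in more detail why the dilation's spectrum is $\{\pm\sigma_i(Y)\}\cup\{0\}$ and why $\rank(Y)=r$ keeps the $r$-th entry positive, which the paper leaves implicit.
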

\begin{proof}
  $\PA \hat Y \PA\T$ is diagonal with diagonal equal to the top $r$ eigenvalues of $\hat Y$, which are equal to the top $r$ singular values of $Y$.
\end{proof}

\begin{lemma}\label{tiny_NJX}
  $\norm{\PN J X} = \norm{\PN X}$ for $X \in \R^{(n_1+n_2)\times d}$.
\end{lemma}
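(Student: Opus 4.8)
The plan is to reduce the whole statement to the single matrix identity $J\,\PN\T\PN\,J = \PN\T\PN$, after which the conclusion follows from the elementary fact that $\norm{M}^2 = \norm{M\T M}$ for the operator norm.

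First I would pin down $\mathrm{range}(\PN\T)$. From the defining relation $\PN W = \begin{pmatrix}U_{r+1:m}\\ V_{r+1:n}\end{pmatrix}$, holding for the variable $W = \begin{pmatrix}U\\V\end{pmatrix}$, the rows of $\PN$ are exactly the standard basis (co)vectors $e_j\T$ with $j$ ranging over $\{r+1,\dots,m\}\cup\{m+r+1,\dots,m+n\}$. Hence $\PN\T\PN$, being the orthogonal projection onto the span of these rows, is a diagonal $0/1$ matrix. Since $J = \begin{pmatrix}I_m & 0\\ 0 & -I_n\end{pmatrix}$ is also diagonal, $J$ and $\PN\T\PN$ commute, and combined with $J^2 = I$ this yields $J\,\PN\T\PN\,J = J^2\,\PN\T\PN = \PN\T\PN$.

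Then, for any $X \in \R^{(m+n)\times d}$, I would compute $(\PN J X)\T(\PN J X) = X\T J\T\PN\T\PN J X = X\T J\,\PN\T\PN\,J\,X = X\T\PN\T\PN X = (\PN X)\T(\PN X)$, using $J\T = J$ in the second equality and the identity above in the third. Taking operator norms gives $\norm{\PN J X}^2 = \norm{(\PN J X)\T(\PN J X)} = \norm{(\PN X)\T(\PN X)} = \norm{\PN X}^2$, and the claim follows upon taking square roots.

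There is essentially no obstacle here; the only point that needs a word of justification is that $\PN\T\PN$ is diagonal, which is immediate from the coordinate-selection description of $\PN$. (Equivalently, one could observe $\PN J = D\,\PN$ for the diagonal sign matrix $D$ recording whether each selected coordinate lies in the $U$-block or the $V$-block, and then use that $D$ is orthogonal so that $\norm{\PN J X} = \norm{D\,\PN X} = \norm{\PN X}$.)
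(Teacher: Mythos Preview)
Your proof is correct and is essentially the same as the paper's. The paper simply writes out the blocks to observe that $\PN J X$ equals $\PN X$ with the $V$-block rows negated, which is precisely your parenthetical remark $\PN J = D\,\PN$ with $D$ an orthogonal sign matrix; your main argument via $J\,\PN\T\PN\,J = \PN\T\PN$ is just the same observation phrased through commuting diagonal matrices.
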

\begin{proof}
  By definition of $\PN$ and $J$, we have \[\norm{\PN J X} = \norm{\begin{matrix}X_{r+1:m,:} \\ -X_{m+r+1:n_1+n_2,:}\end{matrix}} = \norm{\begin{matrix}X_{r+1:m,:} \\ X_{m+r+1:n_1+n_2,:}\end{matrix}} = \norm{\PN X}.\] Here $X_{a:b,:}$ extracts rows $a$ through $b$ of $X$.
\end{proof}

\begin{lemma}\label{tiny_PPYP}
  $\PP \hat Y \PA\T = 0$, $\hat Y \PA\T = \PA\T\PA \hat Y \PA\T$ and $\hat Y\PP\T = \PP\T\PP \hat Y \PP\T$.
\end{lemma}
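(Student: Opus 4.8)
The plan is to read everything off from the unitarity and block structure of the matrix $P$ constructed in \Cref{s:setup}. Recall that $P = \begin{pmatrix}\PA \\ \PN \\ \PA J\end{pmatrix} = \begin{pmatrix}\PA \\ \PP\end{pmatrix}$ is unitary, so in particular $P\T P = I_{m+n}$, which written in block form with respect to the splitting $\begin{pmatrix}\PA \\ \PP\end{pmatrix}$ is exactly $\PA\T\PA + \PP\T\PP = I$. Recall also that $P$ diagonalizes $\hat Y$, i.e.\ $P\hat Y P\T$ is diagonal.

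First I would establish the vanishing of the cross blocks $\PA\hat Y\PP\T = 0$ and $\PP\hat Y\PA\T = 0$. Writing the diagonal matrix $P\hat Y P\T$ in $2\times 2$ block form,
\[
  P\hat Y P\T = \begin{pmatrix}\PA\hat Y\PA\T & \PA\hat Y\PP\T \\ \PP\hat Y\PA\T & \PP\hat Y\PP\T\end{pmatrix},
\]
the off-diagonal blocks must vanish because a diagonal matrix has no nonzero off-diagonal entries. In particular $\PP\hat Y\PA\T = 0$, which is the first claimed identity.

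For the remaining two identities I would insert the resolution of the identity $I = \PA\T\PA + \PP\T\PP$ on the left and use the vanishing cross blocks:
\[
  \hat Y\PA\T = (\PA\T\PA + \PP\T\PP)\hat Y\PA\T = \PA\T\PA\hat Y\PA\T + \PP\T\bigl(\PP\hat Y\PA\T\bigr) = \PA\T\PA\hat Y\PA\T,
\]
and symmetrically $\hat Y\PP\T = \PA\T\bigl(\PA\hat Y\PP\T\bigr) + \PP\T\PP\hat Y\PP\T = \PP\T\PP\hat Y\PP\T$.

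There is essentially no obstacle here: the lemma is pure bookkeeping from the setup. The only point requiring a moment's care is lining up the block decomposition $P = \begin{pmatrix}\PA \\ \PP\end{pmatrix}$ with the diagonalization statement, so that ``$P\hat Y P\T$ diagonal'' genuinely forces the $\PA$--$\PP$ cross blocks of $\hat Y$ (conjugated by $P$) to be zero; once that is granted, everything else is a one-line manipulation with $\PA\T\PA + \PP\T\PP = I$.
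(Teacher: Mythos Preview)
Your proposal is correct and follows essentially the same approach as the paper: both arguments observe that $P = \begin{pmatrix}\PA \\ \PP\end{pmatrix}$ is unitary and diagonalizes $\hat Y$, so the off-diagonal block $\PP\hat Y\PA\T$ vanishes. The paper's proof is a one-line remark to this effect; your version just spells out explicitly how the remaining two identities follow by inserting $I = \PA\T\PA + \PP\T\PP$.
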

\begin{proof}
  The unitary matrix $P = \begin{pmatrix}\PA \\ \PP\end{pmatrix}$ diagonalizes $\hat Y$ (so $P\hat Y P\T$ is diagonal), and $\PP \hat Y \PA\T$ extracts an off-diagonal block.
\end{proof}

\begin{lemma}\label{tiny_PNPJ}
  $\norm{v}^2 = \norm{\PA v}^2+\norm{\PN v}^2+\norm{\PA J v}^2$.
\end{lemma}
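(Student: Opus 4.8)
The plan is to read off the identity directly from the unitarity of the block matrix $P$ introduced in \Cref{s:setup}. Recall that $P = \begin{pmatrix}\PA \\ \PN \\ \PA J\end{pmatrix}$ was chosen to be unitary (it diagonalizes $\hat Y$). In particular $P$ is an isometry, so $\norm{Pv}^2 = v\T P\T P v = v\T v = \norm{v}^2$ for every vector $v \in \R^{m+n}$.

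Next I would simply decompose $Pv$ into its three stacked blocks: $Pv = \begin{pmatrix}\PA v \\ \PN v \\ \PA J v\end{pmatrix}$, so that $\norm{Pv}^2 = \norm{\PA v}^2 + \norm{\PN v}^2 + \norm{\PA J v}^2$. Combining the two displayed equalities yields the claim. (Equivalently, one can note $\PA\T\PA + \PN\T\PN + (\PA J)\T(\PA J) = P\T P = I$ and sandwich with $v\T(\cdot)v$.)

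There is no real obstacle here: the only thing being used is that the rows of the unitary matrix $P$ partition exactly into the three sub-blocks $\PA$, $\PN$, and $\PA J$, which is immediate from the definition of $P$ in \Cref{s:setup}. The lemma is essentially a restatement of that orthonormality, so the proof is one or two lines.
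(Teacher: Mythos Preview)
Your proposal is correct and matches the paper's own proof essentially verbatim: the paper simply notes that $P = \begin{pmatrix}\PA \\ \PN \\ \PA J\end{pmatrix}$ is unitary (its rows being eigenvectors of $\hat Y$), from which the identity is immediate. Your write-up just spells out the one-line computation $\norm{v}^2 = \norm{Pv}^2 = \norm{\PA v}^2 + \norm{\PN v}^2 + \norm{\PA J v}^2$ that the paper leaves implicit.
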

\begin{proof}
  The matrix $P = \begin{pmatrix}\PA \\ \PN \\ \PA J\end{pmatrix}$ is unitary (the rows are eigenvectors of $\hat Y$).
\end{proof}

\begin{lemma}\label{tiny_tWrW}
  $\sigma_{r+1}(W) \le \norm{\tilde W}$.
\end{lemma}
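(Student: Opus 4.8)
The plan is to exploit the fact that $\tilde W = W - WQ$ is obtained from $W$ by subtracting a matrix of rank at most $r$, and then invoke the Eckart--Young variational characterization of singular values.

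First I would observe that $Q = A^\dagger A$ is an orthogonal projection whose rank is at most $r$, since $A = \PA W \in \R^{r \times h}$ has at most $r$ rows. Consequently $WQ$ has rank at most $r$.

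Next, recall the standard fact (Eckart--Young--Mirsky) that for any matrix $W$,
\begin{align*}
  \sigma_{r+1}(W) = \min_{\rank(Z) \le r} \norm{W - Z}.
\end{align*}
Applying this with the admissible choice $Z = WQ$ (which has $\rank(Z) \le r$ by the previous step), we get $\sigma_{r+1}(W) \le \norm{W - WQ} = \norm{\tilde W}$, which is exactly the claim.

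There is essentially no obstacle here: the only thing to be a little careful about is that $Q$ is well-defined, i.e. that $AA\T$ is invertible so that $A^\dagger = A\T(AA\T)^{-1}$ makes sense; but this is guaranteed by the running hypotheses (in particular $\sigma_r(A) > 0$ wherever this lemma is invoked), and in any case the rank bound $\rank(WQ) \le \rank(Q) \le r$ holds regardless of whether $Q$ is the genuine row-space projection or merely some rank-$\le r$ matrix. So the proof is a one-line application of Eckart--Young.
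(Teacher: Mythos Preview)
Your proof is correct and is exactly the argument the paper has in mind: the paper's one-line proof just records that $\tilde W = W(I-Q)$ with $Q$ a rank-$r$ projection, leaving the Eckart--Young step implicit, whereas you spell it out. There is no substantive difference.
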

\begin{proof}
  $\tilde W = W(I-Q)$ for the projection $Q$ of rank $r$.
\end{proof}

\section{Bounding norms using top singular pairs}\label{s:tech}
Our proofs frequently want to bound norms of matrices $\norm{X}$ by considering their evolution $\frac{d}{dt}\norm{X}$. However, $\frac{d}{dt}\norm{X}$ might not be well-defined when the largest singular value is not unique. To circumvent these problems, we instead work with $\frac{d}{dt}u\T X v$, where $u,v$ is a top singular pair of $X$. This approach allows our proofs to work as if $\frac{d}{dt}\norm{X}$ existed. However, the approach requires a few general technical lemmas, which are proved below.

\begin{lemma}\label{tech0}
  Let $b > 0$, $X \colon [0,b] \to \R^{d\times d}$ be symmetric at all times, and right differentiable at time $0$. Assume $X_0$ is PSD (Positive Semi-Definite). Furthermore, for each $v \in \R^d$ such that $v\T X_0 v = 0$, we have $v\T \dot X_0 v > 0$. Then there exists $t \in (0,b]$ such that $X_\tau$ is PSD for all $\tau \in [0,t]$.
\end{lemma}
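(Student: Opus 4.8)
The idea is to split the unit sphere $S \coloneqq \{v \in \R^d : \|v\| = 1\}$ into the directions near the kernel of $X_0$, where the derivative hypothesis does the work, and the directions bounded away from the kernel, where $X_0$ is already uniformly positive; a compactness argument then produces a single cutoff time $t$ that works for all directions simultaneously. Since $X_0$ is PSD, $v\T X_0 v = 0$ precisely when $X_0 v = 0$, so the bad set $K \coloneqq \{v \in S : v\T X_0 v = 0\}$ is the unit sphere of $\ker X_0$, and in particular it is compact. Using right differentiability at $0$, I would write $X_\tau = X_0 + \tau \dot X_0 + E(\tau)$ with $\|E(\tau)\| \le \phi(\tau)$ and $\phi(\tau)/\tau \to 0$ as $\tau \to 0^+$; replacing $\phi$ by $\psi(\tau) \coloneqq \sup_{0 \le s \le \tau}\phi(s)$, I get a nondecreasing function with $\psi(0) = 0$ and still $\psi(\tau)/\tau \to 0$, and $|v\T E(\tau) v| \le \psi(\tau)$ for every $v \in S$ and $\tau' \le \tau$.

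First I would handle the near-kernel region. The map $v \mapsto v\T \dot X_0 v$ is continuous on $S$ and strictly positive on the compact set $K$, so $\mu \coloneqq \min_{v \in K} v\T \dot X_0 v > 0$ (vacuously if $K = \emptyset$). By continuity, $N \coloneqq \{v \in S : v\T \dot X_0 v > \mu/2\}$ is an open subset of $S$ containing $K$. For $v \in N$ and $\tau \in [0,b]$, using $v\T X_0 v \ge 0$ (as $X_0$ is PSD) and $v\T \dot X_0 v > \mu/2$,
\[ v\T X_\tau v \ \ge\ v\T X_0 v + \tau\, v\T \dot X_0 v - \psi(\tau) \ \ge\ \tfrac{\mu}{2}\tau - \psi(\tau), \]
which is $\ge 0$ whenever $\tau \le t_1$, where $t_1 > 0$ is chosen so that $\psi(\tau) \le \tfrac{\mu}{2}\tau$ on $[0,t_1]$; such $t_1$ exists since $\psi(\tau)/\tau \to 0$.

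Next I would handle the region bounded away from the kernel. The set $S \setminus N$ is a closed subset of the compact sphere, hence compact, and it is disjoint from $K$, so $v\T X_0 v > 0$ there; by the extreme value theorem $c \coloneqq \min_{v \in S\setminus N} v\T X_0 v > 0$ (vacuously if $S \setminus N = \emptyset$). For $v \in S\setminus N$ and $\tau \in [0,b]$,
\[ v\T X_\tau v \ \ge\ v\T X_0 v - \|X_\tau - X_0\| \ \ge\ c - \tau\|\dot X_0\| - \psi(\tau), \]
which is $\ge c/2 > 0$ whenever $\tau \le t_2$, where $t_2 > 0$ is chosen so that $\tau\|\dot X_0\| + \psi(\tau) \le c/2$ on $[0,t_2]$; such $t_2$ exists since the left-hand side is nondecreasing and vanishes at $0$. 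Finally, taking $t \coloneqq \min(t_1, t_2, b) > 0$ (interpreting an empty-region cutoff as $b$), every $v \in S = N \cup (S\setminus N)$ and every $\tau \in [0,t]$ satisfy $v\T X_\tau v \ge 0$, so $X_\tau$ is PSD for all $\tau \in [0,t]$, as claimed.

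\textbf{Main obstacle.} The only real subtlety is extracting a single cutoff $t$ valid uniformly over all directions $v$, which is exactly what forces the compactness splitting into $N$ and $S \setminus N$; one must also be slightly careful in the near-kernel estimate to keep the free term $v\T X_0 v \ge 0$ (rather than relying on the first-order term alone), so that the bound does not degrade as $\tau \to 0$. Everything else — continuity of $v \mapsto v\T \dot X_0 v$, the extreme value theorem on compact sets, and the first-order estimate from right differentiability — is routine.
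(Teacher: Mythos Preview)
Your proof is correct and complete. It differs from the paper's in how the splitting is carried out. The paper decomposes $\R^d$ linearly as $\mathcal{N}\oplus\mathcal{N}^\perp$ (the null space of $X_0$ and its orthogonal complement), extracts constants $c_1,c_2>0$ with $u\T\dot X_0 u\ge c_1\|u\|^2$ on $\mathcal{N}$ and $v\T X_0 v\ge c_2\|v\|^2$ on $\mathcal{N}^\perp$, and then expands $(u+v)\T X_\tau(u+v)$; the price is a cross term $2\tau\,u\T\dot X_0 v$ that must be absorbed into $\tfrac{1}{2}(\tau c_1\|u\|^2+c_2\|v\|^2)$ for small $\tau$. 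You instead restrict to the unit sphere and split \emph{topologically}: an open neighborhood $N$ of $K=S\cap\ker X_0$ on which $v\T\dot X_0 v>\mu/2$, and its compact complement on which $v\T X_0 v\ge c$. Because your ``near-kernel'' region is a neighborhood rather than a subspace, no cross terms appear and the two cases are handled by one-line inequalities; the trade-off is an extra appeal to compactness and the extreme value theorem. Both arguments are of comparable length; yours is slightly more elementary in that it never needs to balance the mixed term, while the paper's is slightly more explicit in tying the cutoff $t$ to the spectral gaps $c_1,c_2$.
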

\begin{proof}
  Let $\mathcal{N}$ and $\mathcal{N}^\perp$ be the null space and row space of $X_0$. Consider the quadratic form $u \mapsto u\T \dot X_0 u$ with domain $\mathcal{N}$. By assumption, this quadratic form is positive definite, so there exists some $c_1 > 0$ such that $u\T \dot X_0 u \ge c_1 \norm{u}^2$ for $u \in \mathcal{N}$. Since $X_0$ is PSD, there also exists some $c_2 > 0$ such that $v\T X_0 v \ge c_2 \norm{v}^2$ for $v \in \mathcal{N}^\perp$. Next, since $X$ is right differentiable at time 0, we have $X_\tau = X_0+\tau \dot X_0+o(\tau)$. Here the little-o $o(\tau)$ denotes terms which asymptotically go to zero faster than $\tau$ as $\tau \to 0$.

  We may decompose an arbitrary vector as $u+v$ for $u \in \mathcal{N}$ and $v \in \mathcal{N}^\perp$. Furthermore,
  \begin{align*}
    &(u+v)\T X_\tau (u+v)\\
    &= (u+v)\T \left(X_0+\tau \dot X_0+o(\tau)\right) (u+v)\\
                         &= v\T X_0 v + \tau u\T\dot X_0 u + 2\tau u\T\dot X_0 v + \tau v\T \dot X_0 v + o(\tau)\left(\norm{u}^2+\norm{v}^2\right)\\
                         &\ge c_2\norm{v}^2 + \tau c_1 \norm{u}^2 - \tau\norm{\dot X_0}\left(\norm{v}^2+2\norm{u}\norm{v}\right)+o(\tau)\left(\norm{u}^2+\norm{v}^2\right).
  \end{align*}
  By definition of little-o, there exists some $t > 0$, independent of $u,v$, such that when $\tau \le t$, we have
  \[-o(\tau)\left(\norm{u}^2+\norm{v}^2\right) \le \frac{1}{2}(\tau c_1 \norm{u}^2+c_2\norm{v}^2).\] Similarly, there exists $t > 0$ small enough that $\tau\norm{\dot X_0}\left(\norm{v}^2+2\norm{u}\norm{v}\right) \le \frac{1}{2}(\tau c_1 \norm{u}^2+c_2\norm{v}^2)$ whenever $\tau \le t$. Thus, we can find $t > 0$ such that $(u+v)\T X_\tau (u+v) \ge 0$ for $\tau \in [0,t]$, which concludes the proof.
\end{proof}

\begin{corollary}\label{tech_eig}
  Let $b > a$, $X \colon [a,b] \to \R^{d\times d}$ be symmetric at all times, and right differentiable at time $a$. Assume $c \in \R$, $\lambda_1(X_a) \le c$. Furthermore, assume that whenever $\norm{v} = 1, v\T X_a v = c$ for some $v \in \R^d$, we also have $v\T \dot X_a v < 0$. Then there exists $t \in (a,b]$ such that $\lambda_1(X_\tau) \le c$ for all $\tau \in [a,t]$.
\end{corollary}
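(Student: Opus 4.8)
\textbf{Proof proposal for Corollary~\ref{tech_eig}.}

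The plan is to reduce Corollary~\ref{tech_eig} directly to Lemma~\ref{tech0} by shifting the time variable and negating. Define $Z_\tau \coloneqq cI - X_{a+\tau}$ for $\tau \in [0,b-a]$. Since each $X_{a+\tau}$ is symmetric, so is each $Z_\tau$; and since $X$ is right differentiable at $a$, writing $X_{a+\tau} = X_a + \tau\dot X_a + o(\tau)$ shows that $Z$ is right differentiable at $\tau = 0$ with $\dot Z_0 = -\dot X_a$. The hypothesis $\lambda_1(X_a) \le c$ is precisely the statement that $Z_0 = cI - X_a$ is PSD. Hence the first two hypotheses of Lemma~\ref{tech0} (with its parameter $b$ taken to be $b-a > 0$) hold immediately.

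Next I would verify the third hypothesis of Lemma~\ref{tech0}: for each $v \in \R^d$ with $v\T Z_0 v = 0$, we have $v\T \dot Z_0 v > 0$. The case $v = 0$ carries no content — as the proof of Lemma~\ref{tech0} makes clear, the hypothesis is used only through positive-definiteness of $\dot Z_0$ restricted to the null space of $Z_0$. For $v \ne 0$, PSD-ness of $Z_0$ makes $v\T Z_0 v = 0$ equivalent to $v\T X_a v = c\norm{v}^2$; setting $\hat v \coloneqq v/\norm{v}$ gives $\norm{\hat v} = 1$ and $\hat v\T X_a \hat v = c$, so the hypothesis of the corollary yields $\hat v\T \dot X_a \hat v < 0$, and therefore $v\T \dot Z_0 v = -v\T \dot X_a v = -\norm{v}^2\,\hat v\T \dot X_a \hat v > 0$. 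Thus all hypotheses of Lemma~\ref{tech0} are met for $Z$.

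Applying Lemma~\ref{tech0} to $Z$ produces some $t' \in (0,b-a]$ such that $Z_\tau$ is PSD for all $\tau \in [0,t']$. Unwinding the definition of $Z$, this says $\lambda_1(X_{a+\tau}) \le c$ for all $\tau \in [0,t']$; taking $t \coloneqq a + t' \in (a,b]$ gives $\lambda_1(X_\tau) \le c$ for all $\tau \in [a,t]$, which is the claim.

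The argument is essentially routine once the shift-and-negate reduction is spotted; there is no genuine obstacle. The only point that needs a little care is the passage between the two normalizations of the exceptional vectors — ``$\norm{v} = 1$, $v\T X_a v = c$'' in the corollary versus ``$v\T Z_0 v = 0$'' for arbitrary $v$ in the lemma — together with the observation that the $v = 0$ case imposes nothing.
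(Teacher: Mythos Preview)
Your proposal is correct and matches the paper's own proof exactly: the paper simply writes ``Apply \Cref{tech0} to $\tilde X_t = cI - X_{t+a}$,'' which is precisely your shift-and-negate reduction. Your write-up just fills in the routine verification of the hypotheses (including the normalization of $v$ and the trivial $v=0$ case) that the paper leaves implicit.
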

\begin{proof}
  Apply \Cref{tech0} to $\tilde X_t = cI-X_{t+a}$.
\end{proof}

\begin{corollary}\label{tech_norm}
  Let $b > a$, $X \colon [a,b] \to \R^{d_1\times d_2}$ be right differentiable at time $a$. Furthermore, assume $c > 0$, $\norm{X_a} \le c$. Additionally, assume either $\norm{X_a} < c$, or all top singular pairs $u,v$ of $X_a$ satisfy $u\T \dot X_a v < 0$. Then there exists $t \in (a,b]$ such that $\norm{X_\tau} \le c$ for all $\tau \in [a,t]$.
\end{corollary}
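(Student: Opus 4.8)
The plan is to deduce this from \Cref{tech_eig} applied to the self-adjoint dilation of $X$. For $Z\in\R^{d_1\times d_2}$ write $D(Z)\coloneqq\begin{pmatrix}0 & Z\\ Z\T & 0\end{pmatrix}\in\R^{(d_1+d_2)\times(d_1+d_2)}$; the map $D$ is linear, $D(Z)$ is symmetric for every $Z$, and since $X$ is right differentiable at $a$ so is $t\mapsto D(X_t)$, with derivative $D(\dot X_a)$ at $a$. I will use two elementary facts about the dilation: (i) $\lambda_1(D(Z))=\norm{Z}$; and (ii) for unit vectors $u\in\R^{d_1}$, $v\in\R^{d_2}$ the vector $w=\tfrac{1}{\sqrt2}\begin{pmatrix}u\\ v\end{pmatrix}$ is a unit vector with $w\T D(Z)w=u\T Zv$ for every $Z$.

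Applying \Cref{tech_eig} to $\tilde X_t\coloneqq D(X_t)$ on $[a,b]$ with the same constant $c$, the first hypothesis $\lambda_1(\tilde X_a)=\norm{X_a}\le c$ is immediate from (i). For the second hypothesis, take any unit vector $w$ with $w\T\tilde X_a w=c$. Then $c=w\T\tilde X_a w\le\lambda_1(\tilde X_a)=\norm{X_a}\le c$, so $\norm{X_a}=c$ and $w$ realizes the top Rayleigh quotient of $\tilde X_a$, hence $\tilde X_a w=cw$. Writing $w=\begin{pmatrix}p\\ q\end{pmatrix}$ this reads $X_a q=cp$ and $X_a\T p=cq$, giving $c\norm p^2=p\T X_a q\le\norm{X_a}\norm p\norm q=c\norm p\norm q$ and, symmetrically, $c\norm q^2\le c\norm p\norm q$; since $c>0$ and $\norm p^2+\norm q^2=1$ this forces $\norm p=\norm q=\tfrac1{\sqrt2}$. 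Then $u\coloneqq\sqrt2\,p$ and $v\coloneqq\sqrt2\,q$ are unit vectors with $X_a v=\norm{X_a}u$ and $X_a\T u=\norm{X_a}v$, i.e. $u,v$ is a top singular pair of $X_a$ and $w=\tfrac1{\sqrt2}\begin{pmatrix}u\\ v\end{pmatrix}$; by (ii) and the hypothesis on top singular pairs, $w\T D(\dot X_a)w=u\T\dot X_a v<0$. (If instead $\norm{X_a}<c$, then no unit $w$ satisfies $w\T\tilde X_a w=c$, so the hypothesis is vacuously true.) Hence \Cref{tech_eig} furnishes $t\in(a,b]$ with $\lambda_1(\tilde X_\tau)\le c$, i.e. $\norm{X_\tau}\le c$, for all $\tau\in[a,t]$, as desired.

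The only slightly delicate point is the middle computation: showing that every unit maximizer $w$ of the Rayleigh quotient of $\tilde X_a$ splits as $\tfrac1{\sqrt2}(u;v)$ for a bona fide top singular pair $u,v$ of $X_a$. This is exactly what lets us convert the assumed condition ``for all top singular pairs $u,v$ of $X_a$'' into the condition ``for all unit $w$ with $w\T\tilde X_a w=c$'' required by \Cref{tech_eig}; once facts (i)--(ii) and the equality case of the Rayleigh quotient are noted, it is the short argument above. Everything else is bookkeeping, so I do not expect any genuine obstacle here.
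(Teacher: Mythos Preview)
Your proof is correct. The paper also reduces to \Cref{tech_eig}, but via the Gram matrix $\tilde X_t = X_t\T X_t$ with constant $\tilde c = c^2$ rather than the self-adjoint dilation $D(X_t)$ you use. In the paper's route, a unit $v$ with $v\T\tilde X_a v = c^2$ immediately gives $\norm{X_a v}=c$, so $(\tfrac{1}{c}X_a v,\,v)$ is a top singular pair and $v\T\dot{\tilde X}_a v = 2c\,u\T\dot X_a v<0$; the case $\norm{X_a}<c$ is handled separately by continuity. Your dilation route has the virtue that $\lambda_1(D(X))=\norm{X}$ without squaring and the linearity of $D$ makes the derivative trivial, and you fold the strict-inequality case into \Cref{tech_eig} vacuously; the small price is the extra step showing that any top eigenvector of $D(X_a)$ splits evenly as $\tfrac{1}{\sqrt2}(u;v)$. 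Both symmetrizations are standard and essentially interchangeable here.
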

\begin{proof}
  If $\norm{X_a} < c$, there exists such $t$ by right differentiability of $X$ at time $a$. Otherwise, apply \Cref{tech_eig} with $\tilde X_t = X_t\T X_t$ and $\tilde c = c^2$. We verify the requirements of \Cref{tech_eig}. First, $\lambda_1(\tilde X_a) = \norm{X_a}^2 = c^2 = \tilde c$. Second, if $v \in \R^{d_2}$ satisfies $\norm{v} = 1, v\T\tilde X_a v = \tilde c$ then $\norm{X_a v}^2 = c^2$. Together with $\norm{X_a} \le c$, this means $(\frac{1}{c}X_a v, v)$ is a top singular pair of $X_a$. By assumption, we hence have $0 > \frac{1}{c}v\T X_a \dot X_a v = \frac{1}{2c}v\T \dot{\tilde X_a} v$.
\end{proof}

\section{Proof of \texorpdfstring{\Cref{rip_theorem}}{}}\label{s:rip_proof}
The proof plan is to first prove that \Cref{asmp:pert} from \Cref{s:main_results} holds under the assumptions in \Cref{rip_theorem}. This will let us use \Cref{master_theorem} to prove \Cref{rip_theorem}. Let us first recall the relevant assumptions from \Cref{rip_theorem}.

Let $U \colon \R_{\ge0} \to \R^{n_1\times h}$ and $V \colon \R_{\ge0} \to \R^{n_2\times h}$ follow perturbed gradient descent
\begin{equation}\label{eq:perturbed_gd3}
  \begin{split}
    \begin{pmatrix}U_{(k+1)\eta}\\V_{(k+1)\eta}\end{pmatrix} &= (I+\eta\tilde R_{k\eta})\begin{pmatrix}U_{k\eta}\\V_{k\eta}\end{pmatrix}, \text{where}\\
    \tilde R_{k\eta} &= \begin{pmatrix} 0 & (\A_k^*\A_k)(Y-U_{k\eta}V_{k\eta}\T)\\(\A_k^*\A_k)(Y-U_{k\eta}V_{k\eta}\T)\T & 0\end{pmatrix} + \tilde E_k,
  \end{split}
\end{equation}
with learning rate 
\[\eta \le \frac{c_2}{\kappa^3 \norm{Y}\log(\sigma_r(Y)/\epsilon^2)}\]
and perturbations $\{\tilde E_k\}_{k\ge0}$ with \[\norm{\tilde E_k} \le \xi \le \frac{c_4}{\kappa^3 \log(\sigma_r(Y)/\epsilon^2)}\norm{Y}.\]
Also, assume that the measurement operators $\A_k$ satisfy RIP (\Cref{def:rip} from \Cref{s:rip_result}) with rank $r+1$ and constant \[\rho \le \frac{c_3}{\sqrt r\kappa^2}.\]

We would like to show that \Cref{asmp:pert} holds for appropriate constants $\eta, \gamma, \beta, \mu$, and $\nu$. That is, we would like to find a piecewise smooth perturbation $E$ such that the following flow interpolates the perturbed gradient descent iterates:

\begin{align}
  \dt W_t = (R_t+E_t) W_t,
\end{align}
where $W_t = \begin{pmatrix}U_t\\V_t\end{pmatrix}$ and $R_t = \begin{pmatrix}0 & Y-U_tV_t\T \\ Y\T-V_tU_t\T & 0\end{pmatrix}$.

The perturbation should satisfy the following bound for all $t \ge 0$. Let $\tau = \lfloor t / \eta \rfloor \eta$. If 
\begin{align}\label{eq:pert_asmp}
  \norm{W_\tau} \le \frac{3}{2}\sqrt{\norm{Y}}\quad \text{ and }\quad \gamma \sigma_{r+1}^2(W_\tau) \le \norm{Y},
\end{align}
then
\begin{align*}
  \norm{E_t} &\le \beta \left(\norm{R_\tau}+\gamma \sigma_{r+1}^2(W_\tau)\right) + \mu\norm{Y}\\
  \norm{J E_t + E_t\T J} &\le \nu \norm{Y},
\end{align*}
where $J = \begin{pmatrix}I_{n_1} & 0\\0 & -I_{n_2}\end{pmatrix}$.

These results are essentially captured by the following theorem.
\begin{theorem}\label{rip_reduction}
Let $U$ and $V$ follow the gradient descent dynamics \eqref{eq:perturbed_gd3} with $\eta \le 1/(12\norm{Y})$, and assume for all $k\ge 0$ that $\norm{\tilde E_k} \le \tilde\mu \norm{Y}$ with $\tilde\mu \le 1/2$. Furthermore, assume the measurement operators $\A_k$ satisfy RIP with rank $r+1$ and constant $\rho \le 1 / (16\sqrt r)$.

Then there exists a piecewise smooth perturbation $E$ with the following properties. The gradient flow perturbed by $E$, has a solution $U_t,V_t$ which interpolates the gradient descent iterates $U_{k\eta},V_{k\eta}$ at the points $t = \eta k$. Furthermore, $E$ satisfies \Cref{asmp:pert} with $\eta$, $\gamma = \frac{\min(n_1,n_2)}{r} \ge 1$, $\beta = 10\eta\norm{Y} + 6\sqrt r \rho$, $\mu = 2\tilde\mu$, $\nu = 20\eta\norm{Y}+2\tilde\mu$, and $T = \infty$.
\end{theorem}
\begin{proof}
  Let us first rewrite \eqref{eq:perturbed_gd3} by splitting $\tilde R_{k\eta}$ into two pieces as follows:
  \begin{align*}
    W_{(k+1)\eta} = (I + \eta R_{k\eta} + \eta \tilde E_k)W_{k\eta}, 
  \end{align*}
  where $W = \begin{pmatrix}U\\V\end{pmatrix}$, $R = \begin{pmatrix}0 & Y-UV\T\\Y\T-VU\T\end{pmatrix}$ and $\hat E_k = \begin{pmatrix}0 & E^\A_k\\(E^\A_k)\T & 0\end{pmatrix} + \tilde E_k$, where $E^\A_k = (\A_k^*\A_k)(Y-U_{k\eta}V_{k\eta}\T) - (Y-U_{k\eta}V_{k\eta}\T)$. Note $\norm{\tilde E_k} \le \norm{E^\A_k} + \tilde\mu \norm{Y}$.

  Next, note that $J\begin{pmatrix}0 & E^\A_k\\(E^\A_k)\T & 0\end{pmatrix} + \begin{pmatrix}0 & E^\A_k\\(E^\A_k)\T & 0\end{pmatrix}J = 0$, where $J = \begin{pmatrix}I_{n_1} & 0\\0 & -I_{n_2}\end{pmatrix}$. Hence, $\norm{J\tilde E_k + \tilde E_k\T J} \le 2\tilde\mu \norm{Y}$.

  Now, \Cref{EA_bound} and $\gamma \coloneqq \frac{\min(n_1,n_2)}{r} \ge 1$ give $\norm{E^\A_k} \le 2\sqrt{r}\rho\left(\norm{R_{k\eta}} + \frac{3}{2}\gamma\sigma_{r+1}^2(W_{k\eta})\right)$. We may assume \eqref{eq:pert_asmp}. Then by \Cref{tiny_R_bound}, we have $\norm{R_{k\eta}} \le \frac{17}{8}\norm{Y}$. Also, $\gamma \sigma_{r+1}^2(W_{k\eta}) \le \norm{Y}$ and $\rho \le 1/(16\sqrt r)$. Thus, $\norm{E^\A_k} \le \frac{1}{2}\norm{Y}$. Together with the assumption $\tilde\mu \le 1/2$, this implies $\norm{\tilde E_k} \le \norm{Y}$.

  We may therefore apply \Cref{prop:discrete}, which yields the bounds
  \begin{align*}
  \norm{E_t} &\le 10\eta \norm{Y}\norm{R_{k\eta}} + 4\sqrt{r}\rho\left(\norm{R_{k\eta}} + \frac{3}{2}\gamma\sigma_{r+1}^2(W_{k\eta})\right) + 2\tilde\mu\norm{Y}\\
    \norm{JE_t + E_t\T J} &\le 20\eta\norm{Y}^2 + 2\tilde\mu \norm{Y}.
  \end{align*}

  Hence, $E$ satisfies \Cref{asmp:pert} with $\eta$, $\gamma = \frac{\min(n_1,n_2)}{r} \ge 1$, $\beta = 10\eta\norm{Y} + 6\sqrt r \rho$, $\mu = 2\tilde\mu$ and $\nu = 20\eta\norm{Y}+2\tilde\mu$.
\end{proof}

We are now ready to prove \Cref{rip_theorem} using \Cref{master_theorem,rip_reduction}.
\begin{proof}[Proof of \Cref{rip_theorem}]
  Recall the assumptions, with $\gamma \coloneqq \frac{\min(n_1,n_2)}{r}$,  
  \begin{align*}
    \rho &\le \frac{c_1}{\sqrt r\kappa^2}\\
    \epsilon &\le \frac{c_2\sqrt{\sigma_r(Y)}}{\gamma^2+\kappa}\\
    \eta &\le \frac{c_3}{\kappa^3 \norm{Y}\log(\sigma_r(Y)/\epsilon^2)}\\
    \xi &\le \frac{c_4}{\kappa^3 \log(\sigma_r(Y)/\epsilon^2)}\norm{Y}\\
    \alpha &\le C_1.
  \end{align*}
  Note that the bound on $\epsilon \le \frac{c_2\sqrt{\sigma_r(Y)}}{\gamma^2+\kappa}$ implies
  \begin{align}\label{eq:frac_bound}
    \frac{\epsilon^2}{\sigma_r(Y)} \le \frac{c_2^2}{(\gamma^2+\kappa)^2} < \frac{1}{4}c_2^2.
  \end{align}
  Assume $c_2 \le 1$. Then $\eta < c_3/\sigma_r(Y)$. Also, let \[T \coloneqq \frac{5}{\sigma_r(Y)}\log\left(\frac{\sigma_r(Y)}{\epsilon^2}\right).\] Note that \eqref{eq:frac_bound} implies $T \ge 5/\sigma_r(Y)$.

  Then \Cref{rip_reduction} shows that $E$ satisfies with $\eta$, $\gamma = \min(n_1,n_2)/r$,
  \begin{align*}
    \beta &= 10\eta\norm{Y} + 6\sqrt r \rho \le \frac{C_1(10c_3+6c_1)}{\alpha\kappa^2},\\
    \mu &= \frac{2\xi}{\norm{Y}} \le \frac{2C_1 c_4}{\alpha\kappa^2},\\
    \nu &= 20\eta\norm{Y}+\frac{2\xi}{\norm{Y}} \le \frac{C_1(100c_3+10c_4)}{\alpha\kappa^2 T\norm{Y}}.
  \end{align*}
  Fixing $\theta = 1/12$, we can hence pick $c_1,c_2,c_3,c_4 > 0$ small enough such that the assumptions of \Cref{master_theorem} hold. \Cref{master_theorem} therefore yields
  \begin{align*}
    \norm{R_T} &\le \tilde C_1\left(\beta\kappa\gamma+\sqrt{\kappa}\right)\left(\frac{\epsilon^2}{\sigma_r(Y)}\right)^{3/4}\sigma_r(Y) + \tilde C_2 \kappa \mu\norm{Y}\\
               &\le \tilde C_1\left(\gamma+\sqrt{\kappa}\right)\left(\frac{\epsilon^2}{\sigma_r(Y)}\right)^{11/12}\sigma_r(Y) + 2\tilde C_2 \kappa \xi.
  \end{align*}

  Recall $\frac{\epsilon^2}{\sigma_r(Y)} \le \frac{c_2^2}{(\gamma^2+\kappa)^2}$ which implies $(\gamma+\sqrt\kappa)\left(\frac{\epsilon^2}{\sigma_r(Y)}\right)^{1/4} \le \sqrt{2 c_2}$. Also note that $\norm{R_T} = \norm{Y-U_TV_T\T}$ by \Cref{tiny_dilation}. This means we can rewrite the bound in the desired form
  \begin{align*}
    \norm{Y-U_TV_T\T} = \norm{R_T} &\lesssim \left(\frac{\epsilon^2}{\sigma_r(Y)}\right)^{2/3}\sigma_r(Y)+\kappa\xi.
  \end{align*}
\end{proof}

\section{Proof of \texorpdfstring{\Cref{sgd_theorem}}{}}\label{s:sgd_proof}
The assumptions and proof of \Cref{sgd_theorem} are similar to those of \Cref{rip_theorem}, which are detailed in \Cref{s:rip_proof}. The difference is that in the case of stochastic gradient descent, we do not assume that the measurement operators $\A_k$ satisfy RIP. Instead, we assume that $\A_k \colon \R^{n_1\times n_2} \to \R^m$ are independent and of the form 
\begin{align}\label{eq:sgd_form}
  [\A_k(X)]_i = \frac{1}{\sqrt m} \langle X, [A_k]_i\rangle,
\end{align}
for $i = 1,\dots,m$, where $[A_k]_i \in \R^{n_1\times n_2}$ has i.i.d. entries with distribution $\mathcal{N}(0,1)$. Since the algorithm is randomized, we can only guarantee success with a high probability $1-\delta$, for some failure probability $\delta \in (0,1)$. We assume the mini-batch size $m$ satisfies
\[m \ge C_2 (\log(n_1+n_2)+\log(K)+\log(1/\delta)) r(n_1+n_2) \kappa^4,\]
where $K$ is the number of iterations.

Analogous to \Cref{s:rip_proof}, we first prove that \Cref{asmp:pert} from \Cref{s:main_results} holds, and then apply \Cref{master_theorem}.
\begin{theorem}\label{sgd_reduction}
  Let $U$ and $V$ follow the gradient descent dynamics \eqref{eq:perturbed_gd3} with $\eta \le 1/(12\norm{Y})$, and assume for all $k\ge 0$ that $\norm{\tilde E_k} \le \tilde\mu \norm{Y}$ with $\tilde\mu \le 1/2$.

  Furthermore, assume the measurement operators $\A_k$ are of the form \eqref{eq:sgd_form}. Select $\delta \in (0,1)$ and $0 < \rho \le 1$, and assume
  \[m \ge \frac{C_2}{\rho^2}\left(\log(n_1+n_2)+\log(K)+\log(1/\delta)\right) r(n_1+n_2),\]
  where $C_2$ is a universal constant.

  Then with probability at least $1-\delta$, there exists a piecewise smooth perturbation $E$ with the following properties. The gradient flow perturbed by $E$, has a solution $U_t,V_t$ which interpolates the gradient descent iterates $U_{k\eta},V_{k\eta}$ at the points $t = \eta k$, for $K \ge 1$ iterations $0 \le k \le K$. Furthermore, $E$ satisfies \Cref{asmp:pert} with $\eta$, $\gamma = \sqrt{\min(n_1,n_2)/r}$, $\beta = 10\eta\norm{Y} + \rho$, $\mu = 2\tilde\mu$, $\nu = 20\eta\norm{Y}+2\tilde\mu$, and $T = K\eta$.
\end{theorem}
\begin{proof}
  The first part of the proof is completely analogous to the proof of \Cref{rip_reduction}. Hence, we have
  \begin{align}
    \norm{\tilde E_k} \le \norm{E^\A_k}+\tilde\mu\norm{Y}\label{eq:sgd_E_bound},\\
    \norm{J\tilde E_k+\tilde E_k\T J} \le 2\tilde\mu\norm{Y}\nonumber,
  \end{align}
  where $E^\A_k = (\A_k^*\A_k)(Y-U_{k\eta}V_{k\eta}\T)-(Y-U_{k\eta}V_{k\eta}\T)$ and $J = \begin{pmatrix}I_{n_1} & 0\\ 0 & -I_{n_2}\end{pmatrix}$.

  Note that our assumed form for $\A_k$ implies 
  \[E^\A_k = \frac{1}{m}\sum_{i=1}^m(\langle A_i, Y-U_{k\eta}V_{k\eta}\T\rangle A_i)-(Y-U_{k\eta}V_{k\eta}\T).\]
  This is exactly the form expected by \Cref{EA_bound_sgd}, which yields the following: Let $\xi \le c_1\sqrt m$ and $\gamma \coloneqq \sqrt{\min(n_1,n_2)/r}$, then
  \begin{align*}
    \norm{E^\A_k} \le \xi\sqrt{\frac{r(n_1+n_2)}{m}}\left(\sqrt 2\norm{R_{k\eta}}+\sqrt{3}\gamma\sigma_{r+1}^2(W_{k\eta})\right),
  \end{align*}
  with probability at least $1-(n_1+n_2)e^{-c_2\xi^2}$. By the union bound, it holds for all $0 \le k < K$ with probability at least $1-K(n_1+n_2)e^{-c_2\xi^2}$.

  Let $\delta = K(n_1+n_2)e^{-c_2\xi^2}$, such that $\xi^2 = c_2^{-1}\left(\log(n_1+n_2)+\log(K)+\log(1/\delta)\right)$. Then the bound on $m$ reads $m \ge \frac{c_2 C_2}{\rho^2}\xi^2 r(n_1+n_2)$. Hence,
  \begin{align}\label{eq:EAk_bound}
    \norm{E^\A_k} \le \frac{\rho}{\sqrt{c_2 C_2}} \left(\sqrt 2\norm{R_{k\eta}}+\sqrt{3}\gamma\sigma_{r+1}^2(W_{k\eta})\right).
  \end{align}

  For a fixed $0 \le k < K$, we may assume \eqref{eq:pert_asmp}. Then by \Cref{tiny_R_bound}, we have $\norm{R_{k\eta}} \le \frac{17}{8}\norm{Y}$. Also, $\gamma \sigma_{r+1}^2(W_{k\eta}) \le \norm{Y}$. Thus, $\sqrt 2\norm{R_{k\eta}}+\sqrt{3}\gamma\sigma_{r+1}^2(W_{k\eta}) \le 5\norm{Y}$. If $\rho \le \frac{1}{10}\sqrt{c_2 C_2}$, inserting into \eqref{eq:EAk_bound} implies $\norm{E^\A_k} \le \frac{1}{2}\norm{Y}$. Furthermore, since $\tilde \mu \le 1/2$, we also have $\norm{\tilde E_k} \le \norm{Y}$ by \eqref{eq:sgd_E_bound}.

  We may therefore apply \Cref{prop:discrete}, which yields the bounds
  \begin{align*}
    \norm{E_t} &\le 10\eta \norm{Y}\norm{R_{k\eta}} + \frac{2\rho}{\sqrt{c_2 C_2}} \left(\sqrt 2\norm{R_{k\eta}} + \sqrt 3\gamma\sigma_{r+1}^2(W_{k\eta})\right) + 2\tilde\mu\norm{Y},\\
    \norm{JE_t + E_t\T J} &\le 20\eta\norm{Y}^2 + 2\tilde\mu \norm{Y}.
  \end{align*}
  We select $C_2 = \max\left(100, 1/c_1^2\right)/c_2$, such that both the requirements $\xi \le c_1\sqrt m$ and $\rho \le \frac{1}{10}\sqrt{c_2 C_2}$ are satisfied since $\rho \le 1$. Then, with probability at least $1-\delta$, $E$ satisfies \Cref{asmp:pert} with $\eta$, $\gamma = \sqrt{\min(n_1,n_2)/r}$, $\beta = 10\eta\norm{Y} + \rho$, $\mu = 2\tilde\mu$ and $\nu = 20\eta\norm{Y}+2\tilde\mu$.
\end{proof}

We are now ready to prove \Cref{sgd_theorem} using \Cref{master_theorem,sgd_reduction}.
\begin{proof}[Proof of \Cref{sgd_theorem}]
  The proof is very similar to the proof of \Cref{rip_theorem}. Recall the assumptions 
  \begin{align*}
    \epsilon &\le \frac{c_1\sqrt{\sigma_r(Y)}}{\gamma^2+\kappa}\\
    \eta &\le \frac{c_2}{\kappa^3 \norm{Y}\log(\sigma_r(Y)/\epsilon^2)}\\
    m &\ge C_2\frac{\kappa^4}{c_3^2}\left(\log(n_1+n_2)+\log(K)+\log(1/\delta)\right) r(n_1+n_2)\\
    \xi &\le \frac{c_4}{\kappa^3 \log(\sigma_r(Y)/\epsilon^2)}\\
    \alpha &\le C_1.
  \end{align*}

  Like in the proof of \Cref{rip_theorem}, assume $c_1 \le 1$, which leads to $\log(\sigma_r(Y)/\epsilon^2) \ge 1$ and \[T \coloneqq \frac{5}{\sigma_r(Y)}\log\left(\frac{\sigma_r(Y)}{\epsilon^2}\right) \ge \frac{5}{\sigma_r(Y)}.\]

  Using \Cref{sgd_reduction} in place of \Cref{rip_reduction}, we get similar bounds to \Cref{rip_theorem}. Specifically, with probability at least $1-\delta$, $E$ satisfies \Cref{asmp:pert} with $\eta$, $\gamma = \sqrt{\min(n_1,n_2)/r}$,
  \begin{align*}
    \beta &= 10\eta\norm{Y} + \frac{c_3}{\kappa^2} \le \frac{C_1(10c_2+c_3)}{\alpha\kappa^2},\\
    \mu &= \frac{2\xi}{\norm{Y}} \le \frac{2C_1 c_4}{\alpha\kappa^2},\\
    \nu &= 20\eta\norm{Y}+\frac{2\xi}{\norm{Y}} \le \frac{C_1(100c_2+10c_4)}{\alpha\kappa^2 T\norm{Y}}.
  \end{align*}
  Fixing $\theta = 1/12$, we can hence pick $c_1,c_2,c_3,c_4 > 0$ small enough such that the assumptions of \Cref{master_theorem} hold. The rest of the proof is identical to that in the proof of \Cref{rip_theorem}.
\end{proof}

\section{Reduction from perturbed gradient descent to perturbed gradient flow}\label{s:discrete}
The following proposition bounds the perturbation corresponding to the discretization error of learning rates $\eta > 0$. It is used to prove \Cref{rip_reduction,sgd_reduction}.

Recall $R_t = \hat Y - \frac{1}{2}\left(W_tW_t\T-JW_tW_t\T J\right)$, where $J = \begin{pmatrix}I_{n_1} & 0\\ 0 & -I_{n_2}\end{pmatrix}$ and $\hat Y = \begin{pmatrix}0 & Y\\Y\T & 0\end{pmatrix}$.

\begin{proposition}\label{prop:discrete}
  Assume $\eta \le 1/(12\norm{Y})$, $\norm{W_{k\eta}} \le \frac{3}{2}\sqrt{\norm{Y}}$ and $\norm{\tilde E_k} \le \norm{Y}$. Let $\dot W_t = (R_t + E_t) W_t$, where $E_t = \frac{1}{\eta}\log\left(I+\eta R_{k\eta}+\eta \tilde E_k\right) - R_t$, $t \ge 0$ and $k = \left\lfloor \frac{t}{\eta} \right\rfloor$. Then
  \begin{align}\label{eq:W_kp1}
    W_{(k+1)\eta} = (I+\eta R_{k\eta}+\eta \tilde E_k) W_{k\eta}.
  \end{align}

  Furthermore,
  \begin{align*}
    \norm{E_t} &\le 10\eta\norm{Y} \norm{R_{k\eta}} + 2\norm{\tilde E_k},\\
    \norm{J E_t + E_t\T J} &\le 20\eta\norm{Y}^2+\norm{J\tilde E_k + \tilde E_k\T J}.
  \end{align*}
\end{proposition}
\begin{proof}
  Define $\tilde R_{k\eta} = R_{k\eta} + \tilde E_k$. Note that $\norm{W_{k\eta}} \le \frac{3}{2}\sqrt{\norm{Y}}$ implies $\norm{R_{k\eta}} \le \frac{17}{8}\norm{Y}$ by \Cref{tiny_R_bound}. Because $\eta \le \frac{1}{12\norm{Y}}$, we have $\eta\norm{R_{k\eta}} \le \frac{1}{3}$. Similarly, $\norm{\tilde E_k} \le \norm{Y}$ implies $\eta\norm{\tilde E_k} \le \frac{1}{12}$. Together, these imply $\eta\norm{\tilde R_{k\eta}} \le \frac{2}{3}$.

  Note that the matrix logarithm $\log\left(I+\eta \tilde R_{k\eta}\right)$ is well-defined since $\eta \norm{\tilde R_{k\eta}} \le \frac{2}{3} < 1$ by the triangle inequality.

  Simplify $\dot W_t = (R_t + E_t) W_t = \frac{1}{\eta}\log\left(I+\eta \tilde R_{k\eta}\right) W_t$, which has the analytical solution $W_t = \left(I+\eta \tilde R_{k\eta}\right)^{\frac{t}{\eta}-k} W_{k\eta}$ for $t \in [k\eta,(k+1)\eta]$. Inserting $t = (k+1)\eta$ gives \eqref{eq:W_kp1}.

  Next, by the definition $E_t = \frac{1}{\eta}\log\left(I+\eta \tilde R_{k\eta}\right) - R_t$, and the triangle inequality, we have
  \begin{align}\label{E_triangle}
    \norm{E_t} &= \norm{\frac{1}{\eta}\log(I+\eta\tilde R_{k\eta})-R_t}\\
    &\le \norm{\frac{1}{\eta}\log\left(I+\eta\tilde R_{k\eta}\right)-\tilde R_{k\eta}}+\norm{\tilde R_{k\eta}-R_{k\eta}}+\norm{R_{k\eta}-R_t}.
  \end{align}
  The second term is equal to $\norm{\tilde E_k}$ by the definition of $\tilde R_{k\eta}$.

  Next, define $S_t \coloneqq (I+\eta\tilde R_{k\eta})^{\frac{t}{\eta}-k}-I$. Then $W_t = (I+S_t)W_{k\eta}$ by construction and $\norm{S_t} \le \eta\norm{\tilde R_{k\eta}} \le \frac{2}{3}$. Now,
  \begin{align*}
    \norm{R_{k\eta}-R_t} &= \frac{1}{2}\norm{W_{k\eta}W_{k\eta}\T-W_tW_t\T - J(W_{k\eta}W_{k\eta}\T-W_tW_t\T)J}\\
                         &\le \norm{W_{k\eta}W_{k\eta}\T-W_tW_t\T} = \norm{W_{k\eta}W_{k\eta}\T-(I+S_t)W_{k\eta}W_{k\eta}\T(I+S_t)}\\
                         &\le \norm{W_{k\eta}}^2\left(2\norm{S_t}+\norm{S_t}^2\right) \le \frac{8}{3}\eta\norm{\tilde R_{k\eta}}\norm{W_{k\eta}}^2.
  \end{align*}
  The first equality expands the definition of $R_t$ and cancels the $\hat Y$ terms. The next inequality uses \Cref{tiny_XmJXJ}. Then, $W_t = (I+S_t)W_{k\eta}$ is used. We then use the triangle inequality, and bound products by products of norms. Finally, use $\norm{S_t} \le \eta\norm{\tilde R_{k\eta}} \le \frac{2}{3}$.

  Since $\eta\norm{\tilde R_{k\eta}} \le \frac{2}{3}$, we may use a matrix generalization of the inequality $|\log(1+x)-x| \le x^2$ for $x \in [-2/3,2/3]$. Specifically, using the power series for matrix logarithms, we have for any matrix $B$ with $\norm{B} \le \frac{2}{3}$, that 
  \begin{align*}
    \norm{B-\log(I+B)} &= \norm{B-\sum_{i=1}^\infty (-1)^{i+1} \frac{1}{i} B^i} \le \norm{\sum_{i=2}^\infty (-1)^{i+1} \frac{1}{i} B^i}\\
                       &\le \sum_{i=2}^\infty \frac{1}{i} \norm{B}^i \le \norm{B}^2 \sum_{i=2}^\infty \frac{1}{i} \left(\frac{2}{3}\right)^{i-2}\\
                       &= \frac{3\log(27)-6}{4}\norm{B}^2 \le \norm{B}^2.
  \end{align*}

  Hence, we obtain the following bound on the first term of \eqref{E_triangle}  
  \[ \norm{\frac{1}{\eta}\log(I+\eta\tilde R_{k\eta})-\tilde R_{k\eta}} \le \eta\norm{\tilde R_{k\eta}}^2.\]

  Combining the preceeding bounds on the three terms of \eqref{E_triangle}, we get
  \begin{align*}
    \norm{E_t} &\le \eta\norm{\tilde R_{k\eta}}^2 + \norm{\tilde E_k} + \frac{8}{3}\eta\norm{\tilde R_{k\eta}}\norm{W_{k\eta}}^2.
  \end{align*}

  Recall the bounds from the top of this proof, $\norm{W_{k\eta}} \le \frac{3}{2}\sqrt{\norm{Y}}$, $\norm{R_{k\eta}} \le \frac{17}{8}\norm{Y}$, $\eta \le \frac{1}{12\norm{Y}}$, $\norm{\tilde E_k} \le \norm{Y}$. The triangle inequality gives $\norm{\tilde R_{k\eta}} \le \frac{25}{8}\norm{Y}$ by the definition $\tilde R_{k\eta} = R_{k\eta} + \tilde E_k$.

  These let us bound
  \begin{align*}
    \norm{E_t} &\le \eta\norm{\tilde R_{k\eta}}^2 + \norm{\tilde E_k} + \frac{8}{3}\eta\norm{\tilde R_{k\eta}}\norm{W_{k\eta}}^2\\
               &\le \left(\frac{25}{8}+6\right)\eta\norm{Y}\norm{\tilde R_{k\eta}} + \norm{\tilde E_k}\\
               &\le 10\eta\norm{Y}\norm{R_{k\eta}} + 2\norm{\tilde E_k}.
  \end{align*}

  Next, we bound $\norm{JE_t+E_t\T J}$ similarly to \eqref{E_triangle}. However, note that $\norm{J X + X\T J} \le 2\norm{X}$ since $\norm{J} = 1$, and that $J R_t + R_t\T J = J R_{k\eta} + R_{k\eta}\T J = 0$. This simplifies
  \begin{align*}
    \norm{JE_t + E_t\T J} &\le 2\norm{\frac{1}{\eta}\log\left(I+\eta\tilde R_{k\eta}\right)-\tilde R_{k\eta}}+\norm{J\tilde E_k + \tilde E_k\T J}\\
                          &\le 2\eta\norm{\tilde R_{k\eta}}^2+\norm{J\tilde E_k + \tilde E_k\T J}\\
                          &\le 20\eta\norm{Y}^2+\norm{J\tilde E_k + \tilde E_k\T J}.
  \end{align*}
\end{proof}

\section{Proofs for gradient descent with RIP measurements}\label{s:rip_proof_appendix}
The following bound is used in the proof of \Cref{rip_reduction} to bound the error stemming from an imperfect measurement operator $\A$. Note that the error is zero for a perfect observation operator $\rho = 0$.

\begin{proposition}\label{EA_bound}
  Let $\bar R = Y-U V\T$, $R = \begin{pmatrix}0 & \bar R\\\bar R\T & 0\end{pmatrix}$, $W = \begin{pmatrix}U\\V\end{pmatrix}$ and $E^\A = (\A^*\A)(\bar R) - \bar R$. Assume the linear operator $\A \colon \R^{n_1\times n_2} \to \R^m$ satisfies RIP (\Cref{def:rip} from \Cref{s:rip_result}) with rank $r+1$ and constant $\rho$. Then
  \[\norm{E^\A} \le 2\sqrt{r}\rho\left(\norm{R} + \left(\frac{\min(n_1,n_2)}{2r}+1\right)\sigma_{r+1}^2(W)\right)\]
\end{proposition}
\begin{proof}
  Let $Q \in \R^{h\times h}$ be a projection such that $\rank(Q) = r$ and $\norm{WQ_\perp} = \sigma_{r+1}(W)$ where $Q_\perp \coloneqq I-Q$. Recall $\rank(Y) = r$ and note $\rank(UQV\T) \le \rank(Q) = r$, hence $\rank(Y-UQV\T) \le 2r$. 

  By \Cref{lem:RIP}, we have $\norm{(\A^*\A)(X)-X} \le \rho\norm{X}_F$ for $X \in \R^{n_1\times n_2}$ with $\rank(X) \le r$. For a matrix $X \in \R^{n_1\times n_2}$ with $\rank(X) > r$, we may split it into a sum $X = \sum_{i=1}^{\left\lceil \frac{\rank(X)}{r} \right\rceil} X_i$ where addend $X_i$ satisfies $\rank(X_i) \le r$ and $\norm{X_i} \le \norm{X}$. Hence, 
  \begin{align*}
    \norm{(\A^*\A)(X)-X} &\le \sum_{i=1}^{\left\lceil \frac{\rank(X)}{r} \right\rceil} \norm{(\A^*\A)(X_i)-X_i}\\
                         &\le \rho \sum_{i=1}^{\left\lceil \frac{\rank(X)}{r} \right\rceil} \norm{X_i}_F \le \rho \left\lceil \frac{\rank(X)}{r} \right\rceil \sqrt{r}\norm{X}.
  \end{align*}
  The second inequality uses \Cref{lem:RIP}. The third uses $\norm{X_i}_F \le \sqrt{\rank(X_i)}\norm{X_i} \le \sqrt r\norm{X}$.

  Inserting $Y-UQV\T$ and $UQ_\perp V\T$ for $X$, we get 
  \begin{align*}
    \norm{(\A^*\A)(Y-UQV\T)-(Y-UQV\T)} &\le 2\sqrt r\rho\norm{Y-UQV\T},\\
    \norm{(\A^*\A)(UQ_\perp V\T)-UQ_\perp V\T} &\le \sqrt{r}\rho\left\lceil\frac{\min(n_1,n_2)-r}{r}\right\rceil \norm{UQ_\perp V\T}.
  \end{align*}

  We may then bound
  \begin{align*}
    \norm{E_\A} &= \norm{(\A^*\A)(Y-UV\T)-(Y-UV\T)}\\
                &\le \norm{(\A^*\A)(Y-UQV\T)-(Y-UQV\T)} + \norm{(\A^*\A)(UQ_\perp V\T)-UQ_\perp V\T}\\
                &\le 2\sqrt r\rho\norm{Y-UQV\T} + \sqrt{r}\rho\frac{\min(n_1,n_2)}{r} \norm{UQ_\perp V\T}\\
                &\le \sqrt r\rho\left(2\norm{Y-UV\T} + \left(2+\frac{\min(n_1,n_2)}{r}\right) \norm{UQ_\perp V\T}\right)\\
                &\le 2\sqrt r\rho\left(\norm{Y-UV\T} + \left(\frac{\min(n_1,n_2)}{2r}+1\right)\sigma_{r+1}^2(W)\right).
  \end{align*}
  The first equality uses the definition of $E_\A$. Then the triangle inequality is used. Next, we apply the bounds from above. For the third inequality, we pull out a term $\norm{UQ_\perp V\T}$ using the triangle inequality. Finally, we bound $\norm{UQ_\perp V\T} \le \norm{WQ_\perp}^2 = \sigma_{r+1}^2(W)$.
\end{proof}

A simple proof of the following useful lemma is included for completeness. However, very similar results can be found for example in \citet[Lemma 7.3]{rip_small_init}.
\begin{lemma}\label{lem:RIP}
  Assume the linear operator $\A \colon \R^{n_1\times n_2} \to \R^m$ satisfies RIP (\Cref{def:rip} from \Cref{s:rip_result}) with rank $r+1$ and constant $\rho$. Then all matrices $X \in \R^{n_1\times n_2}$ with $\rank(X) \le r$ satisfy
  \begin{align*}
    \norm{(\A^*\A)(X)-X} \le \rho\norm{X}_F.
  \end{align*}
\end{lemma}
\begin{proof}
  Assume without loss of generality that $\norm{X}_F = 1$. Let $u,v$ be a top singular pair of $(\A^*\A)(X)-X$. Then
  \begin{align*}
    \norm{(\A^*\A)(X)-X} &= \langle uv\T, (\A^*\A)(X)-X\rangle\\
                         &= \langle \A(uv\T), \A(X)\rangle-\langle uv\T,X\rangle\\
                         &= \frac{1}{4}\left(\norm{\A(uv\T+X)}_F^2-\norm{\A(uv\T-X)}_F^2\right)-\langle uv\T,X\rangle\\
                         &\le \frac{1}{4}\left((1+\rho)\norm{uv\T+X}_F^2-(1-\rho)\norm{uv\T-X}_F^2\right)-\langle uv\T,X\rangle\\
                         &= \frac{\rho}{4}\left(\norm{uv\T+X}_F^2+\norm{uv\T-X}_F^2\right)\\
                         &= \frac{\rho}{2}\left(\norm{uv\T}_F^2+\norm{X}_F^2\right) = \rho.
  \end{align*}
  The second equality uses the definition of the adjoint. The third equality uses the polarization identity. Next, use the RIP inequalities \eqref{eq:RIP} from \Cref{def:rip} in \Cref{s:rip_result}. The polarization identity then cancels the term $\langle uv\T,X\rangle$. Next, we use the parallelogram law. Finally, the identities $\norm{u} = \norm{v} = \norm{X}_F = 1$ are used.
\end{proof}

\section{Bounding measurement errors from random measurements}\label{s:sgd_proof_appendix}
This section is used to prove \Cref{EA_bound_sgd}, which replaces \Cref{EA_bound} in the stochastic gradient descent setting. The main difficulty will be proving \Cref{sgd_E_bound}, which essentially says that with high probability
\[\norm{E^\A} \lesssim \sqrt{\frac{n_1+n_2}{m}}\norm{Y-UV\T}_F.\]
We will combine that with the following bound on $\norm{Y-UV\T}_F$.

\begin{lemma}\label{RF_bound}
  Let $R = \begin{pmatrix}0 & Y-UV\T\\Y\T-VU\T & 0\end{pmatrix}$ and $W = \begin{pmatrix}U\\V\end{pmatrix}$. Then
  \begin{align*}
    \norm{Y-UV\T}_F \le \sqrt{r}\left(\sqrt 2\norm{R}+\sqrt{\frac{3\min(n_1,n_2)}{r}}\sigma_{r+1}^2(W)\right).
  \end{align*}
\end{lemma}
\begin{proof}
  Define $Q$ as in the proof of \Cref{EA_bound}, to get the same consequences. That is, we select the projection $Q \in \R^{h\times h}$ such that $\rank(Q) = r$, $WQ_\perp = \sigma_{r+1}(W)$ where $Q_\perp \coloneqq I-Q$. Also, $\rank(Y-UQV\T) \le 2r$ and $\rank(UQ_\perp V\T) \le \min(n_1,n_2)-r$.

  Hence, we have
  \begin{align*}
    \norm{Y-UV\T}_F &\le \norm{Y-UQV\T}_F+\norm{U Q_\perp V\T}_F\\
                    &\le \sqrt{2r}\norm{Y-UQV\T}+\sqrt{\min(n_1,n_2)-r}\norm{U Q_\perp V\T}\\
                    &\le \sqrt{2r}\norm{Y-UV\T}+(\sqrt{\min(n_1,n_2)-r}+\sqrt{2r})\norm{U Q_\perp V\T}\\
                    &\le \sqrt{2r}\norm{Y-UV\T}+\sqrt{3\min(n_1,n_2)}\norm{U Q_\perp V\T}\\
                    &\le \sqrt{2r}\norm{Y-UV\T}+\sqrt{3\min(n_1,n_2)}\sigma_{r+1}^2(W)\\
                    &\le \sqrt{r}\left(\sqrt 2\norm{R}+\sqrt{\frac{3\min(n_1,n_2)}{r}}\sigma_{r+1}^2(W)\right).
  \end{align*}
  The first inequality uses the triangle inequality. The second bound uses the bound $\norm{X}_F \le \sqrt{\rank(X)}\norm{X}$. Next, the triangle inequality is used again. Then, the Cauchy-Schwarz inequality for $(1,\sqrt 2)$ and $(\sqrt x,\sqrt y)$ gives $\sqrt x + \sqrt{2y} \le \sqrt{3(x+y)}$. The fifth inequality bounds $\norm{UQ_\perp V\T} \le \norm{WQ_\perp}^2 = \sigma_{r+1}^2(W)$. Finally, the identity $\norm{Y-UV\T} = \norm{R}$ from \Cref{tiny_dilation} is used.
\end{proof}

The rest of the section will be building up to proving \Cref{sgd_E_bound}, which will give a bound on $\norm{E^\A}$ used in \Cref{rip_reduction}, where $E^\A$ is of the form $E^\A = (\A^*\A)(X)-X$. Specifically, we will use it with $X = Y-UV\T$, but that is not important for the derivation. The term $(\A^*\A)(X)-X$ intuitively measures the distance between the actual measurements $(\A^*\A)(X)$, and perfect measurement $X$.

As assumed in \Cref{s:sgd_proof}, $\A \colon \R^{n_1\times n_2} \to \R^m$ has the following form
\begin{align*}
  [\A(X)]_i = \frac{1}{\sqrt m} \langle X, A_i\rangle,
\end{align*}
where for each $i = 1,\dots,m$, $A_i \in \R^{n_1\times n_2}$ has i.i.d. entries with distribution $\mathcal{N}(0,1)$.

This means $(\A^*\A)(X)-X = \frac{1}{m}\sum_{i=1}^m\left(\langle A_i, X\rangle A_i\right)-X$. The plan is to develop high probability bounds on $\norm{(\A^*\A)(X)-X}$ over the sampling of of $\{A_i\}_{i=1}^m$.

We will use the concept of sub-exponential variables and the sub-exponential norm from \citet[Definition 2.7.5]{vershynin2018high}.
\begin{definition}[Sub-exponential norm]
  A random variable $x$ is sub-exponential if it has finite sub-exponential norm. That is, if
  \begin{align*}
    \norm{x}_{\psi_1} \coloneqq \inf\{t > 0 : \E \exp(|x|/t) \le 2\} < \infty.
  \end{align*}

  For random matrices $X$ we write $\norm{X}_{\psi_1} \coloneqq \norm{(\norm{X})}_{\psi_1}$, and refer to this quantity as the sub-exponential norm of $X$.
\end{definition}

The sub-exponential norm is useful because the sub-exponential norm of $\langle A, X\rangle A-X$ is bounded. This essentially corresponds to the random measurement error from running stochastic gradient descent with a mini-batch size of one. We will then apply a matrix concentration tail bound for sub-exponential matrices to bound the error for larger mini-batch sizes.
\begin{proposition}\label{sub_exp_general}
  Fix an $n_1\times n_2$ matrix $X$ and let $A$ be an $n_1\times n_2$ random matrix with i.i.d. standard normal elements. Then the random matrix $Z = \langle A, X\rangle A-X$ is zero-mean and has sub-exponential norm at most $C\sqrt{n_1+n_2}\norm{X}_F$ for some universal constant $C$.
\end{proposition}
\begin{proof}[Proof of \Cref{sub_exp_general}]
  The zero-mean property $\E Z = 0$ follows from (the elements of) $A$ being normally distributed. Furthermore, note that \[\norm{Z} = \norm{\langle A, X\rangle A-X} \le \norm{\langle A, X\rangle A}+\norm{X} = |\langle A, X\rangle|\norm{A}+\norm{X}.\]
  The sub-exponential norm follows the triangle inequality \citep[Exercise 2.7.11, $\psi(x) = e^x-1$]{vershynin2018high}, and the sub-exponential norm of the constant $\norm{X}$ is $\norm{X}/\log(2)$, so it is enough to show that $|\langle A, X\rangle|\norm{A}$ has sub-exponential norm at most $C_1\sqrt{n_1+n_2}\norm{X}_F$. Since the sub-exponential norm of a product of sub-gaussian variables is less than the product of sub-gaussian norms of the terms \citep[Lemma 2.7.7]{vershynin2018high}, it is sufficient to show that $|\langle A, X\rangle|$ has sub-gaussian norm at most $C_2\norm{X}_F$, and that $\norm{A}$ has sub-gaussian norm at most $C_3\sqrt{n_1+n_2}$.

  Since $A$ is normally distributed, we have $\langle A, X\rangle \sim \mathcal{N}(0,\norm{X}_F^2)$. By \citet[Example 2.5.8 (a)]{vershynin2018high}, $\langle A, X\rangle$ has sub-gaussian norm at most $C_2\norm{X}_F$ for some universal constant $C_2$. Here the sub-gaussian norm of a random variable $x$ is defined as $\inf \{t > 0 \colon \E e^{x^2/t^2} \le 2\}$ \citep[Definition 2.5.6]{vershynin2018high}. It is clear from the definition that the absolute value $|x|$ has the same sub-gaussian norm as $x$. Hence $|\langle A, X\rangle|$ also has sub-gaussian norm at most $C_2\norm{X}_F$.

  By \citet[Corollary 7.3.3]{vershynin2018high}, we have $\mathbb{P}\{\norm{A} \ge \sqrt{n_1}+\sqrt{n_2}+t\} \le 2e^{-ct^2}$ for some universal constant $c$. By \citet[Proposition 2.5.2]{vershynin2018high}, this means $\norm{A}$ is sub-gaussian with sub-gaussian norm at most $C_3\sqrt{n_1+n_2}$ for some universal constant $C_3$.
\end{proof}

We will now derive a simple tail bound for sub-exponential matrices. It is a consequence of the following theorem from \citet[Theorem 3.6]{tropp2012user}:
\begin{theorem}\label{tropp_master}
  Consider a finite sequence $\{Z_i\}_{i=1}^m$ of independent, random, symmetric matrices. For all $t \in \R$, 
  \begin{align*}
    \mathbb{P}\left\{\lambda_1\left(\sum_{i=1}^m Z_i\right) \ge t\right\} \le \inf_{\theta > 0}\left\{e^{-\theta t}\cdot \mathrm{tr} \exp\left(\sum_{i=1}^m \log \E e^{\theta Z_i}\right)\right\}.
  \end{align*}
\end{theorem}

To use \Cref{tropp_master}, we need a bound on the MGF (Moment Generating Function) for sub-exponential matrices:
\begin{lemma}\label{sub_expo_mgf}
  Let $Z$ be a zero-mean, symmetric random matrix with sub-exponential norm at most $K$. Then for $0 \le \theta \le 1/K$, we have the following bound on the MGF:
  \begin{align*}
    \E \exp(\theta Z) \preccurlyeq \exp(K^2\theta^2) I.
  \end{align*}
\end{lemma}
\begin{proof}
  Without loss of generality, we may assume that $K = 1$. Then for $0 \le \theta \le 1$, we have

  \begin{align*}
    \lambda_1(\E \exp(\theta Z)) &\le \norm{\E \sum_{i=0}^\infty \frac{\theta^i}{i!} Z^i} = \norm{I+\E \sum_{i=2}^\infty \frac{\theta^i}{i!}Z^i}\\
    &\le 1+\E \sum_{i=2}^\infty \frac{\theta^i}{i!}\norm{Z^i} \le 1+\theta^2 \E\sum_{i=2}^\infty \frac{1}{i!}\norm{Z}^i\\
    &\le 1+\theta^2\left(-1+\E\sum_{i=0}^\infty \frac{1}{i!}\norm{Z}^i\right) = 1+\theta^2\left(-1+\E \exp(\norm{Z})\right)\\
    &\le 1+\theta^2\le \exp(\theta^2).
  \end{align*}
\end{proof}

We are now ready to state our general tail bound for sub-exponential matrices.
\begin{proposition}\label{sub_exp_bernstein}
  Let $\{Z_i\}^m_{i=1}$ be a sequence of independent, zero-mean, $n_1\times n_2$ random matrices with sub-exponential norms bounded by $K > 0$. Then for all $t \ge 0$,
  \begin{align*}
    \mathbb{P}\left\{\norm{\sum_{i=1}^m Z_i} \ge t\right\} \le (n_1+n_2) \exp\left(-\min\left(\frac{t}{2K}, \frac{t^2}{4K^2m}\right)\right).
  \end{align*}
\end{proposition}
\begin{proof}
  Consider the self-adjoint dilations $\{\hat Z_i\}_{i=1}^m$, where $\hat Z_i \coloneqq \begin{pmatrix}0 & Z_i\\Z_i\T & 0\end{pmatrix}$. It follows from properties of self-adjoint dilations that $\lambda_1(\hat Z_i) = \norm{Z_i}$. Hence, $\lambda_1(\hat Z_i)$ also has sub-exponential norm bounded by $K$. Moreover, $\hat Z_i$ is also zero-mean.

  Next, we apply \Cref{sub_expo_mgf}, which yields
  \begin{align*}
    \E \exp(\theta \hat Z_i) \preccurlyeq \exp(K^2\theta^2)I,
  \end{align*}
  for $0 \le \theta \le 1/K$.

  We use this to simplify the right hand side of \Cref{tropp_master} for the sequence $\{\hat Z_i\}_{i=1}^m$. This yields
  \begin{align*}
    \mathbb{P}\left\{\lambda_1\left(\sum_{i=1}^m \hat Z_i\right) \ge t\right\} &\le \inf_{0 < \theta}\left\{e^{-\theta t}\cdot \mathrm{tr} \exp\left(\sum_{i=1}^m \log \E e^{\theta \hat Z_i}\right)\right\}\\
                                                                               &\le (n_1+n_2) \inf_{0 < \theta \le 1/K}\exp(K^2m\theta^2 - \theta t)\\
                                                                               &= (n_1+n_2) \exp\left(-2m\mathcal{H}_1\left(\frac{t}{2mK}\right)\right)\\
                                                                               &\le (n_1+n_2) \exp\left(-\min\left(\frac{t}{2K}, \frac{t^2}{4Km^2}\right)\right).
  \end{align*}
  Here $\mathcal{H}_a(x) \coloneqq \begin{cases}x^2/2&\text{ for } |x| \le a,\\a |x|-a^2/2&\text{ otherwise, }\end{cases}\quad$ is the Huber loss function.

  Noting that $\sum_{i=1}^m \hat Z_i$ is a self-adjoint dilation, we have $\lambda_1\left(\sum_{i=1}^m \hat Z_i\right) = \norm{\sum_{i=1}^m Z_i}$, which concludes the proof.
\end{proof}

We finally apply the general tail bound \Cref{sub_exp_bernstein} to the measurement error in stochastic gradient descent for mini-batches of arbitrary size $m \ge 1$.
\begin{proposition}\label{sgd_E_bound}
  Fix an $n_1\times n_2$ matrix $X$. Let $S = \frac{1}{m}\sum_{i=1}^m\left(\langle A_i, X\rangle A_i\right)-X$ where $A_i \in \R^{n_1\times n_2}$ has i.i.d. entries with distribution $\mathcal{N}(0,1)$. Then, for $\xi \le c_1\sqrt m$, with probability at least $1-(n_1+n_2)e^{-c_2\xi^2}$, we have
  \begin{align*}
    \norm{S} \le \xi\sqrt{\frac{n_1+n_2}{m}}\norm{X}_F,
  \end{align*}
  where $c_1,c_2 > 0$ are universal constants.
\end{proposition}
\begin{proof}
  We may write $S = \frac{1}{m}\sum_{i=1}^m Z_i$ where $Z_i \coloneqq \langle A_i, X\rangle A_i-X$. \Cref{sub_exp_general} then says $\norm{Z_i}_{\phi_1} \le K \coloneqq C\sqrt{n_1+n_2}\norm{X}_F$ for some universal constant $C > 0$. Next, apply \Cref{sub_exp_bernstein} to the sum $m S$, which yields
  \begin{align*}
    \mathbb{P}\left\{\norm{mS} \ge t\right\} \le (n_1+n_2) \exp\left(-\min\left(\frac{t}{2K}, \frac{t^2}{4K^2m}\right)\right).
  \end{align*}
  Select $t = m \cdot \xi\sqrt{\frac{n_1+n_2}{m}}\norm{X}_F$. Then if $\xi \le 2C\sqrt m$, we have
  \begin{align*}
    \mathbb{P}\left\{\norm{S} \ge \xi\sqrt{\frac{n_1+n_2}{m}}\norm{X}_F\right\} &\le (n_1+n_2) \exp\left(-\min\left(\frac{\sqrt{m}\xi}{2C}, \frac{\xi^2}{4C^2}\right)\right)\\
                                                                                &\le (n_1+n_2) \exp\left(-\frac{\xi^2}{4C^2}\right).
  \end{align*}
\end{proof}

\begin{proposition}\label{EA_bound_sgd}
  Let $\bar R = Y-U V\T$, $R = \begin{pmatrix}0 & \bar R\\\bar R\T & 0\end{pmatrix}$, $W = \begin{pmatrix}U\\V\end{pmatrix}$ and
  \[E^\A = \frac{1}{m}\sum_{i=1}^m\left(\langle A_i, \bar R\rangle A_i\right)-\bar R,\] where $A_i \in \R^{n_1\times n_2}$ has i.i.d. entries with distribution $\mathcal{N}(0,1)$. Then, for $\xi \le c_1\sqrt m$, with probability at least $1-(n_1+n_2)e^{-c_2\xi^2}$, we have
  \[\norm{E^\A} \le \xi\sqrt{\frac{r(n_1+n_2)}{m}}\left(\sqrt 2\norm{R}+\sqrt{\frac{3\min(n_1,n_2)}{r}}\sigma_{r+1}^2(W)\right).\]
  The constants $c_1,c_2 > 0$ are universal.
\end{proposition}
\begin{proof}
  The result is a simple combination of \Cref{sgd_E_bound} with $X = \bar R$, and the bound \Cref{RF_bound}.
\end{proof}

\end{document}